%% file: main.tex
\documentclass{article}
\usepackage{iclr2026_conference,times}

\usepackage[T1]{fontenc}
\usepackage[utf8]{inputenc}

\usepackage{amsmath, amssymb}
\usepackage{mathtools}
\usepackage{amsthm}
\newtheoremstyle{tight}{4pt}{4pt}{}{0pt}{\bfseries}{.}{0.5em}{}
\theoremstyle{tight}
\newtheorem{definition}{Definition}

\newtheorem{theorem}{Theorem}
\newtheorem{proposition}[theorem]{Proposition}
\newtheorem{corollary}[theorem]{Corollary}
\usepackage{hyperref}
\usepackage{url}
\usepackage{graphicx}
\usepackage{microtype}

\usepackage{booktabs}
\usepackage{enumitem}

\usepackage{algorithm}
\usepackage[noend]{algpseudocode}
\algrenewcommand\algorithmicrequire{\textbf{Input:}}
\algrenewcommand\algorithmicensure{\textbf{Output:}}
\algrenewcommand\alglinenumber[1]{\footnotesize#1:}

\usepackage{tikz}
\usetikzlibrary{
  calc,
  arrows.meta,
  decorations.pathmorphing,
  decorations.markings,
  patterns,
  positioning,
  shapes.geometric,
  quotes,
  graphs,
  fit,
  backgrounds
}

\usepackage{xcolor}
\definecolor{cInk}{HTML}{222222}
\definecolor{cFlat}{HTML}{7A7A7A}
\definecolor{cInput}{HTML}{0072B2}
\definecolor{cAssur}{HTML}{009E73}
\definecolor{cTrajectory}{HTML}{CC79A7}
\definecolor{cSecondary}{HTML}{E69F00}
\definecolor{cWarning}{HTML}{D55E00}
\colorlet{cGround}{cFlat}
\colorlet{cDriver}{cInput}
\colorlet{cDyad}{cAssur}
\colorlet{cJoint}{cInk}
\colorlet{cCurve}{cTrajectory}
\colorlet{cBox}{cFlat!8}
\colorlet{cBoxTeal}{cAssur!9}
\colorlet{cClassII}{cDyad}
\colorlet{cClassIII}{cSecondary}
\hypersetup{
  colorlinks=true,
  linkcolor=cInput,
  citecolor=cAssur,
  urlcolor=cInput
}

\tikzset{
  link/.style      = {line width=4pt, line cap=round, line join=round},
  ground link/.style = {link, color=cGround!70},
  driver link/.style = {link, color=cDriver},
  dyad link/.style   = {link, color=cDyad},
  class3 link/.style = {link, color=cClassIII},
  joint/.style     = {circle, draw=cJoint, fill=white,
                       inner sep=0pt, minimum size=8pt, line width=1pt},
  fixed joint/.style = {rectangle, draw=cJoint, fill=cGround!20,
                        inner sep=0pt, minimum size=8pt, line width=1pt},
  interface joint/.style = {joint, fill=white},
  internal joint/.style = {joint, fill=cAssur, draw=cAssur},
  ground/.style    = {
    postaction={
      pattern=north east lines,
      pattern color=cGround!50
    }
  },
  group box/.style = {
    draw, rounded corners=8pt, dashed, thick,
    inner sep=8pt
  },
  node label/.style  = {font=\small\bfseries},
  link label/.style  = {font=\small\itshape, midway},
  annotation/.style  = {font=\footnotesize, text=black!70},
  arr/.style         = {-{Stealth[length=5pt]}, thick},
  graph node/.style  = {
    circle, draw=cJoint, fill=white,
    minimum size=24pt, font=\small,
    line width=1pt
  },
  scc box/.style = {
    draw, rounded corners=6pt, dashed,
    inner sep=6pt
  },
  rigid body/.style      = {fill=#1!15, draw=#1, rounded corners=5pt,
                            line width=0.8pt},
  rigid body/.default    = cDyad,
  prismatic joint/.style = {draw=cJoint, fill=white, rectangle,
                            minimum size=9pt, inner sep=0pt, line width=1pt},
  dag node/.style        = {draw=#1, fill=#1!12, rounded corners=6pt,
                            inner sep=6pt, font=\small, line width=1pt},
  dag node/.default      = cJoint,
  pipeline box/.style    = {draw, rounded corners=6pt, fill=cBox,
                            inner sep=7pt, font=\small, align=center},
}

\newcommand{\DOF}{\mathcal{F}}

\title{Efficient Learning of Mechanism Designs and Compositionality}
\author{
Jo\~{a}o Pedro Gandarela\textsuperscript{1,2} \quad
Thiago Rios\textsuperscript{3} \quad
Stefan Menzel\textsuperscript{3} \quad
Andr\'{e} Freitas\textsuperscript{1,4,5} \\
\textsuperscript{1}Idiap Research Institute, Switzerland \quad
\textsuperscript{2}\'{E}cole Polytechnique F\'{e}d\'{e}rale de Lausanne (EPFL), Switzerland \\
\textsuperscript{3}Honda Research Institute Europe, Germany \\
\textsuperscript{4}Department of Computer Science, University of Manchester, UK \\
\textsuperscript{5}National Biomarker Centre, CRUK-MI, University of Manchester, UK \\
\texttt{firstname.lastname@idiap.ch} \quad
\texttt{firstname.lastname@honda-ri.de}
}

\date{}

\iclrfinalcopy

\begin{document}
\maketitle

\input{sections/00_abstract}
\input{figures/hero_figure}

\input{sections/01_introduction}

\input{sections/02_background}
\input{sections/04_knowledge_base}
\input{sections/05_theory}
\input{sections/06_benchmarks}
\input{sections/03_related_work}
\input{sections/08_conclusion}

\bibliographystyle{iclr2026_conference}
\bibliography{references}

\appendix
\input{sections/llm_disclosure}
\input{appendices/A_background}
\input{appendices/H_benchmark_tables}

\input{appendices/J_theory}
\input{appendices/K_kb_census}

\end{document}

%% file: sections/00_abstract.tex
\begin{abstract}
\noindent
Learning mechanism behaviour requires a representation of how connected parts jointly constrain motion. We study whether exposing reusable constraint subassemblies improves prediction from limited data. Assur-KB represents planar mechanisms through irreducible modules, boundary interfaces, and solve dependencies while retaining source geometry. Our theoretical results characterize transfer through module interfaces and establish physical limits arising from missing interface information and accumulated sensitivity. To test the representation claim directly, the experimental design compares matched mechanism-curve pairs across training budgets within the same multilayer-perceptron family. This controlled comparison establishes the utility of the complete representation in the evaluated learning setting. Assur-derived features provide a geometric-mean $1.8\times$ advantage in sample efficiency over flattened adjacency and position inputs across budgets of 50-10,000 mechanisms. Separate experiments establish invariance under tested joint relabelings, higher valid-generation rates under dyadic composition, and a coarser structural index over 10.02 million mechanisms. In summary, the theory and experiments identify how physical constraint structure supports limited-data prediction and which interface conditions govern compositional reuse.
\end{abstract}

%% file: figures/hero_figure.tex
\begin{figure}[h]
  \centering
  \includegraphics[width=\linewidth]{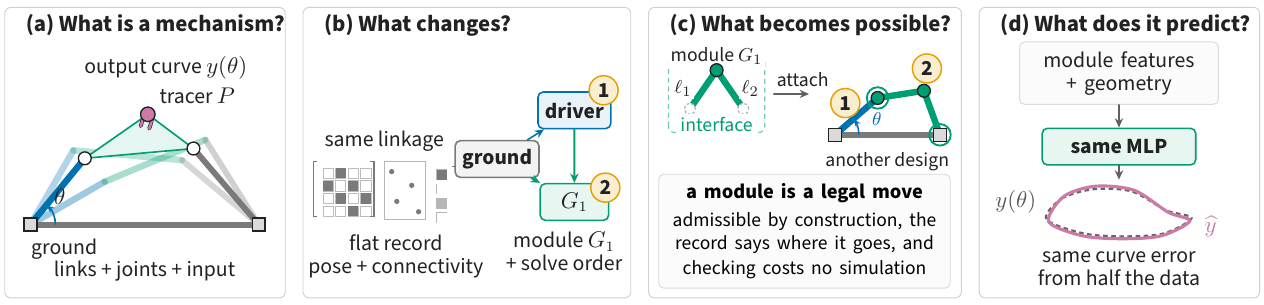}
  \caption{\textbf{Same mechanism and motion, different computational structure made explicit.} \textbf{(a)} A driver angle moves the links and a tracer point $P$ draws $y(\theta)$.  \textbf{(b)} The solve graph exposes an irreducible module $G_1$, its interface, and the order the local solves run in. \textbf{(c)} That module is then a move an agent can make, stored with the two bar lengths and the interface it needs, landing on any pair of already-solved joints, admissible by construction, and extending the solve
  order by one step. \textbf{(d)} The same record is what a predictor reads, so with the multilayer perceptron held fixed, module features and geometry reach a given error on the curve $y(\theta)$ from close to half the training data (\S\ref{sec:exp-regression}).}
  \label{fig:hero-solve-graph}
\end{figure}

%% file: sections/01_introduction.tex
\section{Introduction}
\label{sec:introduction}

Learning the behaviour of physical systems raises a fundamental question about representation: which parts of a system provide useful units for learning and reuse? Compositional representations offer a way to share knowledge across systems by making their components and relationships explicit. Their effectiveness, however, depends on identifying units that capture the structure relevant to the behaviour being predicted. This question connects representation learning with relational inductive biases and modular learning, where the organization of a problem shapes how knowledge can transfer across examples \citep{battaglia2018relational,alet2018modular}.

Mechanical linkages provide a concrete setting for studying this question. A linkage consists of connected rigid links and joints whose motion transforms a driving input into an output trajectory. Learning this relationship is difficult because the components satisfy coupled geometric constraints, where the movement of one joint depends on the configurations of other connected parts. Moreover, mechanisms with the same connectivity can produce different trajectories as their geometry or assembly configuration changes. Across mechanisms, the combinations of topology, geometry, and attachment patterns create a large space of behaviours that training examples can only partially cover.

The representational challenge is therefore to expose reusable structure at an appropriate level of granularity. Joint-level graphs describe elementary connections, but leave implicit the subassemblies whose constraints must be solved together and the dependencies between those subassemblies. A learner must recover this organization insofar as it is useful for prediction. Making it explicit could help the learner recognize recurring constraint patterns across mechanisms, reducing the amount of data needed to learn the relationship between structure and motion.

Planar mechanism theory provides a principled candidate for these representational units. Assur groups are irreducible subassemblies whose internal configurations can be determined, subject to geometric feasibility and assembly choices, once their boundary configurations are supplied \citep{ServatiusShaiWhiteley2010}. Each group thus defines a conditional constraint problem with explicit interfaces. Connecting these groups according to their dependencies produces a \textit{directed acyclic solve graph}, which specifies how local constraint solutions combine to determine the mechanism's configuration \citep{ShaiSljokaWhiteley2013}. Repeated across the driving input, this computation determines its motion.

We investigate whether exposing this \textit{compositional structure} improves learning of mechanism behaviour. We represent mechanisms through their Assur groups, interfaces, and solve dependencies, while retaining the underlying geometry and assembly information. Our hypothesis is that this organization supplies a useful inductive bias by making recurring constraint structure accessible across examples. We instantiate the representation at corpus scale in Assur-KB and evaluate its value for predicting output trajectories, alongside its usefulness for structural composition and indexing. To our knowledge, this is the first large-scale evaluation of Assur-based representations for learning mechanism behaviour from limited data.

Our experiments provide evidence for improved learning when data is limited. Using the same multilayer perceptron, Assur-derived features reach comparable curve-prediction error with roughly half the training examples required by a flattened adjacency-and-position representation over the evaluated low-data range (Figure~\ref{fig:hero-solve-graph}d). More broadly, this approach suggests that physical constraint structure can guide mechanism representation design, potentially reducing simulation requirements for predictive modelling and supporting modular reuse across related physical systems.

%% file: sections/02_background.tex
\section{From Physical Parts to Constraint Modules}
\label{sec:background}

\paragraph{The prediction problem.}
A planar linkage consists of rigid links connected by revolute joints (pins) or prismatic joints (sliders). Ground fixes part of the linkage, while a driver supplies an input, such as a crank angle $\theta$. Solving the geometric constraints determines the other joint positions and a tracer trajectory $y(\theta)$. We study learning this kinematic behaviour from mechanism examples. The regression experiment predicts the shape of the resulting curve after normalization and arc-length resampling.

A common source representation is $(A,x_0,n)$, consisting of joint adjacency, an initial configuration, and a fixed-joint mask. Together with the designated driver and tracer, it supplies the information needed for the source simulation. A link-incidence representation additionally records which joint sets belong to the same rigid body. Both forms preserve physical connectivity, but neither explicitly identifies which constraints form a local solve problem.

\paragraph{A concrete module.}
Consider an unknown joint $q$ connected by two bars of lengths $\ell_1,\ell_2$ to known joints $p_1,p_2$ (Figure~\ref{fig:dyad-peeling}a). Its position satisfies
\begin{equation}
 \|q-p_1\|^2=\ell_1^2,\qquad \|q-p_2\|^2=\ell_2^2.
 \label{eq:module-dyad}
\end{equation}
The bars must be jointly solved as for intersecting, non-tangent circles there are two solutions, while an assembly branch selects one. The pair of bars is therefore a useful computational unit, called a \emph{dyad}. Reusing it means reusing a conditional constraint relationship, where different boundary positions and lengths generally produce different motions.

\begin{definition}[Assur group]
\label{def:assur-group}
An Assur group is a subassembly that, with its boundary joints pinned, is generically rigid and independent, and has no proper subassembly with these properties \citep{ServatiusShaiWhiteley2010}.
\end{definition}

The boundary joints are the module's inputs, where its internal joints are the unknowns solved together. Irreducibility specifies the granularity when a group cannot be split into smaller sequentially solvable units under the same pinning. The dyad is the simplest example, while larger groups contain several coupled internal joints.

\paragraph{From modules to a solve graph.}
Draw an edge $G_i\to G_j$ whenever group $G_j$ uses a boundary joint determined by $G_i$. For a valid Assur decomposition this gives a directed acyclic graph (DAG), and a topological traversal gives a dependency-respecting solve order \citep{ShaiSljokaWhiteley2013}. This graph exposes the organization of the constraint computation while preserving the mechanism's geometry. It provides the compositional structure evaluated in this paper.

\paragraph{Structure and geometric feasibility.}
Structural mobility and successful motion are distinct concepts. For $L$ links including ground and $J_1$ one-degree-of-freedom lower pairs, the nominal Chebychev--Gr\"ubler--Kutzbach count is
\begin{equation}
 \DOF=3(L-1)-2J_1.
 \label{eq:mobility}
\end{equation}
This count assumes independent constraints, where special geometries or redundant constraints can change the actual mobility \citep{Mueller2009GenericMobility}. An Assur decomposition describes generic constraint structure. Assembly at particular lengths and continuation through a full input cycle are geometric questions: e.g. the dyad in Eq.~\ref{eq:module-dyad} has no real solution when its circles do not intersect. Appendix~\ref{app:background} develops these ideas and the \textit{local solves} in detail.

%% file: sections/04_knowledge_base.tex
\section{A Compositional Representation of Mechanisms}
\label{sec:knowledge-base}

Each record lifts one joint graph into a \textit{solve graph} whose nodes are irreducible groups, whose edges are the dependencies between them, and whose statuses record what was verified and what was left open. The construction has two stages, a proposal stage belonging to the source format and an acceptance stage both sources share, which is what makes a structural fingerprint mean the same thing in either (Figure~\ref{fig:kb-pipeline}). Appendix~\ref{app:kb-census} provides a descriptive account of the build.

Write a source mechanism as $R=(J,\mathcal{L},\tau,\Gamma,(u,v),x_0)$, with joints $J$, rigid links $\mathcal{L}\subseteq 2^{J}$ held as joint sets so that a binary bar is a link $\ell$ with $|\ell|=2$, a map $\tau:J\to\{\mathrm{R},\mathrm{P}\}$ marking each joint revolute or prismatic, grounded joints $\Gamma$, a driver turning about fixed pivot $u$ and carrying joint $v$, and an optional pose $x_0$. Joint adjacency is the case $\mathcal{L}=\{\{i,j\}:A_{ij}=1\}$ with $\tau\equiv\mathrm{R}$, and link incidence reads $\mathcal{L}$ off the rows of an incidence matrix $B$, one row per link, with $\tau$ from the slider specification, so one schema carries both corpora. The build turns $R$ into a sequence of groups $\mathcal{G}=(G_0,\dots,G_m)$ headed by the ground and driver $G_0$, where $G_k=(I_k,\partial_k,\mathcal{L}_k,T_k,\beta_k,s_k)$ for $k\ge1$ holds the joints group $k$ solves, its boundary, the links it owns, the tracer points it carries, its assembly branch, and its status.

\begin{definition}[Valid decomposition]
\label{def:valid-decomposition}
Let $I_0=\Gamma\cup\{v\}$ and let $K_k=I_0\cup\dots\cup I_{k-1}$ be the joints known before step $k$. The sequence $\mathcal{G}$ is a \emph{valid decomposition} of $R$ when the blocks $I_k\cup T_k$ partition $J$, the $\mathcal{L}_k$ partition $\mathcal{L}$, every boundary is already known, $\partial_k\subseteq K_k$, and every $G_k$ with $k\ge1$ is an Assur group on $(I_k,\partial_k,\mathcal{L}_k)$ in the sense of Definition~\ref{def:assur-group}.
\end{definition}

Ownership then induces the edges $E=\{(j,k):\partial_k\cap I_j\neq\emptyset\}$ between groups $j$ and $k$, which precedence makes acyclic, so every topological order $\sigma$ of $(\mathcal{G},E)$ is a solve order. Because the $\mathcal{L}_k$ partition $\mathcal{L}$ while $\Gamma$, $(u,v)$ and $x_0$ are carried through, $R$ is recoverable from its decomposition.

\begin{figure}[t]
 \centering
 \includegraphics[width=0.97\linewidth]{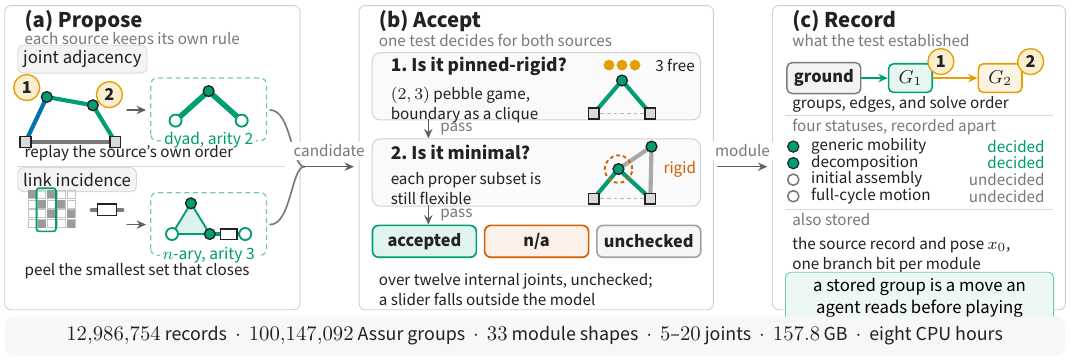}
 \caption{\textbf{Source formats(2x), acceptance test (1x) and record (1x).} \textbf{(a)} Each corpus proposes modules through the oracle it already ships, so joint adjacency yields the classical dyad and link incidence reaches $n$-ary bodies, prismatic pairs, and boundary arity above two. \textbf{(b)} Both proposals then meet the same test (Eq.~\ref{eq:accept}), a module counting as pinned-rigid exactly when three pebbles remain free and as minimal exactly when every proper subset of its internal joints is still flexible. \textbf{(c)} The record keeps the solve graph beside four statuses, so a structural pass never reads as motion.}
 \label{fig:kb-pipeline}
\end{figure}

\input{figures/build_algorithms}

\begin{figure}[t]
 \centering
 \includegraphics[width=0.9\linewidth]{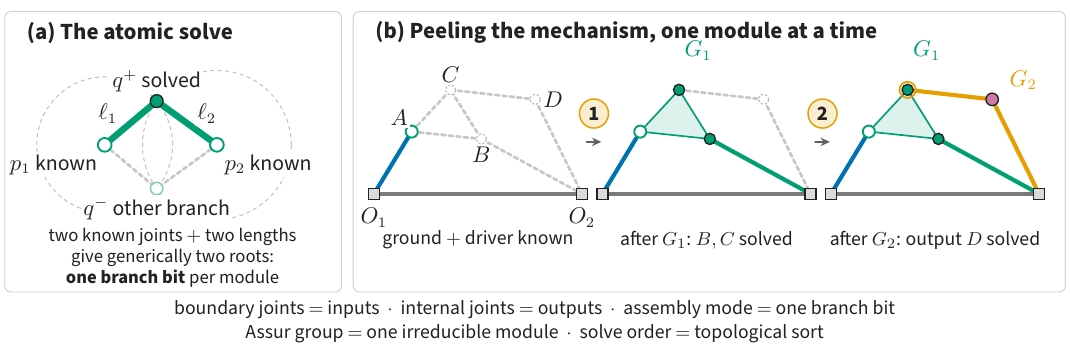}
 \caption{\textbf{Peeling proposes one module at a time against the part already solved.} \textbf{(a)} Two known joints and two lengths place the unknown joint at a circle--circle intersection with generically two real roots, so a mechanism with $m$ dyads stores $m$ branch bits. \textbf{(b)} Two steps on a six-bar, where the ternary link $ABC$ and the bar from $B$ to ground form one module because the ternary body enters once and fixes both $B$ and $C$, and the remaining dyad closes at $D$.}
 \label{fig:dyad-peeling}
\end{figure}

\subsection{Modules, interfaces, and dependencies}
\label{sec:extraction-method}

A mechanism corpus already entails a routine that orders the solves its own simulator performs, so the build reuses that routine as a proposal oracle and leaves the peeling rule with the source that defined it. \textsc{Peel} (Algorithm~\ref{alg:build}a) is the loop both sources run, growing a solved set from ground and the driven joint and reading each proposed set of links as internal, where it is unknown and, boundary, where it is known. Degree-one tracer points are held out, since a link carries them rigidly, and each is reattached to the group owning its link, a step only link incidence exercises. An oracle with no next set reports a failure, so no module is inferred from a neighbour count, though no record in the build takes that path. What does occur is a peel that resolves no new joint, as in the $153{,}124$ link-incidence records whose source incidence leaves every structural joint on the ground link, each keeping its curve and its ground group with no module below it.

\paragraph{Joint adjacency.} For LINKS~\citep{HeyraniNobari2022Links} (Appendix~\ref{app:links-dataset}) the oracle replays the corpus routine and returns, at each step, the two bars joining one unsolved joint to two already-known neighbours (Figure~\ref{fig:dyad-peeling}a). Every record of this form is ordered this way, and all of them yield the same module shape, the classical dyad.

\paragraph{Link incidence.} For the Purwar Lab corpus~\citep{purwarlab2024dataset} (Appendix~\ref{app:purwarlab-dataset}), where each rigid link retains its complete joint set, prismatic coupling is folded into the incidence first, which makes a slider an ordinary structural coupling and keeps its sliding joint from stalling the reduction at degree one. The oracle then returns the smallest set of two or more unsolved links whose Chebychev--Gr\"ubler--Kutzbach mobility against the solved part is non-positive, widening the set when none closes. This is the form that reaches $n$-ary bodies, prismatic pairs, and boundary arity above two.

\subsection{Structural tests and geometric information}
\label{sec:acceptance}

A proposal is a candidate, and \textsc{Accept} (Algorithm~\ref{alg:build}b) decides what it may be recorded as. Pinned rigidity is decided with the $(2,3)$ \textit{pebble game}~\citep{JacobsHendrickson1997,10.1115/DETC2011-48340}, exact for generic bar-joint frameworks and near-linear in the number of constraints (in essence, two freedom tokens per joint, one token claimed by each independent bar, and the module is rigid once only the plane's three are left; Appendix~\ref{app:pebble-game} describes the game from scratch). The boundary enters as a complete graph on the pinned joints, which keeps the plane's three residual freedoms explicit, and each link enters as one clique-expanded rigid body reduced to its independent constraints, so an $n$-ary link contributes once. Writing $p$ for the free pebbles that remain on module $k$ and $\mathcal{L}_k|_{I'}$ for the links meeting $I'$ and contained in $I'\cup\partial_k$,
\begin{equation}
  \mathsf{Rigid}(I_k,\partial_k,\mathcal{L}_k) \iff p=3,
  \qquad
  \mathsf{Minimal} \iff \neg\,\mathsf{Rigid}(I',\partial_k,\mathcal{L}_k|_{I'})
  \ \text{ for every } \emptyset\neq I'\subsetneq I_k ,
  \label{eq:accept}
\end{equation}
and the module is accepted as Assur when both hold, with the surplus $p-3$ stored as its residual freedoms.

Neither conjunct is spare, and a mobility count supplies neither, because the pinned count $|\mathcal{L}_k|=2|I_k|$ is necessary for generic isostaticity and blind to minimality, since two stacked dyads satisfy it, while the first dyad alone is already pinned-rigid. Acceptance therefore rests on the pebble game. Minimality is exponential in $|I_k|$, which is affordable because Assur groups are small, and beyond twelve internal joints it is recorded as \emph{unchecked}; no module in the build reaches that size. A module touching a prismatic pair falls outside the model, since a slider supplies no length for a bar-joint test to read.

\subsection{What the record stores}
\label{sec:reconstructability}

Alongside $R$ and the groups, each record names the source chunk it came from. Each physical link is given to the module resolving its latest joint, which keeps a ternary or quaternary body distinguishable from several binary bars, and each dyad stores the branch $\beta_k$ its source pose realizes, since a mechanism with $m$ dyads has $2^m$ assembly modes and one of them reproduces $x_0$, so the stored bit replays a record without a search. Modules are keyed by boundary arity $|\partial_k|$, which agrees with classical Assur order only under a one-to-one boundary-pair convention, so the schema reports arity and leaves classical order uninferred. The status $s_k$ keeps the four outcomes of Figure~\ref{fig:kb-pipeline}c apart, each passed, failed, or not evaluated.

Records are built in independent index partitions, where geometry is defined per record, so where a source supplies no coordinate array the design frame is recovered and accepted only once the traced point maps through it onto the stored curve, and $94.6\%$ of those records end up with a validated frame. Every module the model covers passes the structural test, $286{,}467$ fall outside it and are stored as undecided, and initial assembly and full-cycle motion are undecided build-wide, since a decomposition never sees the geometry that would settle them (Appendix~\ref{app:kb-census}).

\subsection{Using the representation for learning}
\label{sec:reconstructability}
\label{sec:agent-interface}

The learning experiment converts the decomposition into a fixed feature vector containing summaries of the modules, their interface structure, the dependency DAG, and available branch and geometric information. A three-layer MLP predicts the normalized output curve from this vector. The baseline uses the same network with flattened adjacency and position inputs. A fixed-width map is appropriate because both inputs are fixed-length and the target is resampled to a fixed number of points. Using the same network also avoids introducing relational biases from graph networks or transformers, which would confound the effect of the representation. The experiment therefore isolates the value of exposing constraint structure in the input.

The same representation supports two structural operations. A dyad can be attached to a compatible pair of existing boundary joints to construct a new candidate, after which its geometry is evaluated. A coarse fingerprint of the module counts and dependency graph can retrieve mechanisms with related constraint structure. These uses motivate the sampling and indexing experiments, while curve prediction remains the central learning test.

The combined build contains 12,986,754 mechanism records. Indexing is evaluated on 10,020,000 records, and the learning study uses 88,910 matched mechanism-curve pairs. 



%% file: figures/build_algorithms.tex
\begin{algorithm}[t]
\caption{\textbf{(a)} The peeling loop, shared by both sources and specialized
only through its proposal oracle $\Pi$, which returns a set $S$ of links or
$\bot$. \textbf{(b)} The test scoring every proposed module, where $F$ is the
pinned frame the pebble game runs on and $r$ the rigidity outcome.}
\label{alg:build}
\scriptsize\setlength{\baselineskip}{9pt}
\begin{minipage}[t]{0.455\linewidth}
\begin{algorithmic}[1]
\Procedure{Peel}{$R$, $\Pi$}
  \State $K\gets I_0=\Gamma\cup\{v\}$;\; $k\gets0$
  \State $U\gets\{j\in J:\deg j\ge2\}\setminus K$
         \Comment{tracers held out}
  \While{$U\neq\emptyset$}
    \State $S\gets\Pi(K,U,\mathcal{L})$;\;
           \textbf{if} $S=\bot$ \textbf{then fail}
    \State $k\gets k+1$;\; $\mathcal{L}_k\gets S$
    \State $I_k\gets\textstyle\bigcup S\setminus K$;\;
           $\partial_k\gets\textstyle\bigcup S\cap K$
    \State $\beta_k\gets$ branch of $G_k$ realized by $x_0$
    \State $K\gets K\cup I_k$;\; $U\gets U\setminus I_k$
  \EndWhile
  \State carry tracers $T_k$;\; give each $\ell$ to its latest $k$
  \State $s_k\gets\textsc{Accept}(I_k,\partial_k,\mathcal{L}_k)$ for every $k$
  \State \Return $\mathcal{G},\,E=\{(j,k):\partial_k\cap I_j\neq\emptyset\},\,\sigma$
\EndProcedure
\end{algorithmic}
\end{minipage}\hfill
\begin{minipage}[t]{0.515\linewidth}
\begin{algorithmic}[1]
\Procedure{Accept}{$I_k,\partial_k,\mathcal{L}_k$}
  \State \textbf{if} $\tau(j)=\mathrm{P}$ for some $j\in I_k\cup\partial_k$
         \textbf{then return} \textsc{n/a}
  \State $F\gets$ complete graph on $\partial_k$ \Comment{ground}
  \State $F\gets F\cup\{$clique on $\ell\cap(I_k\cup\partial_k)\}_{\ell\in\mathcal{L}_k}$
  \State $p\gets$ free pebbles of the $(2,3)$ game on $F$
  \State $r\gets[\,p=3\,]$ \Comment{Eq.~\ref{eq:accept}, left}
  \State \textbf{if} $|I_k|>12$ \textbf{then return} $(r,\text{unchecked},p\!-\!3)$
  \For{$\emptyset\neq I'\subsetneq I_k$}
    \State \textbf{if} $\mathsf{Rigid}(I',\partial_k,\mathcal{L}_k|_{I'})$
           \textbf{then return} $(r,\textsc{false},p\!-\!3)$
  \EndFor
  \State \Return $(r,\textsc{true},p-3)$
\EndProcedure
\end{algorithmic}
\end{minipage}
\end{algorithm}

%% file: sections/05_theory.tex
\section{When Constraint Modules Support Prediction}
\label{sec:theory}

An Assur group exposes a recurring conditional problem: given its boundary configuration, geometry, and assembly branch, determine its internal configuration. Reusing a learned solution in a new mechanism depends on three questions: are the local inputs familiar, how do local errors propagate, and which distinctions must the representation retain? Appendix~\ref{app:theory} develops these questions through bounds and explicit mechanism constructions.

\paragraph{Local reuse and global error.}
A new assembly can present familiar local prediction problems. Consider conditional predictors shared by module type $t$ and composed along a solve DAG. Let $\epsilon_t$ denote their population RMS error under training input distribution $\mu_t$, and $\nu_v$ the true input distribution at occurrence $v$ in the new assembly. Under the domain and Lipschitz assumptions of Theorem~\ref{thm:interface-transport},
\begin{equation}
 \|\widehat y-y\|_{L^2}
 \leq \sum_v w_v\!\left[\epsilon_{t(v)}+
 K_{t(v)}W_2(\nu_v,\mu_{t(v)})\right].
 \label{eq:main-transfer}
\end{equation}
The bracket combines error on familiar inputs with the cost of changing local conditions: $W_2$ measures input-distribution displacement and $K_t$ bounds residual variation. The weight $w_v$ measures how strongly that error can affect the output along downstream paths. Thus local coverage and controlled propagation support reuse across assemblies. This bound concerns composed predictors; our experiments test the representational benefit within a global MLP.

\paragraph{Why small modules can still be difficult.}
Successive modules can direct motion into directions that amplify it. Two unit-bar dyad chains with identical dependency graphs and local Jacobian singular values can consequently have endpoint sensitivities $1$ and $2^m$ to the same unit-speed crank (Theorem~\ref{thm:theory-dyad-sensitivity-separation}). At tangencies, a different effect appears: each solve can take a square root of the incoming displacement, reducing configuration regularity to exponent $2^{-m}$ after $m$ stages (Theorem~\ref{thm:theory-singular-depth}). These constructions explain why interface geometry matters alongside modular structure. The appendix connects trajectory-error bounds to normalized curve readouts through explicit normalization and sampling conditions.

\paragraph{What the representation must preserve.}
Even perfect local prediction cannot recover a distinction lost in its inputs. Exchanging which boundary port receives each dyad length changes output shape, including after 128-point arc-length sampling (Propositions~\ref{prop:port-information}--\ref{prop:port-information-sampling}). More generally, continuous features need enough dimensions to retain independently observable variations (Theorem~\ref{thm:observable-dimension}). Finally, Assur component size and undirected graph width do not bound one another (Theorem~\ref{thm:assur-treewidth-separation}): conditional solve structure provides a distinct description of complexity. Together, these results identify what useful modular representations expose: complete interfaces, recurring local problems, and their influence on the output.

%% file: sections/06_benchmarks.tex
\section{Experimental Evaluation}
\label{sec:benchmarks}


We first test the central hypothesis, i.e. \textit{does making constraint structure explicit improve prediction from limited data?} Three further experiments separate that benefit from the representation's other properties, with the full tables in Appendix~\ref{app:benchmark-tables}.

\subsection{Learning output trajectories with fewer examples}
\label{sec:exp-regression}

We test whether features organized by constraint structure help a learner where data is scarce, on the 88,910 matched mechanism-curve pairs spanning $N=8,\ldots,20$, none of which failed to decompose. Both representations feed the same three-layer MLP. The target is the output joint's normalized coupler curve, arc-length resampled to 128 points and scored by symmetric Chamfer distance, because a scalar summary would let a model match perimeter without matching shape. Ten joint-count-stratified folds give paired evaluation units, unconfounding mechanism size from fold.

The gain reads most directly on the data axis, where at 500 training mechanisms the Assur model reaches a Chamfer error of $0.1276$ that the adjacency baseline does not reach until roughly 980 examples, and that ratio stays between $1.6\times$ and $2.0\times$ for every budget from 50 through 10,000, with a geometric mean of $1.8\times$ (Figure~\ref{fig:exp3-learning-curve}). On the error axis the same effect is an $11\%$ reduction at 50 examples. The one-sided Nadeau-Bengio-corrected paired test over the ten folds falls below $0.05$ at every size from 2,000 through 20,000, with $p$ between $0.0078$ and $7.4\times10^{-5}$.

The advantage then closes, since at 40,000 examples adjacency is lower by $0.0004$, inside the fold-to-fold spread, and at the full 72,013-example pool the reversal resolves with adjacency lower by $0.0014$ at $6.6\times10^{-6}$. The gain therefore belongs to the low-data regime and to the adjacency-plus-position baseline.

\begin{figure}[t]
  \centering
  \includegraphics[width=0.65\linewidth]{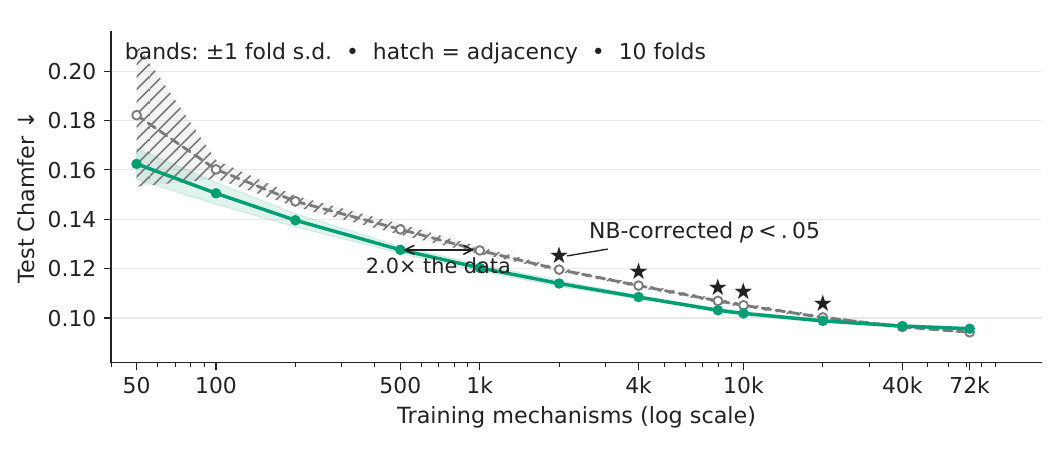}
  \caption{\textbf{Solve-graph features are worth about a factor of two in training data.} Mean symmetric Chamfer distance over ten stratified fold units, with the arrow reading the gain on the data axis and stars the sizes significant under the corrected paired test.}
  \label{fig:exp3-learning-curve}
\end{figure}

\subsection{Constructive sampling improves validity within the tested family}
\label{sec:exp-validity}

The \emph{compositional sampler} attaches every new joint as a two-boundary dyad. The \emph{matched adjacency sampler} draws its edges at random at the same expected edge count and forces the same motor edge, holding density fixed so that only the construction rule varies. A theoretical branch evaluates $N=4,\ldots,8$ at $\bar E_N=2N-3$ and an empirical branch $N=8,\ldots,12$ at statistics from LINKS itself. The oracle scores connectedness, $\DOF=1$ by Eq.~\ref{eq:mobility}, a solve path, and geometric simulation separately, and overall validity is their conjunction over 50,000 samples per cell.

\paragraph{Composition has higher overall validity at \emph{every} evaluated size.} In the theoretical branch its rate is 1.0 while the adjacency rate falls from 0.405 at $N=4$ to 0.035 at $N=8$. In the empirical branch composition falls from 0.284 at $N=8$ to 0.121 at $N=12$ and stays above adjacency sampling, whose rounded rates are at most 0.001. Over pairs where both are nonzero the geometric mean of the per-pair ratio is 54.5 (Figure~\ref{fig:structural-evidence}, left).

\paragraph{Composition reaches everything adjacency sampling reaches.} A 10,000-sample-per-cell coverage control rebuilds by composition every valid mechanism the matched adjacency sampler produced, succeeding in 100\% of cases at every branch and size, because the oracle's solve-path condition already establishes the dyadic stack composition assembles.

\subsection{Grouping mechanisms by coarse constraint structure}
\label{sec:exp-index}

Streaming 10,020,000 mechanisms, we compute for each a Weisfeiler-Lehman hash of the bare joint graph and a solve-graph fingerprint formed from group-order counts and the order-labelled condensation DAG. These yield 1,173,078 distinct hashes against 908,186 fingerprints, a $1.29\times$ coarsening averaging 11.03 mechanisms per fingerprint (Figure~\ref{fig:structural-evidence}b), because mechanisms a joint-indexed hash separates are pulled together when they decompose the same way.

\begin{figure}[t]
  \centering
  \includegraphics[width=0.65\linewidth]{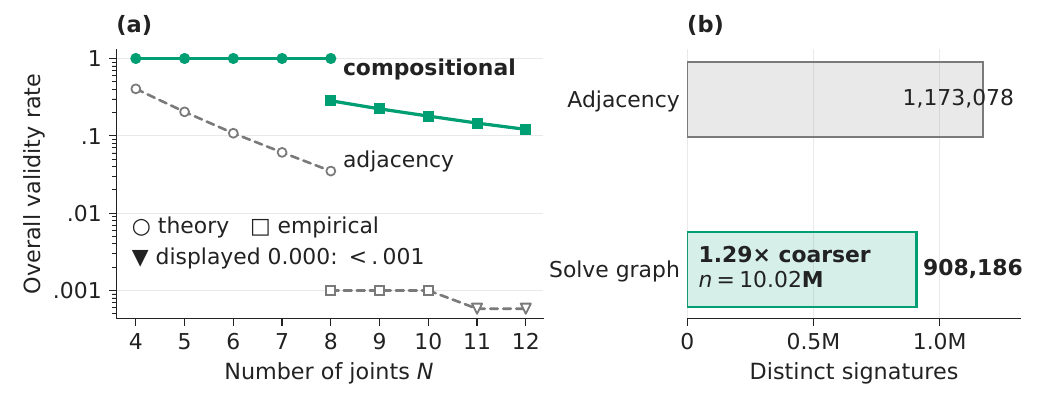}
  \caption{\textbf{Solve graphs expose useful structure before any learned model is introduced.} \textbf{(a)} Overall validity for compositional and matched adjacency sampling. \textbf{(b)} 10.02 million mechanisms yield fewer solve-graph fingerprints than bare adjacency hashes.}
  \label{fig:structural-evidence}
\end{figure}

\paragraph{Which feature blocks earn their place.}
Leave-one-out importance at $n=5{,}000$ over the same ten folds does not flatter the default vector, since the condensation-DAG block is the only clear contributor, the branch block the only one that costs anything, and the remaining six sit inside the fold-to-fold spread (Table~\ref{tab:exp3-ablation}). The two largest gains belong to a Grashof indicator and a transmission-angle summary, both describing motion feasibility, which the solve graph leaves to simulation.

\subsection{Relabelling joints leaves the solve graph identical and the baseline unrecognizable}
\label{sec:exp-invariance}

Permuting joint indices leaves the machine unchanged, so features that change under a permutation depend on the corpus's numbering convention. We generate 300 mechanisms at $N=4,\ldots,12$ with the compositional sampler, decompose each once, then re-decompose it under twenty random index permutations, carrying the motor pair and coupler joint through each.

\paragraph{Our representation is invariant under every relabelling and the baseline under almost none.} Across all 6,000 trials the fingerprint, the full Assur feature vector, and its structural sub-block match their pre-permutation values exactly at every size, and every relabelled mechanism still decomposed. The adjacency-plus-position baseline is unchanged in $0.68\%$ of the same trials, scoring $0.050$ at $N=4$ and exactly zero at every $N \geq 6$ (Table~\ref{tab:exp4-invariance}). That $N=4$ residue is a small-graph artifact, since on four joints a random permutation can still be the identity or an automorphism.

Invariance is earned by sorting dyads into a canonical order before any feature is written, since the solve-order search would otherwise break ties on the lowest joint index, and 242 of the 300 mechanisms contain at least one such tie. No joint index enters the group-order counts or the condensation-DAG hash.

%% file: sections/03_related_work.tex
\section{Related Work}
\label{sec:related}

\paragraph{Mechanism corpora and retrieval.}
LINKS delivered a large joint-level corpus of planar revolute mechanisms with simulated trajectories~\citep{HeyraniNobari2022Links}, and the Purwar Lab dataset adds four-, six-, and eight-bar families carrying higher-arity links and sliders~\citep{purwarlab2024dataset}. Surveys document the shift toward data-driven synthesis~\citep{PurwarGe2025Advancements, Han2025Review}, and a related line assembles linkages from available standard parts~\citep{escande2025relink}. LInK learns a joint representation over ten million mechanisms and warm-starts a numerical optimizer~\citep{HeyraniNobari2024LInK}, with contrastive extensions to mixed joint types~\citep{ParkJang2026Spring}. We build on those corpora, but our contribution is the abstraction above them. Learned embeddings abstract instances by similarity; our decomposition abstracts mechanisms by structure, reducing them to irreducible modules with compositional interfaces. This abstraction makes admissibility closed under composition, specifies the conditions for transfer (\S\ref{sec:theory}), makes acceptance decidable for designs absent from the corpora (\S\ref{sec:exp-validity}), and reduces the number of examples required at low budgets by half (\S\ref{sec:exp-regression}).

\paragraph{Keeping the discrete part valid.}
Published generators handle discrete validity by restricting topology to a library of types~\citep{Bolanos2026MechaFormer}, by appending Assur dyads under a grammar so that mobility holds inductively~\citep{Fogelson2023GCPHOLO, Lipson2008Evolutionary, Vermeer2018RL}, or by generating freely and filtering afterwards~\citep{Jadhav2026LinkD}. We apply the classical construction rule at a different point, since each of these systems constrains a graph it builds itself. The solve graph decomposes mechanisms that are given, which turns the rule into an acceptance test that every corpus record commits to, and \S\ref{sec:exp-validity} measures what construction alone delivers.

\paragraph{Decomposition and rigidity.}
The irreducible decomposition of pinned planar mechanisms and its relation to rigidity are established~\citep{ServatiusShaiWhiteley2010, ShaiSljokaWhiteley2013}, Laman's theorem characterizes generic planar bar-joint rigidity~\citep{Laman1970}, and the pebble game decides it exactly~\citep{JacobsHendrickson1997,10.1115/DETC2011-48340}. This literature asks structural questions of the modules it identifies, so what a learner gains from being handed them stays open, and it is what we measure. Our structural baseline in \S\ref{sec:exp-index} is Weisfeiler-Lehman hashing of the bare joint graph~\citep{Shervashidze2011WL}. Design systems that pair a language model with a simulator read a symbolic layer describing what the simulation already produced~\citep{Gandarela2026Symbolic, Gandarela2026Reflection, Gandarela2026RAPS}, while a decomposition is readable before any simulation runs.

\paragraph{Compositional structure and data efficiency.}
Exposing parts and their relations is a standard route to reuse and to learning from fewer examples, through relational inductive biases in graph networks~\citep{battaglia2018relational}, through modules recomposed across tasks~\citep{alet2018modular}, and, for molecules and programs, through decoders that assemble objects from individually valid pieces~\citep{Kusner2017GrammarVAE, Jin2018JTVAE}. In those settings the parts are designed, mined, or searched for, because the domain does not settle what they are. Planar mechanism theory settles it, so modules, interfaces, and solve order enter the input of a network whose architecture is held fixed, which separates what compositional structure contributes from what a relational architecture contributes. Read on the data axis, constraint-organized features hold a geometric-mean $1.8\times$ sample-efficiency advantage from 50 through 10,000 mechanisms, and the advantage closes by 40,000 (\S\ref{sec:exp-regression}), so the gain belongs to the regime where every label costs a simulation. Our contribution is this compositional representation at corpus scale, the interface conditions under which its modules support reuse (\S\ref{sec:theory}), and a measurement of what the compositionality is worth in training data.

%% file: sections/08_conclusion.tex
\section{Conclusion}
\label{sec:conclusion}

Solve graphs change what is explicit without changing the mechanism. They replace a flat view of joints with irreducible modules, typed interfaces, and a dataflow order for local constraint solves, which is what lets a design agent read the corpus as an action set whose moves carry checkable preconditions. Knowledge representation has long held that a formalism is judged by the commitments it makes explicit and by the inferences those commitments make cheap~\citep{BrachmanLevesque2004}, and for planar mechanisms the Assur decomposition supplies commitments that a stored adjacency matrix leaves implicit. Three consequences of that change are measured, with a fourth experiment controlling the third. In this context, composition stays inside a substantially more productive structural search space, with 10.02 million mechanisms admitting a more abstract structural representation, leading to a measurable low-data advantage over a flattened MLP. Structural validity and solve order are addressed before a simulator runs, while motion remains emergent.

%% file: sections/llm_disclosure.tex
\section{Generative AI Use Disclosure}
\label{sec:ai-disclosure}

In this work, we used generative AI tools to draft parts of the manuscript, to edit the text for readability, and to create and edit software code, and to assist with drafting informal proof sketches under manual supervision. The authors reviewed and revised all such material.

%% file: appendices/A_background.tex
\section{Background: Full Tutorial Treatment}
\label{app:background}

This appendix preserves the tutorial-style derivation of the kinematic and graph-theoretic background that motivates the Assur knowledge base. The main body (\S\ref{sec:background}) presents only the condensed version.

\subsection{Mechanisms as constraint systems}
\label{app:mechanisms}

A planar rigid body has three \emph{degrees of freedom} (DOF): two translational ($x$, $y$) and one rotational ($\theta$). When two bodies are connected by a \emph{revolute joint} (a pin), two relative DOF are removed (the joint fixes relative $x$ and $y$ but leaves $\theta$ free).

This maps cleanly onto a constraint-satisfaction problem. Given a mechanism with $L$ links (including the fixed \emph{ground} link) and $J$ revolute joints, the configuration space has dimension

\begin{equation}
  \boxed{\DOF = 3(L-1) - 2J}
  \label{eq:grubler}
\end{equation}

This is the planar Chebychev--Gr\"ubler count~\citep{Mueller2009GenericMobility}. Under the regularity and independence assumptions of the generic model, $\DOF=1$ means that one generalized coordinate (e.g.\ the input crank angle) locally drives all others. The count is structural: by itself it neither certifies that a particular geometry assembles nor that an assembled mechanism completes a full input cycle.

\begin{center}
\begin{tabular}{lcccc}
  \toprule
  Mechanism & $L$ & $J$ & $\DOF$ & Notes \\
  \midrule
  4-bar linkage         & 4 & 4 & 1 & standard crank--rocker \\
  Slider-crank          & 4 & 4 & 1 & piston--crank \\
  Pantograph (6-bar)    & 6 & 7 & 1 & copying mechanism \\
  Rigid triangle        & 3 & 3 & 0 & structure, not mechanism \\
  \bottomrule
\end{tabular}
\end{center}

\subsection{The 4-bar linkage}
\label{app:fourbar}

Figure~\ref{fig:fourbar} shows the canonical planar mechanism. Links are named
by their function: a \emph{crank} (driver) rotates continuously; the
\emph{coupler} transmits motion; the \emph{rocker} oscillates.

\begin{figure}[h]
\centering
\begin{tikzpicture}[scale=1.0]

  \coordinate (A) at (0,0);
  \coordinate (B) at (1.2,2.8);
  \coordinate (C) at (4.8,2.95);
  \coordinate (D) at (5.8,0);

  \draw[cGround, line width=1pt] (-0.4,0) -- (6.2,0);
  \foreach \x in {-0.2, 0.2, 0.6, 1.0, 1.6, 2.2, 2.8, 3.4, 4.0, 4.6,
                   5.0, 5.4, 5.8, 6.0}
    \draw[cGround, line width=0.6pt] (\x,0) -- (\x-0.3,-0.3);

  \begin{scope}[on background layer]
    \node[group box, draw=cDriver, fill=cBox, opacity=0.55,
          fit={($(A)+(-0.55,-0.55)$) ($(B)+(0.55,0.55)$)}]
          (driverbox) {};
  \end{scope}
  \node[above=2pt of driverbox, font=\small\bfseries, text=cDriver]
       {Driver (crank)};

  \begin{scope}[on background layer]
    \node[group box, draw=cDyad, fill=cBoxTeal, opacity=0.45,
          fit={($(B)+(-0.45,-0.5)$) ($(C)+(0.45,0.6)$) ($(D)+(0.55,0.55)$)}]
          (dyadbox) {};
  \end{scope}
  \node[above=2pt of dyadbox, font=\small\bfseries, text=cDyad, xshift=8pt]
       {Assur dyad (coupler + rocker)};

  \draw[driver link] (A) -- (B);
  \draw[dyad link]   (B) -- (C);
  \draw[dyad link]   (C) -- (D);

  \node[fixed joint] at (A) {};
  \node[joint, draw=cDriver] at (B) {};
  \node[joint, draw=cDyad]   at (C) {};
  \node[fixed joint] at (D) {};

  \node[node label, left=4pt]  at (A) {$A$};
  \node[node label, above left=2pt] at (B) {$B$};
  \node[node label, above right=2pt] at (C) {$C$};
  \node[node label, right=4pt] at (D) {$D$};

  \node[link label, text=cDriver, left=2pt]
    at ($(A)!0.5!(B)$) {crank};
  \node[link label, text=cDyad, above=3pt]
    at ($(B)!0.5!(C)$) {coupler};
  \node[link label, text=cDyad, right=2pt]
    at ($(C)!0.5!(D)$) {rocker};

  \draw[arr, cDriver] ($(A)+(0.65,0.1)$)
    arc[start angle=10, end angle=80, radius=0.65];

  \node[annotation, align=left, right=12pt] at (6.4, 2.0)
    {$L=4,\; J=4$\\[2pt] $\DOF = 3(3)-2(4) = 1$};

\end{tikzpicture}
\caption{A planar 4-bar linkage decomposed into its driver link and an Assur dyad. Fixed pivots $A$ and $D$ are grounded. The dyad's two \emph{external} joints are $B$ (position known from the driver) and $D$ (fixed); its one \emph{internal} joint is $C$.}
\label{fig:fourbar}
\end{figure}

\subsection{Generic mobility and minimal pinned components}
\label{app:laman}

For a pinned subassembly of $n$ moving links and $j$ lower-pair constraints, the equality

\begin{equation}
  3n = 2j.
  \label{eq:laman}
\end{equation}

is the zero-mobility \emph{count}. It is necessary for generic isostaticity, but equality alone is not a rigidity certificate: the corresponding sparsity/minimality conditions must also hold~\citep{GraverServatiusServatius1993}. We therefore keep the generic count, the decomposition result, assembly at the stored geometry, and full-cycle motion as four distinct checks.

For the 4-bar as a whole: $3 \times 3 = 9 \neq 8 = 2 \times 4$, so the full mechanism is \emph{not} isostatic: it has one DOF, as required. But consider only the dyad sub-chain (coupler + rocker):

\[
  n = 2 \text{ links},\quad j = 3 \text{ joints (one internal + two external)}
  \implies 3(2) = 6 = 2(3). \checkmark
\]

\subsection{The pebble game}
\label{app:pebble-game}

Equation~\ref{eq:laman} totals freedoms and constraints over a whole
subassembly, so it remains satisfied by a subassembly that spends surplus
constraints in one region while leaving another region underconstrained.
Deciding generic rigidity therefore means verifying a count on every subset of
joints, and the \emph{pebble game} of \citet{JacobsHendrickson1997} performs all
of those checks in one incremental pass. We describe it from scratch here, since
the acceptance test of \S\ref{sec:acceptance} is stated in its terms.

Each joint of a planar bar--joint framework receives two pebbles, one for each
coordinate it could move along. A pebble resting on a joint is a freedom still
available there, and a pebble moved onto a bar is a freedom that bar has
consumed. Bars are inserted one at a time. A bar between joints $u$ and $v$ may
enter only when four pebbles can be gathered on $u$ and $v$, two at each
endpoint, because a bar in the plane removes two freedoms and the game requires
the two remaining ones to be visible. When an endpoint is short, the game
follows the bars already inserted outward in search of a free pebble at some
other joint and slides it back along the path on which it was found. This step,
path reversal, re-charges which bar is paid for by which freedom and alters no
total. If four pebbles are gathered, the new bar claims one of them and counts
as independent. If the search exhausts everything reachable from $u$ and $v$,
every freedom in that region is already spent, so the new bar is implied by the
bars already present and counts as redundant.

(In essence, the game is degree-of-freedom bookkeeping on the graph. Two tokens
per joint, one token spent by each independent bar, and a failed search for a
token proves that the region reached is already rigid.)

The pebbles still free when insertion finishes are the freedoms no bar removed,
which is the framework's generic degree-of-freedom count. With every joint free
to move, three pebbles remain exactly when the only motions left are the three
rigid motions of the plane, and Laman's theorem identifies that state with the
counting condition holding on every subset of
joints~\citep{Laman1970}. The name $(2,3)$ records those two numbers, two
pebbles per joint and the three a rigid framework keeps. In the pinned form the
acceptance test uses, the boundary joints enter first as a complete graph, so
the plane's three freedoms stay visible in the same tally, a module counts as
pinned-rigid when exactly three pebbles are free, and any surplus over three is
its residual freedoms. On the dyad of Figure~\ref{fig:dyad} the three joints
supply six pebbles, the boundary edge $p_1p_2$ and the two links $p_1q$ and
$qp_2$ each enter as independent, and three pebbles are left over, so the dyad
is pinned-rigid. The cost stays close to linear in the number of bars at these
sizes, which is what makes the test affordable on every module of the build, and
it is the algorithm the rigidity-theoretic treatment of Assur decomposition runs
as well~\citep{10.1115/DETC2011-48340}.

\subsection{Assur groups: definition}
\label{app:assur-groups}

\begin{quote}
  An \textbf{Assur group} is a minimal pinned isostatic component: it is generically rigid once its interface joints are pinned, and no proper admissible subcomponent has that property~\citep{ServatiusShaiWhiteley2010}.
\end{quote}

The simplest non-trivial Assur group is the \textbf{dyad}: two links sharing one internal revolute joint, with two external joints that attach to the rest of the mechanism (Figure~\ref{fig:dyad}).

\begin{figure}[h]
\centering
\begin{tikzpicture}[scale=1.1]

  \coordinate (P1) at (0, 0);
  \coordinate (Q)  at (1.8, 1.5);
  \coordinate (P2) at (4.0, 0);

  \begin{scope}[on background layer]
    \node[group box, draw=cDyad, fill=cBoxTeal, opacity=0.5,
          fit={($(P1)+(-0.5,-0.5)$) ($(Q)+(0,0.7)$) ($(P2)+(0.5,-0.5)$)}]
          {};
  \end{scope}

  \draw[cGround, line width=0.8pt] (-0.35,-0.25) -- (0.35,-0.25);
  \foreach \x in {-0.25,-0.05,0.15}
    \draw[cGround, line width=0.5pt] (\x,-0.25) -- (\x-0.2,-0.5);
  \draw[cGround, line width=0.8pt] (3.65,-0.25) -- (4.35,-0.25);
  \foreach \x in {3.75,3.95,4.15}
    \draw[cGround, line width=0.5pt] (\x,-0.25) -- (\x-0.2,-0.5);

  \draw[dyad link] (P1) -- (Q);
  \draw[dyad link] (Q) -- (P2);

  \node[fixed joint] at (P1) {};
  \node[joint, draw=cDyad, fill=white, line width=1.5pt] at (Q) {};
  \node[fixed joint] at (P2) {};

  \node[node label, below=10pt] at (P1) {ext.\ joint $p_1$};
  \node[node label, above=5pt] at (Q)  {int.\ joint $q$};
  \node[node label, below=10pt] at (P2) {ext.\ joint $p_2$};

  \node[link label, text=cDyad, above left=5pt]
    at ($(P1)!0.5!(Q)$) {link $\ell_1$};
  \node[link label, text=cDyad, above right=5pt]
    at ($(Q)!0.5!(P2)$) {link $\ell_2$};

  \node[annotation, align=center, below=30pt] at ($(P1)!0.5!(P2)$)
    {$n=2,\; j=3 \;\Longrightarrow\; 3n = 6 = 2j$ \quad (isostatic)};

\end{tikzpicture}
\caption{A dyad with interface arity two (and classical order two under the one-pair-per-boundary convention). Two links, three joints (one internal, two external). Given $p_1$ and $p_2$, the position of $q$ is found by intersecting two circles, typically yielding two assembly modes.}
\label{fig:dyad}
\end{figure}

\begin{table}[h]
\centering
\begin{tabular}{ccccc}
  \toprule
  Stored interface arity & Links $n$ & Joints $j$ & Int.\ joints & Boundary IDs \\
  \midrule
  2 & 2 & 3 & 1 & 2 \\
  3 & 4 & 6 & 3 & 3 \\
  4 & 6 & 9 & 5 & 4 \\
  \bottomrule
\end{tabular}
\caption{Illustrative interface arities stored by this work. The value counts
  distinct boundary-joint identifiers and agrees with classical Assur order
  only under a one-to-one boundary-pair convention. It is not classical
  Assur class, which is recorded only with a named convention and independent
  validation. The equality $3n=2j$ is a generic count, not an independence,
  assembly, or full-cycle certificate.}
\label{tab:orders}
\end{table}

Once a mechanism has been peeled into groups
(\S\ref{sec:extraction-method}), contracting each group to a node and drawing
an edge wherever one group supplies a boundary joint to another gives the solve
graph of Figure~\ref{fig:technical-solve-graph}, whose topological traversals
are the dependency-respecting solve orders.

\begin{figure}[h]
 \centering
 \includegraphics[width=\linewidth]{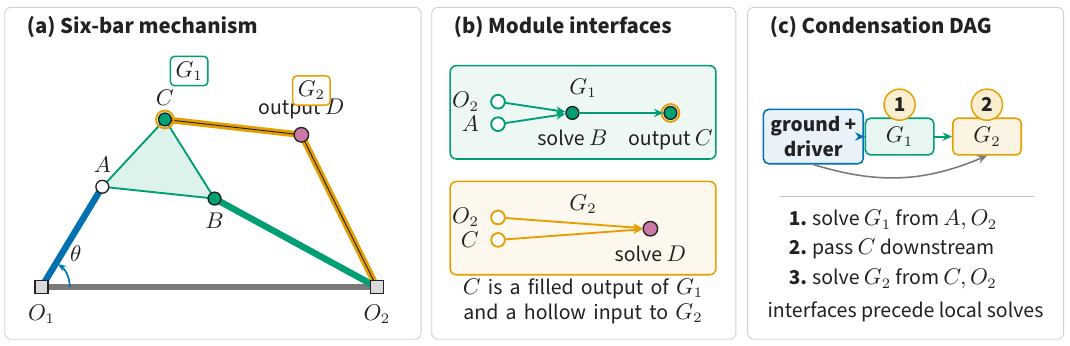}
 \caption{\textbf{A solve graph localizes the coupled computation in a
 multi-group mechanism.} Hollow joints are inputs and filled joints are
 outputs, and ground meets two moving links at the compound pin $O_2$. Numbers
 give one valid topological solve order. This is the six-bar of
 Figure~\ref{fig:dyad-peeling}b carried through to its condensation DAG.}
 \label{fig:technical-solve-graph}
\end{figure}

The mobility equation above uses the body--joint view (rigid links as bodies,
revolute joints as lower pairs). The binary-link LINKS source admits an
equivalent pinned bar--joint view (joint coordinates as vertices and binary
links as distance constraints); there the necessary component count is
$|E|=2|I|$ for internal vertices $I$. We do not substitute that bar--joint
count for the link-incidence model used by higher-arity or prismatic sources.

\subsection{A higher-class example}
\label{app:higher-class}

Figure~\ref{fig:condensation-dag} repeats the change of representation of Figure~\ref{fig:dyad-peeling} outside the dyadic regime, on a Stephenson-style mechanism whose second group has three mutually coupled internal joints. What the group makes visible that adjacency hides is the \emph{location} of the coupling. The rest of the mechanism factors into one small local solve per group, the driver and $G_1$ each fixing a single joint, and only the three internals of $G_2$ are coupled to each other. Adjacency alone gives a flat set of joints and bars: it neither names the dyad nor isolates the three joints whose coordinates must be solved together.

\begin{figure}[h]
\centering
\begin{minipage}[t]{0.42\textwidth}
  \centering
  \begin{tikzpicture}[scale=0.62, every node/.style={transform shape}]
    \coordinate (j0) at (0.0, 0.0);
    \coordinate (j7) at (5.0, 0.0);
    \coordinate (j1) at (1.5, 1.1);
    \coordinate (j2) at (3.4, 1.1);
    \coordinate (j3) at (1.7, 2.6);
    \coordinate (j4) at (3.3, 3.0);
    \coordinate (j5) at (4.4, 2.0);
    \draw[link, color=cGround!75, line width=1.3pt] (j0) -- (j1);
    \draw[link, color=cGround!75, line width=1.3pt] (j1) -- (j2);
    \draw[link, color=cGround!75, line width=1.3pt] (j7) -- (j2);
    \draw[link, color=cGround!75, line width=1.3pt] (j1) -- (j3);
    \draw[link, color=cGround!75, line width=1.3pt] (j2) -- (j4);
    \draw[link, color=cGround!75, line width=1.3pt] (j7) -- (j5);
    \draw[link, color=cGround!75, line width=1.3pt] (j3) -- (j4);
    \draw[link, color=cGround!75, line width=1.3pt] (j4) -- (j5);
    \draw[link, color=cGround!75, line width=1.3pt] (j5) -- (j3);
    \node[fixed joint] at (j0) {};
    \node[fixed joint] at (j7) {};
    \node[joint] at (j1) {};
    \node[joint] at (j2) {};
    \node[joint] at (j3) {};
    \node[joint] at (j4) {};
    \node[joint] at (j5) {};
  \end{tikzpicture}\\[4pt]
  {\footnotesize\textbf{(a) Adjacency view:} a flat set of joints and
  bars; the dyad and the class-III group are not visible, and the
  upper triangle is indistinguishable from three braced bars or one
  ternary rigid body.}
\end{minipage}\hfill
\begin{minipage}[t]{0.55\textwidth}
  \centering
  \begin{tikzpicture}[scale=0.62, every node/.style={transform shape}]
    \coordinate (j0) at (0.0, 0.0);
    \coordinate (j7) at (5.0, 0.0);
    \coordinate (j1) at (1.5, 1.1);
    \coordinate (j2) at (3.4, 1.1);
    \coordinate (j3) at (1.7, 2.6);
    \coordinate (j4) at (3.3, 3.0);
    \coordinate (j5) at (4.4, 2.0);
    \begin{scope}[on background layer]
      \fill[rigid body=cClassIII] (j3) -- (j4) -- (j5) -- cycle;
    \end{scope}
    \draw[driver link] (j0) -- (j1);
    \draw[dyad link]   (j1) -- (j2);
    \draw[dyad link]   (j7) -- (j2);
    \draw[class3 link] (j1) -- (j3);
    \draw[class3 link] (j2) -- (j4);
    \draw[class3 link] (j7) -- (j5);
    \node[fixed joint] at (j0) {};
    \node[fixed joint] at (j7) {};
    \node[joint, draw=cDriver] at (j1) {};
    \node[joint, draw=cDyad]   at (j2) {};
    \node[joint, draw=cClassIII] at (j3) {};
    \node[joint, draw=cClassIII] at (j4) {};
    \node[joint, draw=cClassIII] at (j5) {};
  \end{tikzpicture}\\[2pt]
  \begin{tikzpicture}[scale=0.7, every node/.style={transform shape},
                      >={Stealth[length=4pt]}]
    \node[dag node=cGround]   (g)  at (0,0)    {ground};
    \node[dag node=cDriver]   (d)  at (1.7,0)  {driver};
    \node[dag node=cClassII]  (g1) at (3.5,0)  {$G_1$ (II)};
    \node[dag node=cClassIII] (g2) at (5.5,0)  {$G_2$ (III)};
    \draw[arr] (g)  -- (d);
    \draw[arr] (g)  to[bend right=22] (g1.south west);
    \draw[arr] (d)  -- (g1);
    \draw[arr] (g)  to[bend right=30] (g2.south west);
    \draw[arr] (d)  to[bend left=14] (g2);
    \draw[arr] (g1) -- (g2);
  \end{tikzpicture}\\[2pt]
  {\footnotesize\textbf{(b) Assur view:} driver $\to$ dyad $G_1$ (class
  II) $\to$ class-III group $G_2$. Solving $G_1$ writes one joint
  (circle intersection); solving $G_2$ writes three joints
  \emph{coupled to each other}, surfacing exactly where the local
  solve stops factoring.}
\end{minipage}
\caption{The change of representation in the higher-class regime, on a
  Stephenson-style mechanism. \textbf{(a)} The adjacency view: seven
  joints and a set of bars; no group structure, and the upper triangle
  cannot be told apart from one ternary rigid body. \textbf{(b)} The
  Assur view: the driver determines the joint above $j_0$, the
  class-II dyad $G_1$ writes the joint between the two ground pivots,
  and the class-III group $G_2$ (amber rigid body, three coupled
  internal joints) writes the upper three joints at once. The
  condensation DAG (below right) makes the cascade and the location of
  the coupling explicit: the rest of the mechanism factors into one
  small local solve per group, and only the three internals of $G_2$
  are coupled to each other.}
\label{fig:condensation-dag}
\end{figure}





\subsection{The LINKS dataset in detail}
\label{app:links-dataset}

The published LINKS dataset reports one hundred million planar linkage mechanisms, with a reduced release of ten million mechanisms~\citep{HeyraniNobari2022Links}. These are upstream dataset sizes, not a claim that this work transformed every record. Each mechanism is a triple $(A, x_0, n)$:
\begin{itemize}[nosep]
  \item $A \in \{0,1\}^{N \times N}$ is the symmetric adjacency matrix
        of the joint graph.
  \item $x_0 \in \mathbb{R}^{N \times 2}$ holds the initial planar positions
        of all $N$ joints.
  \item $n \in \{0,1\}^{N}$ flags each joint as moving ($0$) or grounded
        ($1$).
\end{itemize}
The motor is fixed by convention: joints $0$ and $1$ are connected, joint $0$ is always grounded, and the link $0{-}1$ acts as the driver crank.


\subsection{The Purwar Lab dataset in detail}
\label{app:purwarlab-dataset}

The published Purwar Lab dataset reports approximately three million single-DOF planar four-, six-, and eight-bar linkage mechanisms with open and closed coupler curves~\citep{purwarlab2024dataset}. The build streams $2{,}966{,}754$ of them, divided into $602{,}079$ four-bar, $397{,}525$ six-bar, and $1{,}967{,}150$ eight-bar mechanisms (Appendix~\ref{app:kb-census}, which also gives the release link). Each mechanism is given as:
\begin{itemize}[nosep]
  \item $B \in \{0,1\}^{L \times N}$ is the link-incidence matrix, one row per
        rigid link over $N$ joints, with row $0$ the ground link, so a ternary
        or quaternary body is a single row and never a ring of bars.
  \item $\tau : J \to \{\mathrm{R}, \mathrm{P}\}$ marks each joint revolute or
        prismatic, read off the named topology (\texttt{RRRR}, \texttt{RRRP},
        \texttt{RRPR}, \texttt{PRPR}), which is what makes the slider families
        reachable at all.
  \item one coupler curve per record, travelling inline with the record and
        traced by a point carried on a link.
\end{itemize}
The motor is fixed by the source topology table, which supplies the grounded joints on incidence row $0$ together with the driver pair. No joint-coordinate array is released, so the pose $x_0$ is decoded from the source file name and the resulting design frame is accepted only once the traced point maps through it onto the stored curve (\S\ref{sec:reconstructability}).

%% file: appendices/H_benchmark_tables.tex
\section{Full Benchmark Tables}
\label{app:benchmark-tables}

These tables reproduce the completed experiment summaries used in \S\ref{sec:benchmarks}.

\subsection{Compositional validity}

Section~\ref{sec:exp-validity} reports the decline in the empirical branch and leaves its mechanism here. Extra stationary pivots can make the distance between two boundary joints vary, over a revolution, beyond what a randomly sized dyad can close. The attachment stays structurally admissible throughout, because admissibility reads only the groups and their dependencies, and it is the sampled lengths that fail. This is why the decline is a property of how geometry is drawn, which leaves the construction rule intact, and why the oracle records its geometric condition separately from its three structural ones.

\input{tables/experiments/exp1_validity_rates}

\subsection{Structural coarsening}

\input{tables/experiments/exp2_fingerprint_summary}

\subsection{Curve-regression learning curve}

The $p$ column tests ``Assur is better,'' so it reads near $1$ on the last two rows and separates them only under the reverse test. At 40,000 examples the $-0.0004$ difference sits inside fold noise, with a corrected paired interval of $[-0.0009, +0.0001]$ and a one-sided probability of $0.068$ for adjacency-better. At the full pool the $-0.0014$ difference is resolved, at $6.6\times10^{-6}$ with all ten folds favoring adjacency.

\input{tables/experiments/exp3_learning_curve}

\subsection{The default feature vector}
\label{app:assur-default-vector}

Section~\ref{sec:exp-regression} and the ablation below both speak of a
\emph{default} vector, which is the eight blocks named in
Table~\ref{tab:exp3-ablation}, $86$ numbers in all, each of them independent of pose, of absolute scale, and of the joint numbering. Group-order counts (\texttt{order\_count}, $3$) record how many groups of each order the decomposition returned. Condensation-DAG shape (\texttt{dag\_shape}, $6$) records its depth, its group and fixed-joint counts, and the three leading singular values of the group-by-group adjacency matrix. Link lengths (\texttt{lengths}, $37$) are the complete set in solve order, divided by the crank length, and assembly signs (\texttt{branches}, $20$) carry one branch per dyad in that same order, which together pin the configuration that lengths alone leave ambiguous. The tracer dyad contributes its own triangle (\texttt{coupler\_tri}, $5$) and the tracer's angle, area and perpendicular offset on the coupler body (\texttt{coupler\_shape}, $3$). Path geometry (\texttt{path\_geom}, $9$) holds the per-stage lengths along the chain from driver to coupler with that chain's total reach, and shape ratios (\texttt{shape\_ratios}, $3$) hold dimensionless link proportions. The structural sub-block of \S\ref{sec:exp-invariance} is the part that reads off the decomposition with no geometry at all, the order counts and the DAG-shape block together with a gated DAG-branching block, $13$ numbers.

\subsection{Assur-feature ablation}

Section~\ref{sec:exp-regression} states the outcome and leaves the magnitudes here. The condensation-DAG block is the only clear contributor at $+0.00219 \pm 0.00057$ Chamfer, and the branch block is the only one whose removal helps, lowering error by $0.00209 \pm 0.00061$. The remaining six default blocks sit inside fold noise, and the structural sub-block on its own scores $0.10375$ against $0.10576$ for the full default vector, so the vector as shipped is carried by its condensation-DAG component. Among blocks outside the default, a Grashof indicator gains $0.00397$ and a transmission-angle summary $0.00295$, both larger than anything currently included. Each describes motion feasibility, which the solve graph leaves for simulation to settle, so neither enters the default vector on this evidence alone.

\input{tables/experiments/exp3_ablation_summary}

\subsection{Relabelling invariance}
Mechanisms are generated by the Experiment~1 compositional sampler, decomposed once, then re-decomposed under a random permutation of all joint indices with the motor pair and coupler joint remapped through the same permutation. Rates are the fraction of trials whose descriptor is bitwise unchanged.

\input{tables/experiments/exp4_invariance}

%% file: tables/experiments/exp1_validity_rates.tex

\begin{table}[t]
\caption{Experiment~1 validity rates from 50,000 samples per procedure
per size. Procedure~A is the compositional dyad sampler; procedure~B is
the matched adjacency sampler.}
\label{tab:exp1-results}
\centering
\resizebox{\linewidth}{!}{
\begin{tabular}{llrrrrrrrrrr}
\toprule
Branch & $N$ & $A$ overall & $B$ overall & $A$ geom. & $B$ geom. & $A$ conn & $B$ conn & $A$ path & $B$ path & $A$ dof & $B$ dof \\
\midrule
theoretical & 4 & 1.000 & 0.405 & 1.000 & 0.405 & 1.000 & 0.990 & 1.000 & 0.405 & 1.000 & 0.411 \\
theoretical & 5 & 1.000 & 0.204 & 1.000 & 0.204 & 1.000 & 0.974 & 1.000 & 0.204 & 1.000 & 0.277 \\
theoretical & 6 & 1.000 & 0.108 & 1.000 & 0.108 & 1.000 & 0.958 & 1.000 & 0.108 & 1.000 & 0.215 \\
theoretical & 7 & 1.000 & 0.061 & 1.000 & 0.061 & 1.000 & 0.940 & 1.000 & 0.061 & 1.000 & 0.178 \\
theoretical & 8 & 1.000 & 0.035 & 1.000 & 0.035 & 1.000 & 0.924 & 1.000 & 0.035 & 1.000 & 0.155 \\
empirical & 8 & 0.284 & 0.001 & 0.284 & 0.017 & 1.000 & 0.782 & 1.000 & 0.050 & 1.000 & 0.124 \\
empirical & 9 & 0.223 & 0.001 & 0.223 & 0.008 & 1.000 & 0.758 & 1.000 & 0.029 & 1.000 & 0.123 \\
empirical & 10 & 0.180 & 0.001 & 0.180 & 0.004 & 1.000 & 0.745 & 1.000 & 0.019 & 1.000 & 0.117 \\
empirical & 11 & 0.146 & 0.000 & 0.146 & 0.002 & 1.000 & 0.729 & 1.000 & 0.012 & 1.000 & 0.110 \\
empirical & 12 & 0.121 & 0.000 & 0.121 & 0.001 & 1.000 & 0.711 & 1.000 & 0.007 & 1.000 & 0.107 \\
\bottomrule
\end{tabular}
}
\end{table}

%% file: tables/experiments/exp2_fingerprint_summary.tex

\begin{table}[t]
\caption{Experiment~2 hash and fingerprint summary. Adjacency hashes
use the bare joint graph; solve-graph fingerprints use group-order
counts and the order-labelled condensation DAG.}
\label{tab:exp2-fingerprint}
\centering
\begin{tabular}{lr}
\toprule
Quantity & Value \\
\midrule
Mechanisms hashed & 10,020,000 \\
Distinct adjacency hashes $D_A$ & 1,173,078 \\
Distinct Assur fingerprints $D_F$ & 908,186 \\
Coarsening factor $D_A/D_F$ & 1.29 \\
Mean mechanisms per Assur fingerprint & 11.03 \\
Largest Assur-fingerprint bucket & 330,809 \\
Max adjacency hashes in one Assur fingerprint & 300 \\
\bottomrule
\end{tabular}
\end{table}

%% file: tables/experiments/exp3_learning_curve.tex

\begin{table}[t]
\caption{Experiment~3 learning curve for the 128-point output-joint
coupler curve. Scores are mean symmetric Chamfer distances
$\pm$ standard deviations over 10 stratified fold units. The
$p$ column is the one-sided Nadeau--Bengio-corrected paired-test
probability for ``Assur is better.''}
\label{tab:exp3-results}
\centering
\small
\begin{tabular}{rccrr}
\toprule
Train $n$ & Adj. Chamfer & Assur Chamfer & Adj. $-$ Assur & NB $p$ \\
\midrule
50 & 0.1822 $\pm$ 0.0290 & 0.1632 $\pm$ 0.0046 & 0.0190 & 0.4812 \\
100 & 0.1602 $\pm$ 0.0038 & 0.1526 $\pm$ 0.0029 & 0.0076 & 0.4122 \\
200 & 0.1473 $\pm$ 0.0022 & 0.1409 $\pm$ 0.0024 & 0.0064 & 0.3609 \\
500 & 0.1359 $\pm$ 0.0022 & 0.1294 $\pm$ 0.0018 & 0.0065 & 0.3310 \\
1,000 & 0.1274 $\pm$ 0.0014 & 0.1221 $\pm$ 0.0015 & 0.0052 & 0.2423 \\
2,000 & 0.1196 $\pm$ 0.0008 & 0.1158 $\pm$ 0.0016 & 0.0038 & 0.0866 \\
4,000 & 0.1131 $\pm$ 0.0008 & 0.1101 $\pm$ 0.0007 & 0.0030 & 0.0095 \\
8,000 & 0.1069 $\pm$ 0.0008 & 0.1052 $\pm$ 0.0007 & 0.0017 & 0.0060 \\
10,000 & 0.1051 $\pm$ 0.0008 & 0.1038 $\pm$ 0.0007 & 0.0013 & 0.0240 \\
20,000 & 0.1002 $\pm$ 0.0008 & 0.1004 $\pm$ 0.0006 & -0.0002 & 0.7193 \\
40,000 & 0.0964 $\pm$ 0.0005 & 0.0980 $\pm$ 0.0006 & -0.0016 & 1.0000 \\
72,013 & 0.0942 $\pm$ 0.0005 & 0.0965 $\pm$ 0.0007 & -0.0023 & 1.0000 \\
\bottomrule
\end{tabular}
\end{table}

%% file: tables/experiments/exp3_ablation_summary.tex
\begin{table}[t]
\centering
\begin{tabular}{llrr}
\toprule
Comparison & Block & Gain & Std. \\
\midrule
Default leave-one-out & \texttt{order\_count} & 0.00032 & 0.00074 \\
Default leave-one-out & \texttt{dag\_shape} & 0.00219 & 0.00057 \\
Default leave-one-out & \texttt{lengths} & 0.00022 & 0.00049 \\
Default leave-one-out & \texttt{branches} & -0.00209 & 0.00061 \\
Default leave-one-out & \texttt{coupler\_tri} & -0.00016 & 0.00074 \\
Default leave-one-out & \texttt{coupler\_shape} & 0.00017 & 0.00063 \\
Default leave-one-out & \texttt{path\_geom} & -0.00039 & 0.00048 \\
Default leave-one-out & \texttt{shape\_ratios} & -0.00048 & 0.00067 \\
\midrule
Candidate add & \texttt{grashof} & 0.00397 & 0.00065 \\
Candidate add & \texttt{coupler\_param} & -0.00036 & 0.00049 \\
Candidate add & \texttt{transmission} & 0.00295 & 0.00089 \\
Candidate add & \texttt{reach\_extent} & -0.00006 & 0.00056 \\
Candidate add & \texttt{loop\_spectral} & 0.00023 & 0.00069 \\
Candidate add & \texttt{interactions} & 0.00106 & 0.00066 \\
\bottomrule
\end{tabular}
\caption{Experiment~3 feature-ablation summary at train $n=5{,}000$ over 10 fold units. Default rows report leave-one-out importance; candidate rows report add-one-block gain. Positive values mean lower Chamfer relative to the corresponding baseline.}
\label{tab:exp3-ablation}
\end{table}

%% file: tables/experiments/exp4_invariance.tex
\begin{table}[t]
\centering
\begin{tabular}{rrrrrr}
\toprule
$N$ & Trials & Fingerprint & Assur & Assur struct. & Adjacency \\
\midrule
4 & 760 & 1.000 & 1.000 & 1.000 & 0.050 \\
5 & 580 & 1.000 & 1.000 & 1.000 & 0.005 \\
6 & 620 & 1.000 & 1.000 & 1.000 & 0.000 \\
7 & 620 & 1.000 & 1.000 & 1.000 & 0.000 \\
8 & 640 & 1.000 & 1.000 & 1.000 & 0.000 \\
9 & 620 & 1.000 & 1.000 & 1.000 & 0.000 \\
10 & 680 & 1.000 & 1.000 & 1.000 & 0.000 \\
11 & 800 & 1.000 & 1.000 & 1.000 & 0.000 \\
12 & 680 & 1.000 & 1.000 & 1.000 & 0.000 \\
\midrule
All & 6,000 & 1.000 & 1.000 & 1.000 & 0.007 \\
\bottomrule
\end{tabular}
\caption{Experiment~4 relabelling invariance: the fraction of trials left unchanged by a random joint permutation. 300 generated mechanisms, 20 permutations each. The adjacency column is the Experiment~3 flattened adjacency-plus-position baseline, which is keyed by joint index and is expected to change.}
\label{tab:exp4-invariance}
\end{table}

%% file: appendices/J_theory.tex
\section{Conditions and Boundaries for Predictive Composition}
\label{app:theory}

The central question of Section~\ref{sec:theory} is what makes a familiar
constraint module reusable in a new mechanism. A dyad illustrates the idea:
its internal joint is found by intersecting two circles. The same conditional
rule applies in many assemblies, while the boundary positions, bar lengths,
and assembly branch specify each instance. Composition reuses that rule and
passes its solution to later modules.

We first ask when accurate local predictions remain accurate after this
composition. The transfer bound separates familiarity of local inputs from
the amplification of local errors. Two physical constructions then explain
why amplification and loss of smoothness can occur even in chains of simple
dyads. Next, we ask what information must reach each module: a port-assignment
example and a dimension bound show how compression can merge different
behaviours. Finally, we distinguish the information retained by a lossless
decomposition from the computational structure it exposes, and show why
undirected graph width does not capture that structure.

The proofs use local implicit solves and the established Assur decomposition
\citep{ServatiusShaiWhiteley2010,ShaiSljokaWhiteley2013}; the transfer argument
builds on the broader study of compositional prediction.

\begin{figure}[htbp]
 \centering
 \includegraphics[width=\linewidth]{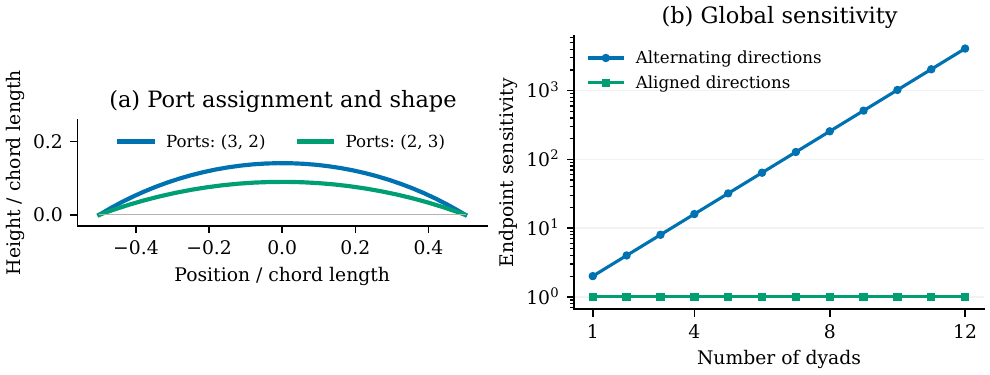}
 \caption{\textbf{What local structure alone leaves unresolved.}
 (a) Exchanging lengths between the boundary ports of a four-bar dyad
 changes its output arc, shown after normalizing both chords to unit length
 (Proposition~\ref{prop:port-information}).
 (b) Dyad chains with identical local Jacobian singular values have
 endpoint configuration sensitivities $1$ and $2^m$ at their reference
 poses (Theorem~\ref{thm:theory-dyad-sensitivity-separation}).
 Panel (a) shows why a module needs the correspondence between constraints
 and boundary ports; panel (b) shows why the directions in which modules
 transmit motion matter. Both are analytic constructions.
 The included script reproduces the figure and checks the constructions
 against circle-intersection solves.}
 \label{fig:theory-separations}
\end{figure}

\input{appendices/theory/transfer}
\input{appendices/theory/stability}
\input{appendices/theory/singular-depth}
\input{appendices/theory/information}
\input{appendices/theory/structure}

\paragraph{Consequences for representation and evaluation.}
The results give a connected account of predictive reuse. Interfaces must
retain distinctions visible in the target; training must cover the local
conditions encountered at those interfaces; and downstream geometry
determines how much local error matters. A solve decomposition makes these
quantities explicit, even when the undirected graph is large or has high
treewidth. They suggest concrete axes for future compositional evaluations:
vary interface coverage and propagation sensitivity separately, while
holding the module vocabulary fixed. The present common-MLP experiment
addresses the preceding representational question---whether exposing
constraint organization helps a fixed learner predict behaviour from
limited data.

%% file: appendices/theory/transfer.tex
\subsection{Generalization Through Module Interfaces}
\label{app:interface-transfer}

An assembly may be new even though each conditional problem it contains is
familiar. For example, a dyad predictor can be reused wherever its boundary
motion, geometry, and assembly branch resemble those seen during training.
The difficulty is that an upstream prediction also supplies the boundary
conditions for later modules. We therefore separate two sources of error:
the error a module makes with the correct inputs, and the additional error
caused by imperfect inputs arriving from its parents. The first is governed
by local coverage; the second by propagation through the solve graph.

\subsubsection{Spaces, domains, and error propagation}

The setup below formalizes this separation. The input and output spaces
allow a module to predict an entire path, so that the same argument applies
to mechanism motion as well as to a single configuration.

Fix a finite solve DAG with nodes $V=\{1,\ldots,m\}$ in topological order.
Write $\operatorname{pa}(v)$ for the parents of $v$, and $t(v)$ for its module
type.  For each type $t$, let $\mathsf X_t$ be a separable Banach space of
module outputs and $(\mathsf Z_t,d_t)$ a Polish space of module inputs.
For trajectory prediction one may take
\[
 \mathsf X_t=C([0,1];\mathbb R^{d_t^{\rm out}}),
 \qquad
 \|x\|_{\mathsf X_t}=\sup_{\theta\in[0,1]}\|x(\theta)\|_2.
\]
Here $\theta$ is a common driving parameter, not an independently resampled
arc-length coordinate.  The input space includes the full boundary paths,
geometric parameters, and branch information.  The input metric is fixed
before stating a transport bound; any rescaling of its coordinates changes
the associated Lipschitz constants.

For each type, a measurable exact conditional solver $S_t:D_t\to\mathsf X_t$ is
defined on an admissible Borel set $D_t\subseteq\mathsf Z_t$.
Its measurable learned counterpart $\widehat S_t:U_t\to\mathsf X_t$ is defined on a
Borel set $U_t\supseteq D_t$.  Branch-conditioned solver charts may be
included in the type index.  This avoids assuming a single-valued exact
solver across incompatible assembly branches.

Let $\omega$ collect the random mechanism geometry, source motion, and
other exogenous data, on a probability space $(\Omega,\mathcal F,\mathbb P)$.
The measurable routing map $Q_v$ supplies the inputs to occurrence $v$:
\begin{align}
 z_v&=Q_v\bigl(\eta_v,(x_u)_{u\in\operatorname{pa}(v)}\bigr),
 & x_v&=S_{t(v)}(z_v),\label{eq:transfer-true}\\
 \widehat z_v&=Q_v\bigl(\eta_v,(\widehat x_u)_{u\in\operatorname{pa}(v)}\bigr),
 & \widehat x_v&=\widehat S_{t(v)}(\widehat z_v).
 \label{eq:transfer-learned}
\end{align}
All quantities depend on $\omega$.  The same observed exogenous input
$\eta_v$ is used in the two recursions; uncertainty in that input is not
included in the bounds below.

We require $z_v\in D_{t(v)}$ almost surely.  To compare the two recursions,
also consider every hybrid input obtained by independently choosing either
$x_u$ or $\widehat x_u$ for each parent.  Every such input must lie in
$U_{t(v)}$.  The exact solver need not be defined at these hybrid inputs.
For two hybrid tuples differing only in parent $u$, assume
\begin{equation}
 \left\|
 \widehat S_{t(v)}\bigl(Q_v(\eta_v,\ldots,a_u,\ldots)\bigr)
 -\widehat S_{t(v)}\bigl(Q_v(\eta_v,\ldots,b_u,\ldots)\bigr)
 \right\|_{\mathsf X_{t(v)}}
 \le A_{vu}\|a_u-b_u\|_{\mathsf X_{t(u)}}.
 \label{eq:transfer-edge-gain}
\end{equation}
The constants are deterministic and uniform over the stated evaluation
population.  Set $A_{vu}=0$ when $u\notin\operatorname{pa}(v)$.
Thus $A$ is a nonnegative strictly lower-triangular matrix.  Bounds on
rectangular domains of admissible parent states imply
Eq.~\eqref{eq:transfer-edge-gain}, but are stronger than needed.

The hybrid inputs let us replace predicted parent outputs one at a time.
This attributes each incoming error to a specific dependency, with $A_{vu}$
bounding its effect. They are comparison points for the learned predictor;
requiring them to be physically feasible exact assemblies is unnecessary.

Write $x=(x_1,\ldots,x_m)$ and define $\widehat x$ analogously.
Let a measurable readout produce $y=R(\eta,x)$ and
$\widehat y=R(\eta,\widehat x)$ in a separable Banach space $\mathsf Y$.  Assume
\begin{equation}
 \|\widehat y-y\|_{\mathsf Y}
 \le\sum_{v=1}^m c_v\|\widehat x_v-x_v\|_{\mathsf X_{t(v)}},
 \qquad c_v\ge0.
 \label{eq:transfer-readout}
\end{equation}
Selecting a tracer coordinate is one such readout.  Define
\begin{equation}
 w^\top=c^\top(I-A)^{-1}
        =c^\top\sum_{j=0}^{m-1}A^j.
 \label{eq:transfer-path-weights}
\end{equation}
Each $w_v$ sums products of edge gains over directed paths from $v$ to the
readout, including the direct readout contribution $c_v$.

For intuition, an error passing through two successive dependencies with
gains $a$ and $b$ contributes at most $ab$ times its original magnitude.
When several paths reach the output, their contributions add in the bound.
The weight $w_v$ collects this downstream influence for occurrence $v$.

\begin{proposition}[Propagation from errors at true module inputs]
\label{prop:transfer-propagation}
Under the domain and gain assumptions above, let
\[
 \delta_t(z)=\|\widehat S_t(z)-S_t(z)\|_{\mathsf X_t},
 \qquad z\in D_t.
\]
Then, almost surely,
\begin{equation}
 \|\widehat y-y\|_{\mathsf Y}
 \le\sum_{v=1}^m w_v\delta_{t(v)}(z_v).
 \label{eq:transfer-pointwise}
\end{equation}
Consequently, if $\nu_v$ is the law of the true input $z_v$,
\begin{equation}
 \left(\mathbb E\|\widehat y-y\|_{\mathsf Y}^2\right)^{1/2}
 \le\sum_{v=1}^m w_v
       \|\delta_{t(v)}\|_{L^2(\nu_v)}.
 \label{eq:transfer-rms-propagation}
\end{equation}
\end{proposition}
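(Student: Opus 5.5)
Fix $\omega$ in the almost-sure event where every true input $z_v$ lies in $D_{t(v)}$ and every hybrid input lies in $U_{t(v)}$. Write $e_v=\|\widehat x_v-x_v\|_{\mathsf X_{t(v)}}$. The plan is to derive a triangular recursion for the $e_v$, solve it with $(I-A)^{-1}$, substitute into the readout bound \eqref{eq:transfer-readout}, and then take $L^2$ norms.

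\textbf{Local recursion.} For each $v$ we split
\[
 \widehat x_v-x_v=\bigl[\widehat S_{t(v)}(\widehat z_v)-\widehat S_{t(v)}(z_v)\bigr]+\bigl[\widehat S_{t(v)}(z_v)-S_{t(v)}(z_v)\bigr].
\]
The second bracket has norm $\delta_{t(v)}(z_v)$, which is defined because $z_v\in D_{t(v)}\subseteq U_{t(v)}$.

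For the first bracket we enumerate the parents $u_1,\ldots,u_k$ of $v$. We then build the telescoping chain of hybrid tuples in which the first $j$ parents carry $\widehat x_{u_i}$ and the rest carry $x_{u_i}$. Consecutive tuples differ in exactly one parent, so \eqref{eq:transfer-edge-gain} applies to each link of the chain. Summing over the links gives
\[
 e_v\le\delta_{t(v)}(z_v)+\sum_{u} A_{vu}e_u .
\]
This hybrid telescoping is the only step that needs care. In particular, it is exactly why the assumption requires every hybrid input to lie in $U_{t(v)}$: the exact solver is never evaluated at hybrid points, only $\widehat S$ is.

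\textbf{Solving the recursion.} In vector form the bound reads $e\le\delta+Ae$ componentwise, with $A\ge0$ strictly lower triangular. Iterating the inequality $m$ times and using $A^m=0$ gives
\[
 e\le\sum_{j=0}^{m-1}A^j\delta=(I-A)^{-1}\delta.
\]
Substitution is legitimate because $A$ has nonnegative entries, so it preserves componentwise inequalities. An equivalent route is induction along the topological order.

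Since $c\ge0$, the readout assumption \eqref{eq:transfer-readout} then gives
\[
 \|\widehat y-y\|\le c^\top e\le c^\top(I-A)^{-1}\delta=\sum_v w_v\delta_{t(v)}(z_v),
\]
which is \eqref{eq:transfer-pointwise}.

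\textbf{RMS bound.} Take the $L^2(\mathbb P)$ norm of both sides of \eqref{eq:transfer-pointwise}. By monotonicity and Minkowski's inequality, with the deterministic weights $w_v\ge0$ (products and sums of nonnegative entries),
\[
 \|\widehat y-y\|_{L^2(\mathbb P)}\le\sum_v w_v\|\delta_{t(v)}(z_v)\|_{L^2(\mathbb P)}.
\]
By the change of variables $z_v\sim\nu_v$, the last factor equals $\|\delta_{t(v)}\|_{L^2(\nu_v)}$, which gives \eqref{eq:transfer-rms-propagation}.

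Measurability of $\delta_t$ and of $\|\widehat y-y\|$ follows from the measurability of $S_t$, $\widehat S_t$, $Q_v$ and $R$, together with the separability of the Banach spaces, which makes the norms of measurable differences measurable. If some $\|\delta_{t(v)}\|_{L^2(\nu_v)}$ is infinite, the bound holds trivially.
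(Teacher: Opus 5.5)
Your proposal is correct and follows the paper's proof step for step. You insert $\widehat S_{t(v)}(z_v)$, swap parents one at a time through the hybrid tuples to get $e\le\delta+Ae$, iterate using $A^m=0$, apply the nonnegative readout weights $c$, and finish with Minkowski's inequality and the change of variables to $\nu_v$, which is exactly the paper's argument.
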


\begin{proof}
Put $e_v=\|\widehat x_v-x_v\|_{\mathsf X_{t(v)}}$ and
$b_v=\delta_{t(v)}(z_v)$.  Insert
$\widehat S_{t(v)}(z_v)$ between the true and predicted outputs.
Changing one parent at a time from its true to its predicted output uses
only the stipulated hybrid inputs, so the triangle inequality and
Eq.~\eqref{eq:transfer-edge-gain} give
\[
 e_v\le b_v+\sum_{u\in\operatorname{pa}(v)}A_{vu}e_u.
\]
In vector form $e\le b+Ae$, componentwise.  Iteration terminates because
$A^m=0$, giving $e\le\sum_{j=0}^{m-1}A^jb$.
Multiplication by the nonnegative readout vector $c$ proves
Eq.~\eqref{eq:transfer-pointwise}.  Taking scalar $L^2(\mathbb P)$ norms
and using Minkowski's inequality proves
Eq.~\eqref{eq:transfer-rms-propagation}.  No independence between errors
at different occurrences is required.
\end{proof}

\subsubsection{Transport from training interfaces to new assemblies}

The propagation result assumes we know a module's error on the inputs of
the new assembly. Training instead characterizes its error under a reference
distribution. The next step connects the two: if nearby local inputs have
similar prediction errors, shifting the input distribution incurs a
controlled additional cost. The Wasserstein distance $W_2$ expresses the
smallest RMS displacement needed to match inputs from the two distributions.

Let $\mu_t$ be a reference training distribution of complete local inputs
for type $t$, supported on $D_t$.  The quantity
\begin{equation}
 \epsilon_t=\|\delta_t\|_{L^2(\mu_t)}
 \label{eq:transfer-local-risk}
\end{equation}
is a population error of the learned module, not its empirical training
error.  A separate statistical bound may control it from a finite sample.
The following statements can also be read conditionally on fitted module
predictors and any event on which their local error bounds hold.

\begin{theorem}[Generalization controlled by local interface transport]
\label{thm:interface-transport}
Assume the conditions of Proposition~\ref{prop:transfer-propagation}.
Suppose $\mu_t$ and $\nu_v$ have finite second moments in their respective
input metrics and, on a set supporting both measures, the scalar residual
$\delta_t$ is $K_t$-Lipschitz.  Then
\begin{equation}
 \left(\mathbb E\|\widehat y-y\|_{\mathsf Y}^2\right)^{1/2}
 \le
 \sum_{v=1}^m w_v
 \left[\epsilon_{t(v)}+
 K_{t(v)}W_2(\nu_v,\mu_{t(v)})\right],
 \label{eq:interface-transport-bound}
\end{equation}
where
\[
 W_2(\nu,\mu)
 =\inf_{\pi\in\Pi(\nu,\mu)}
   \left(\int d_t(z,z')^2\,\mathrm d\pi(z,z')\right)^{1/2}.
\]
It suffices for the exact and learned module maps to be Lipschitz on that
set with constants $L_t$ and $\widehat L_t$, respectively; one may then
take $K_t=L_t+\widehat L_t$.
\end{theorem}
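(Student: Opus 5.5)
The plan is to reduce Theorem~\ref{thm:interface-transport} to Proposition~\ref{prop:transfer-propagation} and then bound each local term by a coupling argument. Proposition~\ref{prop:transfer-propagation} already gives
\[
 \left(\mathbb E\|\widehat y-y\|_{\mathsf Y}^2\right)^{1/2}\le\sum_{v=1}^m w_v\|\delta_{t(v)}\|_{L^2(\nu_v)}.
\]
It therefore suffices to show, for each occurrence $v$ with $t=t(v)$, that
\[
 \|\delta_t\|_{L^2(\nu_v)}\le\epsilon_t+K_tW_2(\nu_v,\mu_t).
\]
The weights $w_v$ are nonnegative, so summing these inequalities gives Eq.~\eqref{eq:interface-transport-bound}.

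For the local inequality, I fix an arbitrary coupling $\pi\in\Pi(\nu_v,\mu_t)$, concentrated on the set where $\delta_t$ is $K_t$-Lipschitz. For $\pi$-a.e.\ pair $(z,z')$,
\[
 \delta_t(z)\le\delta_t(z')+K_t\,d_t(z,z').
\]
Taking $L^2(\pi)$ norms and applying Minkowski's inequality gives
\[
 \|\delta_t\|_{L^2(\nu_v)}\le\|\delta_t\|_{L^2(\mu_t)}+K_t\left(\int d_t^2\,\mathrm d\pi\right)^{1/2}.
\]
The first term on the right is $\epsilon_t$, since the $z$- and $z'$-marginals of $\pi$ are $\nu_v$ and $\mu_t$.

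Taking the infimum over $\pi$ yields the $W_2$ term. The finite second-moment assumption makes $W_2$ finite, which rules out the degenerate case $\infty$. If $\epsilon_t=\infty$, the bound is trivial. Measurability of $\delta_t$ follows from measurability of $S_t$ and $\widehat S_t$ and continuity of the norm.

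The main obstacle is the support condition. A coupling must put all its mass on the set where $\delta_t$ is Lipschitz. This holds because that set supports both measures: any coupling of $\nu_v$ and $\mu_t$ then charges only its product with itself, so $\pi$-a.e.\ pair lies in it. The statement's hypothesis is exactly what guarantees this.

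For the sufficient condition, on that set the reverse triangle inequality gives
\[
 |\delta_t(z)-\delta_t(z')|\le\|\widehat S_t(z)-\widehat S_t(z')\|+\|S_t(z)-S_t(z')\|\le(\widehat L_t+L_t)\,d_t(z,z').
\]
Hence one may take $K_t=L_t+\widehat L_t$. This step requires the set to lie inside $D_t$, where $S_t$ is defined, which I would note explicitly.
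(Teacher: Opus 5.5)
Your proposal is correct and follows essentially the same route as the paper. Both proofs reduce to the RMS bound of Proposition~\ref{prop:transfer-propagation}, use an arbitrary coupling with the pointwise Lipschitz inequality and Minkowski, take the infimum over couplings, and get $K_t=L_t+\widehat L_t$ from the reverse triangle inequality; your explicit argument that every coupling of $\nu_v$ and $\mu_t$ charges only $S\times S$ fills in a step the paper leaves implicit.
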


\begin{proof}
Fix an occurrence $v$ and write $t=t(v)$.  For any coupling
$(Z,Z')\sim\pi\in\Pi(\nu_v,\mu_t)$, the Lipschitz condition gives
$\delta_t(Z)\le\delta_t(Z')+K_td_t(Z,Z')$ almost surely.
Minkowski's inequality therefore yields
\[
 \|\delta_t\|_{L^2(\nu_v)}
 \le \epsilon_t+
 K_t\left(\int d_t(z,z')^2\,\mathrm d\pi(z,z')\right)^{1/2}.
\]
Taking the infimum over couplings proves the local transport bound;
existence of an optimal coupling is not needed.  Substitute these bounds
in Eq.~\eqref{eq:transfer-rms-propagation}.
Finally, the reverse triangle inequality gives
\[
 |\delta_t(z)-\delta_t(z')|
 \le\|\widehat S_t(z)-\widehat S_t(z')\|_{\mathsf X_t}
    +\|S_t(z)-S_t(z')\|_{\mathsf X_t},
\]
which proves the stated sufficient condition for $K_t$.
\end{proof}

An alternative is useful when the new assembly encounters the same local
regimes with different frequencies. A bounded density ratio limits how much
more often any regime can occur, giving the following transfer bound.

\begin{corollary}[Density-ratio transfer]
\label{cor:interface-density}
Under Proposition~\ref{prop:transfer-propagation}, suppose instead that
$\nu_v\ll\mu_{t(v)}$ and
$\mathrm d\nu_v/\mathrm d\mu_{t(v)}\le C_v<\infty$ almost everywhere.
Then
\begin{equation}
 \left(\mathbb E\|\widehat y-y\|_{\mathsf Y}^2\right)^{1/2}
 \le\sum_{v=1}^m w_v\sqrt{C_v}\,\epsilon_{t(v)}.
 \label{eq:interface-density-bound}
\end{equation}
No residual Lipschitz assumption is required for this corollary.
\end{corollary}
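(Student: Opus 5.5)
The argument plugs a change-of-measure estimate into Proposition~\ref{prop:transfer-propagation}. That proposition gives, almost surely and with no assumption on the input laws beyond the domain conditions,
\[
 \left(\mathbb E\|\widehat y-y\|_{\mathsf Y}^2\right)^{1/2}
 \le\sum_{v=1}^m w_v\|\delta_{t(v)}\|_{L^2(\nu_v)}.
\]
So it suffices to show, for each occurrence $v$ with $t=t(v)$, that $\|\delta_t\|_{L^2(\nu_v)}\le\sqrt{C_v}\,\epsilon_t$. Substituting these bounds term by term then gives Eq.~\eqref{eq:interface-density-bound}, because the weights $w_v$ are nonnegative.

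For the local step, we use that $\delta_t\ge0$ is measurable on $D_t$. Since $\nu_v\ll\mu_t$, the Radon--Nikodym derivative exists, and
\[
 \int\delta_t^2\,\mathrm d\nu_v=\int\delta_t^2\,\frac{\mathrm d\nu_v}{\mathrm d\mu_t}\,\mathrm d\mu_t\le C_v\int\delta_t^2\,\mathrm d\mu_t=C_v\epsilon_t^2 .
\]
Taking square roots gives the claim. No Lipschitz property of $\delta_t$ enters, which matches the remark in the statement.

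The only point needing care is a measure-theoretic one. The integral against $\mu_t$ must make sense wherever $\nu_v$ charges, so the support and domain conventions have to line up. The true inputs satisfy $z_v\in D_t$ almost surely, so $\nu_v$ is concentrated on $D_t$. The measure $\mu_t$ is supported on $D_t$ by assumption, and $\delta_t$ is defined on $D_t$. Both integrals are therefore over $D_t$, and the density bound holding $\mu_t$-almost everywhere is exactly what the displayed inequality needs. If $\epsilon_t=\infty$ the bound is trivial; otherwise every quantity involved is finite.
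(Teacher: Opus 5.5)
Your proof is correct and is the paper's argument: the change-of-measure estimate $\int\delta_t^2\,\mathrm d\nu_v\le C_v\epsilon_t^2$, then square roots and substitution into Eq.~\eqref{eq:transfer-rms-propagation} using $w_v\ge0$. One minor wording point: the RMS inequality you invoke is a statement about expectations, not an almost-sure one; only the pointwise bound Eq.~\eqref{eq:transfer-pointwise} holds almost surely.
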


\begin{proof}
For $t=t(v)$,
\[
 \int\delta_t^2\,\mathrm d\nu_v
 =\int\delta_t^2
       \frac{\mathrm d\nu_v}{\mathrm d\mu_t}\,\mathrm d\mu_t
 \le C_v\epsilon_t^2.
\]
Take square roots and apply
Eq.~\eqref{eq:transfer-rms-propagation}.
\end{proof}

These bounds permit new dependency graphs.  More precisely, if the same
module bank is used on a family of graphs and the right-hand side of
Eq.~\eqref{eq:interface-transport-bound} is at most $B$ for every graph,
the RMS bound $B$ holds uniformly on that family.  If a random graph has
conditional bound $B_G$, its unconditional RMS error is at most
$(\mathbb E_G B_G^2)^{1/2}$, by the law of total expectation.
The number of possible graphs does not appear separately; their effects
enter through local input laws and path weights.

\paragraph{Physical units and normalized module charts.}
Reuse can also exploit a shared coordinate system. Translating, rotating,
and scaling a dyad into a common frame lets one predictor handle many
placements. To propagate its error through the physical mechanism, we must
then account for the conversion back to world coordinates.
For a dyad, let $d(\theta)=\|p_2(\theta)-p_1(\theta)\|>0$, and let
$R(\theta)$ be the rotation whose first column is
$(p_2(\theta)-p_1(\theta))/d(\theta)$.  A normalized solver $s_t$ receives
the ratio paths $\rho_i=\ell_i/d$ and the branch.  The physical solver is
\[
 S_t(z)(\theta)=p_1(\theta)+d(\theta)R(\theta)
                      s_t(\chi(z))(\theta).
\]
Its learned counterpart uses the same reconstruction formula with
$\widehat s_t$.  At a true input, if
$\sup_\theta d(\theta)\le d_{\max}$, then
\begin{equation}
 \delta_t(z)\le d_{\max}
 \|\widehat s_t(\chi(z))-s_t(\chi(z))\|_\infty.
 \label{eq:transfer-chart}
\end{equation}
This follows pointwise from orthogonality of $R$ and then by taking a
supremum.  Thus the local term in
Eq.~\eqref{eq:transfer-rms-propagation} can alternatively be bounded by
\[
 d_{\max}\left[
 \overline\epsilon_t+
 \overline K_tW_2(\chi_\#\nu_v,\overline\mu_t)
 \right],
\]
where $\overline\mu_t$ is the training law of normalized inputs,
$\overline\epsilon_t$ is the normalized local RMS error, and
$\overline K_t$ bounds the normalized scalar residual's Lipschitz constant.
The proof is Eq.~\eqref{eq:transfer-chart} followed by the coupling argument
in Theorem~\ref{thm:interface-transport}.
Crucially, the edge gains $A_{vu}$ must still bound the complete physical
predictor: the chart, reference point, scale, and rotation also change when
parent predictions change.  For a $C^1$ path norm the reconstruction
derivatives must additionally be included; $d_{\max}$ alone is then
insufficient.

\subsubsection{Uniform trajectory error and normalized point clouds}

The learning experiment observes normalized curve samples, whereas the
composition bound first controls a trajectory at common driving phases.
The following results connect these descriptions in two steps: control the
effect of centering and scaling, then the effect of sampling each curve by
its own arc length. A lower bound on curve scale prevents normalization
from magnifying arbitrarily small positional errors.

\begin{corollary}[Uniform trajectory error]
\label{cor:interface-uniform}
Under Proposition~\ref{prop:transfer-propagation}, suppose
$\delta_t(z)\le e_t$ for every admissible true input in the evaluation
population.  Then
\begin{equation}
 \|\widehat y-y\|_{\mathsf Y}
 \le B:=\sum_v w_ve_{t(v)}
 \label{eq:interface-uniform}
\end{equation}
almost surely.  In particular, for $\mathsf Y=C([0,1];\mathbb R^2)$ with
the supremum norm this bounds the error at every common driving phase.
\end{corollary}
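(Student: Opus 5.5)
This corollary should follow from the pointwise statement in Proposition~\ref{prop:transfer-propagation}, Eq.~\eqref{eq:transfer-pointwise}, with no new analytic input. The plan is to replace each random local error $\delta_{t(v)}(z_v)$ by its deterministic ceiling $e_{t(v)}$ and to use the nonnegativity of the path weights $w_v$.

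First I would check that $w$ is nonnegative. The readout constants satisfy $c_v\ge0$, and $A$ is entrywise nonnegative and strictly lower triangular. Hence every power $A^j$ is nonnegative, and $w^\top=c^\top\sum_{j=0}^{m-1}A^j$ has nonnegative entries by Eq.~\eqref{eq:transfer-path-weights}.

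Second, I would restrict to the almost-sure event on which three things hold: the pointwise bound~\eqref{eq:transfer-pointwise} holds, $z_v\in D_{t(v)}$ for every $v$, and each $z_v$ belongs to the evaluation population. This is a finite intersection of probability-one events, so it still has probability one. On this event the hypothesis gives $\delta_{t(v)}(z_v)\le e_{t(v)}$ for each $v$. Multiplying by $w_v\ge0$ and summing over the $m$ occurrences gives $\|\widehat y-y\|_{\mathsf Y}\le\sum_v w_v e_{t(v)}=B$, which is Eq.~\eqref{eq:interface-uniform}.

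For the specialization $\mathsf Y=C([0,1];\mathbb R^2)$ with the supremum norm, I would just unfold the norm. It gives $\|\widehat y(\theta)-y(\theta)\|_2\le B$ for every $\theta\in[0,1]$ simultaneously. This holds at every common driving phase because $\theta$ is the shared driving parameter, not a per-curve arc-length coordinate, as stipulated in the setup.

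The only delicate step is bookkeeping. The uniform bound $e_t$ must be applied only at true admissible inputs $z_v$, never at hybrid inputs, and the argument does exactly this because Proposition~\ref{prop:transfer-propagation} has already absorbed the hybrid comparisons into the gains $A_{vu}$. The ``almost surely'' must also be inherited from the domain requirement rather than claimed everywhere. I expect no real obstacle beyond stating these two points explicitly.
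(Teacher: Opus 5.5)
Your proposal is correct and follows the paper's proof, which consists of substituting the uniform local bounds $\delta_{t(v)}(z_v)\le e_{t(v)}$ into the pointwise bound~\eqref{eq:transfer-pointwise}. Your checks that $w\ge0$, and that the bound is applied only at true inputs on a probability-one event, are bookkeeping the paper leaves implicit.
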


\begin{proof}
Substitute the uniform local bounds in
Eq.~\eqref{eq:transfer-pointwise}.
\end{proof}

For an ordered cloud $P=(p_1,\ldots,p_N)$ in $\mathbb R^2$, write
\[
 \|P\|_N^2=\frac1N\sum_{i=1}^N\|p_i\|_2^2,
 \qquad
 CP=(p_i-\overline p)_{i=1}^N,
 \qquad
 \sigma(P)=\|CP\|_N,
\]
and define the centered, RMS-normalized cloud
$\mathcal N(P)=CP/\sigma(P)$ when $\sigma(P)>0$.
We use the following explicit symmetric squared Chamfer convention:
\begin{equation}
 \operatorname{Ch}_2(P,Q)=\frac1{2N}
 \left(\sum_{i=1}^N\min_j\|p_i-q_j\|_2^2+
       \sum_{j=1}^N\min_i\|q_j-p_i\|_2^2\right).
 \label{eq:transfer-chamfer}
\end{equation}
For the convention that sums the two directional means, the following
Chamfer upper bounds are multiplied by two.

\begin{proposition}[Stable normalization and Chamfer comparison]
\label{prop:transfer-normalization}
If $\sigma(P),\sigma(Q)\ge s_0>0$, then
\begin{equation}
 \operatorname{Ch}_2\bigl(\mathcal N(P),\mathcal N(Q)\bigr)
 \le\|\mathcal N(P)-\mathcal N(Q)\|_N^2
 \le\frac{\|P-Q\|_N^2}{s_0^2}.
 \label{eq:transfer-normalized-chamfer}
\end{equation}
Consequently, sampling true and predicted trajectories at the same $N$
phases, under Corollary~\ref{cor:interface-uniform} and these scale floors,
gives a normalized Chamfer bound $B^2/s_0^2$.
If only $\sigma(P)\ge s_0$ is known and $\|P-Q\|_N\le B<s_0$, then $Q$
is also nonconstant and the bound
$4B^2/s_0^2$ holds instead.
\end{proposition}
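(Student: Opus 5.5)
The plan is to prove the two inequalities in Eq.~\eqref{eq:transfer-normalized-chamfer} separately, viewing an ordered cloud as a vector in $\mathbb R^{2N}$ with the inner product that induces $\|\cdot\|_N$. Write $a=CP$ and $b=CQ$, so that $\sigma(P)=\|a\|_N$, $\sigma(Q)=\|b\|_N$, $\mathcal N(P)=a/\|a\|_N$ and $\mathcal N(Q)=b/\|b\|_N$.

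For the first inequality, bound each directed minimum in Eq.~\eqref{eq:transfer-chamfer} by the term with the matching index. That is, $\min_j\|p_i-q_j\|^2\le\|p_i-q_i\|^2$, and symmetrically in the other direction. Summing, the two directional sums are each at most $N\|\mathcal N(P)-\mathcal N(Q)\|_N^2$. The prefactor $1/(2N)$ then gives exactly $\|\mathcal N(P)-\mathcal N(Q)\|_N^2$. This uses only that both clouds are indexed by the same $N$ phases.

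For the second inequality I would use two facts.
\begin{itemize}
\item Centering $C$ is the orthogonal projection (in $\|\cdot\|_N$) onto clouds with zero mean, so $\|a-b\|_N\le\|P-Q\|_N$.
\item For nonzero vectors $a,b$, write $\kappa$ for the cosine of the angle between them. Then
\[
\Bigl\|\tfrac{a}{\|a\|}-\tfrac{b}{\|b\|}\Bigr\|^2=2-2\kappa,
\qquad
\|a-b\|^2=\|a\|^2+\|b\|^2-2\|a\|\|b\|\kappa\ge 2\|a\|\|b\|(1-\kappa).
\]
Hence $\|\mathcal N(P)-\mathcal N(Q)\|_N^2\le \|a-b\|_N^2/(\|a\|_N\|b\|_N)$.
\end{itemize}
With both scale floors $\|a\|_N,\|b\|_N\ge s_0$, this gives $\|P-Q\|_N^2/s_0^2$.

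For the "consequently" clause, sample both trajectories at the same $N$ driving phases. The RMS of the pointwise differences is at most their supremum, which Corollary~\ref{cor:interface-uniform} bounds by $B$. Hence $\|P-Q\|_N\le B$, and chaining with the two inequalities gives the normalized Chamfer bound $B^2/s_0^2$.

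For the one-sided case, assume only $\sigma(P)\ge s_0$ and $\|P-Q\|_N\le B<s_0$. First, $\sigma(Q)=\|b\|_N\ge\|a\|_N-\|a-b\|_N\ge s_0-B>0$, so $Q$ is nonconstant and $\mathcal N(Q)$ is defined. Next I would use the alternative normalization estimate
\[
\Bigl\|\tfrac{a}{\|a\|}-\tfrac{b}{\|b\|}\Bigr\|\le\frac{2\|a-b\|}{\max(\|a\|,\|b\|)}.
\]
This follows by inserting $b/\|a\|$: the distance $\|a/\|a\|-b/\|a\|\|$ is at most $\|a-b\|/\|a\|$, and the distance $\|b/\|a\|-b/\|b\|\|$ equals $\bigl|\|b\|-\|a\|\bigr|/\|a\|$, which is at most $\|a-b\|/\|a\|$. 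By symmetry, dividing by the larger norm is also allowed. Since $\max(\|a\|_N,\|b\|_N)\ge\|a\|_N\ge s_0$, squaring yields $4B^2/s_0^2$, and the Chamfer inequality from the first step transfers this bound to $\operatorname{Ch}_2$.

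The main obstacle is the geometric normalization inequality with constant $1$ in the two-floor case. The naive triangle-inequality route only gives the factor $2$, so the cosine identity is the step where care is needed.
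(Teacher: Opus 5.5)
Your proposal is correct and follows the paper's proof essentially step by step: same-index matching for the Chamfer inequality, centering as an orthogonal projection, the reverse triangle inequality for $\sigma(Q)\ge s_0-B$, and the same $2\|a-b\|/\|a\|$ estimate in the one-sided case. Your cosine/AM--GM bound is the paper's identity $\|a-b\|^2=(r-s)^2+rs\,\|a/r-b/s\|^2$ with the nonnegative $(r-s)^2$ term dropped, written in different notation.
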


\begin{proof}
The centering map $C$ is an orthogonal projection for the inner product
$\langle P,Q\rangle_N=N^{-1}\sum_i p_i^\top q_i$, and hence
$\|C(P-Q)\|_N\le\|P-Q\|_N$.  For nonzero vectors $a,b$ in any real
Hilbert space, put $r=\|a\|$ and $s=\|b\|$.  Direct expansion gives
\[
 \|a-b\|^2=(r-s)^2+
 rs\left\|\frac a r-\frac b s\right\|^2.
\]
Apply this identity with $a=CP$, $b=CQ$ and $r,s\ge s_0$ to obtain the
second inequality in Eq.~\eqref{eq:transfer-normalized-chamfer}.
In each directed Chamfer term the point with the same index is an
available match, proving the first inequality.  Shared-phase sampling
satisfies $\|P-Q\|_N\le\|y-\widehat y\|_\infty\le B$.

For the final assertion, the reverse triangle inequality gives
$\sigma(Q)\ge\sigma(P)-\|C(P-Q)\|_N\ge s_0-B>0$.
For arbitrary nonzero $a,b$,
\[
 \left\|\frac a{\|a\|}-\frac b{\|b\|}\right\|
 \le\frac{\|a-b\|+|\|a\|-\|b\||}{\|a\|}
 \le\frac{2\|a-b\|}{\|a\|}.
\]
Use $a=CP$, the true scale floor, and the preceding Chamfer comparison.
\end{proof}

The same proof gives an expected normalized Chamfer bound
$B_{\mathrm{rms}}^2/s_0^2$ from any supremum-path RMS bound
$\bigl(\mathbb E\|y-\widehat y\|_\infty^2\bigr)^{1/2}
\le B_{\mathrm{rms}}$, provided both cloud scale floors hold almost surely.
This comparison uses points sampled at the same driving phases.
With independent arc-length sampling, corresponding samples can occur at
different phases. Controlling the derivative error also controls that
displacement in phase; a positive speed floor makes the arc-length maps
stably invertible. These are the additional conditions in the next result.

\begin{proposition}[An explicit arc-length resampling condition]
\label{prop:transfer-arclength}
Let $y,\widehat y\in C^1([0,1];\mathbb R^2)$ satisfy
\[
 0<a\le\|y'(\theta)\|_2,\|\widehat y'(\theta)\|_2\le b<\infty,
 \qquad
 \|y-\widehat y\|_\infty\le E_0,
 \quad
 \|y'-\widehat y'\|_\infty\le E_1.
\]
Let $s_y(\theta)$ and $s_{\widehat y}(\theta)$ be cumulative arc length
divided by the respective total lengths, each starting at phase zero.
For any common fractions $u_1,\ldots,u_N\in[0,1]$, set
\[
 p_i=y(s_y^{-1}(u_i)),\qquad
 q_i=\widehat y(s_{\widehat y}^{-1}(u_i)).
\]
Then
\begin{equation}
 \|P-Q\|_N\le E_0+2(b/a)^2E_1=:E_{\rm arc}.
 \label{eq:transfer-arclength-bound}
\end{equation}
If both sampled clouds have RMS scale at least $s_0$, their normalized
squared Chamfer distance is at most $E_{\rm arc}^2/s_0^2$.
\end{proposition}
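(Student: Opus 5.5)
The plan is to split each sample displacement into a positional part and a phase-mismatch part. Write $\phi=s_y^{-1}$ and $\widehat\phi=s_{\widehat y}^{-1}$. Then for each $i$
\[
 \|p_i-q_i\|\le\|y(\phi(u_i))-\widehat y(\phi(u_i))\|+\|\widehat y(\phi(u_i))-\widehat y(\widehat\phi(u_i))\|
 \le E_0+b\,|\phi(u_i)-\widehat\phi(u_i)|,
\]
using $\|y-\widehat y\|_\infty\le E_0$ for the first term and the speed ceiling $\|\widehat y'\|\le b$ (mean value inequality along the curve) for the second. Because the resulting bound is uniform in $i$, it also bounds the RMS quantity $\|P-Q\|_N$. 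It therefore suffices to show
\[
 \sup_u|\phi(u)-\widehat\phi(u)|\le 2bE_1/a^2 .
\]

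First, compare the normalized arc-length maps themselves. Let $\Lambda=\int_0^1\|y'\|$ and $\widehat\Lambda=\int_0^1\|\widehat y'\|$. Both lie in $[a,b]$, and the reverse triangle inequality gives $|\Lambda-\widehat\Lambda|\le E_1$. Writing $S(\theta)=\int_0^\theta\|y'\|$ and similarly $\widehat S$, we also have $|S(\theta)-\widehat S(\theta)|\le\theta E_1\le E_1$. Then
\[
 |s_y(\theta)-s_{\widehat y}(\theta)|\le\frac{|S-\widehat S|}{\Lambda}+\widehat S\Big|\frac1\Lambda-\frac1{\widehat\Lambda}\Big|\le\frac{E_1}{a}+\frac{E_1}{a}=\frac{2E_1}{a},
\]
where the second term uses $\widehat S\le\widehat\Lambda$, so that $\widehat S\,|\Lambda-\widehat\Lambda|/(\Lambda\widehat\Lambda)\le E_1/\Lambda$.

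Second, invert. The derivative of $s_{\widehat y}$ is $\|\widehat y'\|/\widehat\Lambda\ge a/b$, so $s_{\widehat y}$ is strictly increasing and bi-Lipschitz, and its inverse is $(b/a)$-Lipschitz. Set $\theta=\phi(u)$ and $\widehat\theta=\widehat\phi(u)$. Then $s_{\widehat y}(\widehat\theta)=u=s_y(\theta)$, so
\[
 |\theta-\widehat\theta|\le\frac ba\,|s_{\widehat y}(\theta)-s_{\widehat y}(\widehat\theta)|=\frac ba\,|s_{\widehat y}(\theta)-s_y(\theta)|\le\frac ba\cdot\frac{2E_1}{a}.
\]
Multiplying by $b$ gives the claimed term $2(b/a)^2E_1$.

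The normalized Chamfer statement then follows directly from Proposition~\ref{prop:transfer-normalization} applied to the clouds $P,Q$ with scale floor $s_0$. The main point needing care is the uniform comparison of the two normalized arc-length maps in the first step: the normalizations by $\Lambda$ and $\widehat\Lambda$ have to be handled so that the constant comes out as $2E_1/a$ and not something larger. The rest is mean-value bookkeeping.
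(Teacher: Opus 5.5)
Your proposal is correct and follows the paper's proof essentially step for step. It uses the same split into the position error $E_0$ at the true inverse phase plus $b$ times the phase mismatch, the same $2E_1/a$ bound on $\|s_y-s_{\widehat y}\|_\infty$ (reverse triangle inequality together with $\widehat S\le\widehat\Lambda$), the same $(b/a)$-Lipschitz inversion at a common fraction $u$, and the same final appeal to Proposition~\ref{prop:transfer-normalization}.
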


\begin{proof}
Write $\ell_y(\theta)=\int_0^\theta\|y'(r)\|_2\,\mathrm dr$,
$L_y=\ell_y(1)$, and define their predicted counterparts analogously.
The reverse triangle inequality for speeds implies
\[
 \|\ell_y-\ell_{\widehat y}\|_\infty\le E_1,
 \qquad |L_y-L_{\widehat y}|\le E_1.
\]
Since $L_y,L_{\widehat y}\in[a,b]$ and
$\ell_{\widehat y}(\theta)\le L_{\widehat y}$,
\[
 \|s_y-s_{\widehat y}\|_\infty
 \le\frac{E_1}{L_y}
    +\frac{|L_y-L_{\widehat y}|}{L_y}
 \le\frac{2E_1}{a}.
\]
Both normalized arc-length maps have derivative at least $a/b$ and are
strictly increasing bijections of $[0,1]$.  Their inverses are therefore
$b/a$-Lipschitz.  Comparing their values at any common fraction $u$ gives
\[
 |s_y^{-1}(u)-s_{\widehat y}^{-1}(u)|
 \le\frac b a\|s_y-s_{\widehat y}\|_\infty
 \le\frac{2bE_1}{a^2}.
\]
Use the position error $E_0$ at the true inverse phase, and then the
$b$-Lipschitz continuity of $\widehat y$, to obtain
$\|p_i-q_i\|_2\le E_0+2b^2E_1/a^2$ for every $i$.
Take the cloud RMS norm and apply
Proposition~\ref{prop:transfer-normalization}.
\end{proof}

For example, the abstract propagation result may be instantiated in
$C^1$ path spaces with norm
$\|f\|_{C^1}=\max\{\|f\|_\infty,\|f'\|_\infty\}$, provided the local
domain, error, and gain assumptions hold in that norm.  A uniform output
bound $B$ then supplies $E_0,E_1\le B$ in
Proposition~\ref{prop:transfer-arclength}.  Its nonzero-speed condition
states explicitly the additional regularity used by arc-length inversion.

\paragraph{What the bound makes visible.}
The local error, distribution shift, and path weight answer three practical
questions: how accurately is a conditional rule learned, how familiar are
its inputs in a new assembly, and how much does its error matter to the
output? They identify where additional training coverage could be useful:
unfamiliar interface conditions with large downstream influence. The next
two constructions examine that influence directly. They show why small,
familiar module types can still form a sensitive mechanism.

%% file: appendices/theory/stability.tex

\subsection{Local regularity does not bound global configuration sensitivity}
\label{app:theory-dyad-sensitivity}

The transfer bound weights local errors by their downstream influence.
Can that influence be inferred from how well-conditioned each module is
in isolation? A dyad gives a concrete reason to look further. Its output
joint moves tangentially to the circle around its fixed anchor. The next
dyad receives that motion in a particular direction, which determines how
much of its potential amplification is realized.

The construction below arranges these directions in two ways. One chain
passes the crank's instantaneous displacement through unchanged; the other
doubles it at every stage. Both chains have the same graph, unit bar
lengths, and local Jacobian singular values. Even their local solution
derivatives have the same operator norm. The difference lies in how the
responses line up across interfaces.

Let $e_x=(1,0)^\top$ and $e_y=(0,1)^\top$.  A dyad with internal joint $q$,
moving boundary joint $p$, and fixed boundary joint $a$ has constraints
\begin{equation}
 f(q,p;a)=\frac12
 \begin{pmatrix}
 \lVert q-p\rVert^2-1\\
 \lVert q-a\rVert^2-1
 \end{pmatrix}=0.
 \label{eq:theory-unit-dyad-constraints}
\end{equation}
At a feasible configuration write $u=q-p$ and $v=q-a$, so that $u$ and $v$
are unit vectors.  The Jacobian with respect to the unknown joint is
\begin{equation}
 J=\partial_q f=
 \begin{pmatrix}u^\top\\v^\top\end{pmatrix}.
 \label{eq:theory-dyad-normalized-jacobian}
\end{equation}
The factor $1/2$ in~\eqref{eq:theory-unit-dyad-constraints} fixes the
constraint scaling: both rows of $J$ have unit Euclidean norm.  Throughout
this subsection, a local Jacobian means this normalized Jacobian.

\begin{theorem}[Equal local regularity, exponentially different global sensitivity]
\label{thm:theory-dyad-sensitivity-separation}
For every integer $m\geq 1$, there are two grounded planar bar--joint
mechanisms, denoted $\mathcal M_m^{\mathrm{constant}}$ and
$\mathcal M_m^{\mathrm{amplifying}}$, with the following properties.
\begin{enumerate}
 \item Both mechanisms have the same incidence graph: a unit crank
 drives $q_0$, and, for $i=1,\ldots,m$, a two-bar dyad determines $q_i$
 from $q_{i-1}$ and a fixed anchor $a_i$.  Every moving bar has unit
 length.  The anchors lie on one rigid ground body.  The two mechanisms
 have the same Assur solve graph and a chain of $m$ successive dyads.
 \item Both admit a regular analytic assembly branch through their
 reference configurations for $|\theta|<\delta_m$, for some
 $\delta_m>0$, under the same unit-speed-at-reference crank input
 \begin{equation}
 q_0(\theta)=(-1+\cos\theta,\sin\theta)^\top.
 \label{eq:theory-common-crank}
 \end{equation}
 \item At $\theta=0$, every dyad in both mechanisms has the same two
 local Jacobian singular values,
 \begin{equation}
 \sigma_-=\sqrt{1-\frac{2}{\sqrt5}},\qquad
 \sigma_+=\sqrt{1+\frac{2}{\sqrt5}}.
 \label{eq:theory-common-singular-values}
 \end{equation}
 The operator norm of each local solution derivative with respect to
 its moving boundary is also the same, namely $\sqrt5$.
 \item Their endpoint configuration sensitivities at that input differ
 exponentially:
 \begin{equation}
 \left\lVert\frac{d q_m^{\mathrm{constant}}}{d\theta}(0)\right\rVert=1,
 \qquad
 \left\lVert\frac{d q_m^{\mathrm{amplifying}}}{d\theta}(0)\right\rVert=2^m.
 \label{eq:theory-dyad-sensitivity-separation}
 \end{equation}
\end{enumerate}
\end{theorem}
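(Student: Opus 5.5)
The plan is to construct both chains explicitly, using the implicit-function derivative of a single unit dyad. The global sensitivities then follow by induction along the chain.

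\emph{Local derivative of one dyad.} Differentiate $f(q,p;a)=0$ from Eq.~\eqref{eq:theory-unit-dyad-constraints} with $a$ fixed. Since $\partial_p f=\begin{pmatrix}-u^\top\\0\end{pmatrix}$, whenever $J$ is invertible we get
\[
 \dot q \;=\; J^{-1}\begin{pmatrix}u^\top\dot p\\0\end{pmatrix} \;=\; (u^\top\dot p)\,w,
 \qquad u^\top w=1,\quad v^\top w=0 .
\]
Write $c=u^\top v=\cos\varphi$ for the angle $\varphi$ between $u$ and $v$. Then $JJ^\top=\begin{pmatrix}1&c\\c&1\end{pmatrix}$, so the singular values of $J$ are $\sqrt{1\pm c}$. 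The vector $w$ is perpendicular to $v$ and has $\|w\|=1/|\sin\varphi|$.

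We fix $c=2/\sqrt5$ at every dyad of both mechanisms. This yields the singular values~\eqref{eq:theory-common-singular-values}. It also gives $\sin\varphi=1/\sqrt5$, so the derivative with respect to the moving boundary, the rank-one map $w u^\top$, has operator norm $\|w\|\|u\|=\sqrt5$. This settles item~3, and it is the same for both mechanisms by design.

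The only remaining freedom is the angle $\psi_i$ between the incoming velocity $\dot q_{i-1}$ and $u_i=q_i-q_{i-1}$. It gives
\[
\|\dot q_i\|=\sqrt5\,|\cos\psi_i|\,\|\dot q_{i-1}\| .
\]
For $\mathcal M_m^{\mathrm{constant}}$ we choose $|\cos\psi_i|=1/\sqrt5$, which gives gain $1$ per stage. For $\mathcal M_m^{\mathrm{amplifying}}$ we choose $|\cos\psi_i|=2/\sqrt5$, which gives gain $2$ per stage.

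\emph{Recursive placement.} Start from the crank~\eqref{eq:theory-common-crank}, for which $q_0(0)=0$ and $\dot q_0(0)=e_y$ has unit speed. Inductively, suppose $q_{i-1}$ and its velocity direction $d_{i-1}=\dot q_{i-1}/\|\dot q_{i-1}\|$ are known. Choose the unit vector $u_i$ at the prescribed angle $\psi_i$ to $d_{i-1}$ and set $q_i=q_{i-1}+u_i$. Then choose the unit vector $v_i$ at angle $\varphi$ to $u_i$ and set $a_i=q_i-v_i$. The new velocity direction is then $\pm v_i^\perp$, which is known, and the induction continues.

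At each stage there are finitely many discrete choices, namely the sign of $\psi_i$ and the side of $\varphi$. We use them to keep every anchor $a_i$ away from all joints and from the earlier anchors, so the construction is a genuine bar--joint mechanism. All the $a_i$ are fixed points, so declaring them to lie on one ground body is harmless.

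Items~1 and~2 then follow. The incidence graph is the same chain of a crank and $m$ dyads in both cases, and so is the Assur solve graph (ground/driver $\to G_1\to\cdots\to G_m$). Since $\det J_i=\pm\sin\varphi\neq0$ at every stage, the analytic implicit function theorem, applied stage by stage, gives a regular analytic branch on some interval $|\theta|<\delta_m$. We take $\delta_m$ to be the minimum of the finitely many stage radii.

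\emph{Endpoint sensitivity.} Item~4 follows by multiplying the per-stage gains: $\|\dot q_m\|=1^m=1$ for the constant chain and $2^m$ for the amplifying chain, both at $\theta=0$.

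\emph{Main obstacle.} I expect the main difficulty to be the bookkeeping in the recursive placement. We must check that the prescribed angles can always be realized simultaneously. This is easy because $u_i$ and $v_i$ are free unit vectors. We must also check that the discrete sign choices avoid coincident joints or degenerate anchors. I would handle both by giving explicit rotation formulas, with $u_i$ obtained by rotating $d_{i-1}$ by $\pm\psi_i$ and $v_i$ by rotating $u_i$ by $\pm\varphi$. If needed, I would add a generic-position argument: each coincidence is a single algebraic condition, and two sign choices are available at every step.
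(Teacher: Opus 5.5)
Your argument is essentially the paper's. It uses the same implicit-differentiation transfer $dq=(u^\top dp)\,t/(u^\top t)$ with $t\perp v$, and the same choice $|u_i^\top v_i|=2/\sqrt5$ at every dyad to equalize the singular values and the transfer norm $\sqrt5$. It also uses incoming cosines $1/\sqrt5$ versus $2/\sqrt5$ for per-stage gains $1$ and $2$, and the analytic implicit function theorem for the branch.

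The one step that does not go through as you sketch it is distinctness of the joints. Item~1 needs it, because a coincidence $a_i=q_j$ or $a_i=a_j$ changes the incidence graph. Once the crank and all angle magnitudes are fixed, no continuous parameter is left, so a generic-position argument has nothing to perturb. Counting your boundedly many discrete options at step $i$ against the $2i$ earlier points does not close the gap either, since a few earlier points could in principle occupy every candidate position for $q_i$.

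The paper avoids this by fixing the choices explicitly, and your recursion reproduces its chains if you make the choices periodic. Repeating the same relative rotations gives the translated constant chain $q_i^*=i(2e_x+e_y)/\sqrt5$, $a_i=q_i^*-e_x$; in fact $q_i(\theta)=q_i^*+q_0(\theta)$ exactly. A period-two alternation, $u_i=(e_x+2e_y)/\sqrt5$ for odd $i$ and $u_i=(2e_x+e_y)/\sqrt5$ for even $i$, gives the amplifying chain. Its internal joints lie in $(1/\sqrt5)\mathbb Z^2$ and advance strictly in both coordinates. Distinctness then follows because every $q_i^*-a_i$ is a coordinate unit vector and $\sqrt5$ is irrational. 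With your convention $u_i^\top v_i=+2/\sqrt5$, the odd anchors are $q_i^*-e_y$ rather than the paper's $q_i^*+e_y$; only $|u_i^\top v_i|$ enters the singular values, so either works.
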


\begin{proof}
We first derive the local transfer formula, then specify the two grounded
mechanisms explicitly.

\paragraph{The transfer through one dyad.}
Let $R(x,y)^\top=(-y,x)^\top$ denote counterclockwise rotation through
$\pi/2$, and put $t=Rv$.  Because $a$ is fixed, differentiating
\eqref{eq:theory-unit-dyad-constraints} gives
\begin{equation}
 u^\top(dq-dp)=0,\qquad v^\top dq=0.
 \label{eq:theory-dyad-differential}
\end{equation}
Thus $dq$ is a multiple of $t$.  Whenever $u^\top t\neq0$, solving these
two equations yields
\begin{equation}
 dq=A\,dp,\qquad
 A=\frac{t u^\top}{u^\top t}.
 \label{eq:theory-dyad-transfer}
\end{equation}
In particular, if the preceding joint moves with differential
$dp=t_{\mathrm{in}}\,ds$, the scalar gain into the outgoing unit tangent
$t$ is
\begin{equation}
 dq=t\,\frac{u^\top t_{\mathrm{in}}}{u^\top t}\,ds.
 \label{eq:theory-tangent-gain}
\end{equation}

\paragraph{A chain with constant endpoint sensitivity.}
At the reference configuration set $q_0^*=0$ and, for every
$i=1,\ldots,m$, define
\begin{equation}
 u_i=\frac{2e_x+e_y}{\sqrt5},\qquad
 v_i=e_x,\qquad t_i=e_y,\qquad
 q_i^*=q_{i-1}^*+u_i,\qquad a_i=q_i^*-v_i.
 \label{eq:theory-constant-chain}
\end{equation}
These equations specify both the joint coordinates and the fixed
anchors.  They give $\lVert q_i^*-q_{i-1}^*\rVert=
\lVert q_i^*-a_i\rVert=1$, as required.  The crank is grounded at
$a_0=-e_x$, so~\eqref{eq:theory-common-crank} passes through $q_0^*=0$
with derivative $q_0'(0)=e_y$.  In fact this chain has the explicit
assembly
\begin{equation}
 q_i(\theta)=q_i^*+q_0(\theta).
 \label{eq:theory-constant-chain-assembly}
\end{equation}
The interjoint bars retain vector $u_i$, while
$q_i(\theta)-a_i=(\cos\theta,\sin\theta)^\top$.  Equation
\eqref{eq:theory-constant-chain-assembly} therefore satisfies every bar
constraint and gives $q_i'(0)=e_y$ for all $i$.

\paragraph{A chain with amplifying endpoint sensitivity.}
Again set $q_0^*=0$.  For odd $i$, set
\begin{equation}
 t_i=e_x,\qquad v_i=-e_y,\qquad
 u_i=\frac{e_x+2e_y}{\sqrt5};
 \label{eq:theory-amplifying-odd}
\end{equation}
for even $i$, set
\begin{equation}
 t_i=e_y,\qquad v_i=e_x,\qquad
 u_i=\frac{2e_x+e_y}{\sqrt5}.
 \label{eq:theory-amplifying-even}
\end{equation}
Define the coordinates and anchors recursively by
\begin{equation}
 q_i^*=q_{i-1}^*+u_i,\qquad a_i=q_i^*-v_i.
 \label{eq:theory-amplifying-coordinates}
\end{equation}
Again all moving bars have unit length.  The internal joints advance
strictly in both coordinate directions.  Their explicit coordinates are
\begin{equation}
 q_{2r}^*=\frac{3r}{\sqrt5}(e_x+e_y),\qquad
 q_{2r+1}^*=\frac{(3r+1)e_x+(3r+2)e_y}{\sqrt5}.
 \label{eq:theory-amplifying-explicit-coordinates}
\end{equation}
Put $t_0=e_y$, the crank tangent at $\theta=0$.  At every step,
\begin{equation}
 u_i^\top t_i=\frac1{\sqrt5},\qquad
 u_i^\top t_{i-1}=\frac2{\sqrt5}.
 \label{eq:theory-amplifying-dot-products}
\end{equation}
The transfer formula~\eqref{eq:theory-tangent-gain} therefore gives,
inductively,
\begin{equation}
 q_i'(0)=2^i t_i.
 \label{eq:theory-amplifying-velocities}
\end{equation}

\paragraph{Grounding and the common constraint structure.}
In each mechanism all $a_i$, including $a_0$, are fixed points carried
by a single rigid ground body.  Ground-anchor positions may differ
between the two mechanisms; their ground-link incidence is identical.
All internal joints are distinct.  There are also no accidental
identifications between an anchor and an internal joint: internal-joint
coordinates lie in $(1/\sqrt5)\mathbb Z^2$, whereas the difference
$q_i^*-a_i$ is a nonzero coordinate unit vector.  Irrationality of
$\sqrt5$ excludes equality with any other internal joint.  The same
argument, together with distinct internal coordinates, distinguishes
the anchors from each other.  The two constraints of each dyad are
independent at the reference configuration, as verified below.  With
its two boundary joints pinned, a dyad is therefore an irreducible
isostatic module.  Each module uses only the preceding moving joint
and its ground anchor.  If the source group includes all ground pins,
the full solve graph also has source-to-module edges; these are the
same in both constructions.

\paragraph{Equal local singular values and transfer norms.}
For any of the constructed dyads,
\begin{equation}
 J_iJ_i^\top=
 \begin{pmatrix}
 1&u_i^\top v_i\\
 u_i^\top v_i&1
 \end{pmatrix},\qquad
 \left|u_i^\top v_i\right|=\frac2{\sqrt5}.
 \label{eq:theory-common-gram-matrix}
\end{equation}
Its eigenvalues are $1\pm2/\sqrt5$, proving
\eqref{eq:theory-common-singular-values}.  In particular,
$\det J_i\neq0$ and both reference assemblies are regular.  Since
$u_i$ and $t_i$ are unit vectors and $|u_i^\top t_i|=1/\sqrt5$,
the rank-one formula~\eqref{eq:theory-dyad-transfer} also gives
$\lVert A_i\rVert_2=\sqrt5$ for every module in both mechanisms.

\paragraph{Existence of the assembly branches.}
Substitute the analytic crank~\eqref{eq:theory-common-crank} into all
dyad constraints, and treat $(q_1,\ldots,q_m)$ as the unknown vector.
The Jacobian of the resulting system at the reference configuration
is block lower triangular, with invertible diagonal blocks
$J_1,\ldots,J_m$.  The analytic implicit function theorem gives a
unique analytic assembly branch through that configuration on an
open interval about $\theta=0$.  Continuity allows the interval to be
reduced so that every dyad remains regular and all distinct joints
remain distinct.  Taking the smaller interval for the two mechanisms
gives the stated $\delta_m>0$.  This is an existence statement for
each finite $m$; it does not require an interval of uniform width as
$m$ grows.  The derivatives computed above prove
\eqref{eq:theory-dyad-sensitivity-separation} and complete the proof.
\end{proof}

The local operator norm records the largest possible response over input
directions. Composition selects a particular direction at each stage.
Theorem~\ref{thm:theory-dyad-sensitivity-separation} shows that these choices
can make the endpoint sensitivity constant or exponential even when all
local norms agree. Geometric interface information therefore explains
differences that the solve graph and local conditioning alone leave open.
Here sensitivity refers to the configuration map
$\theta\mapsto q_m(\theta)$; its relation to a curve observation also depends
on the readout and sampling, as formalized in
Appendix~\ref{app:interface-transfer}.

%% file: appendices/theory/singular-depth.tex

\subsection{Repeated tangencies reduce configuration regularity}
\label{app:theory-singular-depth}

The preceding chains remain regular near their reference poses: their
responses have finite derivatives, although those derivatives can become
large with depth. At a circle tangency, the form of the response itself
changes. Moving two tangent circles slightly toward one another opens an
intersection height proportional to the square root of the displacement.
If that height supplies the displacement for the next dyad, two stages
produce a fourth-root response, three an eighth-root response, and so on.

The theorem realizes this repeated square-root effect with unit bars.
The motion remains continuous at the limiting pose, but becomes progressively
less smooth: a small change in the driving angle can produce a much larger
change in the final joint position. The exponent $2^{-m}$ states precisely
how this regularity deteriorates with the number of successive tangencies.

\begin{theorem}[Depth-dependent loss of regularity at a singular assembly]
\label{thm:theory-singular-depth}
For every integer $m\geq1$, there is a grounded planar mechanism formed
by a unit crank followed by $m$ unit-bar dyads, and a continuous assembly
branch $q_1(\theta),\ldots,q_m(\theta)$ for $0\leq\theta<\eta_m$, with
$\eta_m>0$, such that every dyad is regular for $0<\theta<\eta_m$ and
\begin{equation}
 \lim_{\theta\downarrow0}
 \frac{\lVert q_m(\theta)-q_m(0)\rVert}
 {\theta^{2^{-m}}}=1.
 \label{eq:theory-singular-depth-asymptotic}
\end{equation}
Consequently, the endpoint configuration map is not H\"older continuous
with exponent $\beta>2^{-m}$ at $\theta=0$.  In particular, over
mechanisms with arbitrarily many dyads, no positive depth-independent
H\"older exponent holds at all singular assemblies.
\end{theorem}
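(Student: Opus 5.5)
The plan is to build the mechanism explicitly, stacking $m$ unit-bar dyads so that each dyad sits exactly at a circle tangency in the reference pose. The incoming joint of each stage will then move, to leading order, straight toward that stage's anchor. The elementary fact driving everything is the following. If a unit-bar dyad has boundary joints at distance $d=2-\delta$, its internal joint lies at the midpoint plus a normal offset of height
\[
h=\sqrt{1-(d/2)^2}=\sqrt{\delta-\delta^2/4}=\sqrt{\delta}\,(1+o(1)).
\]
So the output moves by about $\sqrt\delta$ in the direction normal to the boundary chord. Feeding that normal motion into the next dyad, aligned with its chord, makes the next distance deficit equal to the previous height to first order. The square roots therefore compose to $\theta^{2^{-m}}$.

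Concretely, I would take the crank $q_0(\theta)=(\sin\theta,\,1-\cos\theta)^\top$ about the pivot $(0,1)^\top$, which has unit length and tangent $e_x$ at $\theta=0$. I put the first anchor at $a_1=2e_x$, so that $\delta_0(\theta):=2-\|a_1-q_0(\theta)\|=\theta+O(\theta^2)$. The reference pose is the tangent one, $q_1^*=e_x$, and the branch is chosen by taking the positive normal $n_1$ (say $e_y$). Recursively, I set $q_k^*=$ the midpoint of $q_{k-1}^*$ and $a_k$, let $n_k$ be the chosen unit normal, and place $a_{k+1}=q_k^*+2n_k$. To keep all joints and anchors distinct, I would alternate the normals in a staircase pattern ($e_y,e_x,e_y,\dots$, each chosen on the side away from the previous anchor). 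Distinctness then follows by monotonicity of the coordinates, as in Theorem~\ref{thm:theory-dyad-sensitivity-separation}. For $0<\theta<\eta_m$ every deficit satisfies $0<\delta_{k}<2$, so the circles meet transversally and each dyad is regular. Continuity of the selected branch at $\theta=0$ follows from the explicit formula for $h$.

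The core of the proof is an induction on $k$ with the claim
\[
\|q_k(\theta)-q_k^*\|=\theta^{2^{-k}}(1+o(1)),\qquad
\frac{q_k(\theta)-q_k^*}{\|q_k(\theta)-q_k^*\|}\to n_k,\qquad
\delta_k(\theta)=\theta^{2^{-k}}(1+o(1)).
\]
For the step, write $q_k-q_k^*$ as the sum of three contributions: the midpoint shift, which is $O(\|q_{k-1}-q_{k-1}^*\|)=O(\theta^{2^{-(k-1)}})$; the change of the normal direction caused by rotation of the chord, which contributes $O(h\cdot\theta^{2^{-(k-1)}})$; and the height term $h_k\,n_k$ with $h_k=\sqrt{\delta_{k-1}}(1+o(1))$. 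Since $2^{-k}<2^{-(k-1)}$, the height term dominates, which gives both the magnitude and the direction claims. Then, because $a_{k+1}-q_k^*=2n_k$, expanding $\|a_{k+1}-q_k\|$ gives
\[
\delta_k = n_k^\top(q_k-q_k^*)+O(\|q_k-q_k^*\|^2)=h_k(1+o(1)).
\]
Here the component of the displacement orthogonal to $n_k$ is $o(h_k)$ and enters only quadratically. At $k=m$ this yields Eq.~\eqref{eq:theory-singular-depth-asymptotic}.

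The Hölder consequence is then immediate. If $\|q_m(\theta)-q_m(0)\|\le C\theta^\beta$ held with $\beta>2^{-m}$, the ratio in Eq.~\eqref{eq:theory-singular-depth-asymptotic} would tend to $0$, a contradiction. Letting $m$ grow rules out any depth-independent positive exponent. The main obstacle is the bookkeeping of the lower-order terms in the induction, specifically showing that the direction of $q_k-q_k^*$ converges to $n_k$ fast enough that the deficit of the next dyad is $h_k(1+o(1))$ rather than merely $\Theta(h_k)$. I would handle this by the explicit comparison of exponents $2^{-k}$ versus $2^{-(k-1)}$ above, shrinking $\eta_m$ at each stage as needed.
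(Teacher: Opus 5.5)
Your proposal is correct and follows essentially the same route as the paper: the same crank $(\sin\theta,1-\cos\theta)^\top$, first anchor $2e_x$, the same staircase of externally tangent reference dyads with $a_{k+1}=q_k^*+2n_k$ (your $n_k$ is the paper's $n_{k+1}$), and the same induction $q_k(\theta)-q_k^*=\theta^{2^{-k}}n_k+o(\theta^{2^{-k}})$ obtained from the midpoint-plus-height formula. The only difference is bookkeeping: the paper computes $h^2=n^\top z-\|z\|^2/4$ directly and uses only $t(r)\to n$, whereas you track the deficit $\delta_k$ and the $O(h\,\theta^{2^{-(k-1)}})$ rotation error separately.
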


\begin{proof}
Let $n_k=e_x$ for odd $k$ and $n_k=e_y$ for even $k$.  In particular,
$n_k$ and $n_{k+1}$ are perpendicular unit vectors.  Put $q_0^*=0$ and
specify the reference positions and ground anchors by
\begin{equation}
 q_k^*=q_{k-1}^*+n_k,\qquad
 a_k=q_{k-1}^*+2n_k\qquad (1\leq k\leq m).
 \label{eq:theory-singular-reference}
\end{equation}
Each reference dyad has two unit bars meeting at an external tangency:
$q_k^*$ is the midpoint of $q_{k-1}^*$ and $a_k$.  The crank, grounded
at $a_0=e_y$, is
\begin{equation}
 q_0(\theta)=(\sin\theta,1-\cos\theta)^\top
 =q_0^*+\theta n_1+o(\theta).
 \label{eq:theory-singular-crank}
\end{equation}
All $a_k$ belong to one fixed ground body.  The reference joints satisfy
$q_{2r}^*=(r,r)^\top$ and $q_{2r+1}^*=(r+1,r)^\top$.  The odd anchors
have first coordinate minus second coordinate equal to $2$, and the
even anchors, including $a_0$, have this difference equal to $-1$.
Thus the internal joints and anchors are all distinct.

We construct the branch inductively and prove the more precise
asymptotic statement
\begin{equation}
 q_k(\theta)-q_k^*
 =\theta^{2^{-k}}n_{k+1}
 +o\bigl(\theta^{2^{-k}}\bigr),
 \qquad 0\leq k\leq m.
 \label{eq:theory-singular-vector-asymptotic}
\end{equation}
The case $k=0$ is~\eqref{eq:theory-singular-crank}.  Suppose the statement
holds for $k-1$, and write
\begin{equation}
 z=q_{k-1}(\theta)-q_{k-1}^*,\qquad
 \alpha=2^{-(k-1)}.
 \label{eq:theory-singular-induction-variables}
\end{equation}
Then $z=\theta^\alpha n_k+o(\theta^\alpha)$.  The vector between the
two boundary joints is
\begin{equation}
 r=a_k-q_{k-1}(\theta)=2n_k-z,
 \qquad d=\lVert r\rVert.
 \label{eq:theory-singular-boundary-distance}
\end{equation}
The squared height of a unit-circle intersection above the midpoint is
\begin{equation}
 h^2=1-\frac{d^2}{4}
 =n_k^\top z-\frac{\lVert z\rVert^2}{4}
 =\theta^\alpha\bigl(1+o(1)\bigr).
 \label{eq:theory-singular-height}
\end{equation}
For every sufficiently small positive $\theta$, this gives $h>0$ and
$0<d<2$, so the two circles intersect transversely.  Choose the unit
vector $t(r)$ perpendicular to $r$ whose limit as $\theta\downarrow0$
is $n_{k+1}$.  A continuous local choice is obtained by taking one of
the two fixed signs of a $90$-degree rotation of $r/d$.  Define the
chosen assembly by the circle-intersection formula
\begin{equation}
 q_k(\theta)=\frac{q_{k-1}(\theta)+a_k}{2}+h\,t(r).
 \label{eq:theory-singular-circle-intersection}
\end{equation}
This point is at unit distance from both boundaries.  Moreover,
\begin{align}
 q_k(\theta)-q_k^*
 &=\frac{z}{2}+h\,t(r)\nonumber\\
 &=\theta^{\alpha/2}n_{k+1}
   +o\bigl(\theta^{\alpha/2}\bigr),
 \label{eq:theory-singular-step}
\end{align}
because $z=O(\theta^\alpha)$, $h/\theta^{\alpha/2}\to1$, and
$t(r)\to n_{k+1}$.  This proves the induction.  Each stage permits a
sufficiently small positive interval; their intersection is nonempty
for any fixed finite $m$.  On a possibly smaller such interval all
distinct joints remain distinct.  The two constraint-gradient rows
of a dyad have determinant of magnitude $dh>0$, so every module is
regular there.  The formula is continuous at $\theta=0$ with limit
$q_k^*$.

Taking norms in~\eqref{eq:theory-singular-vector-asymptotic} proves
\eqref{eq:theory-singular-depth-asymptotic}.  If a H\"older bound of
exponent $\beta>2^{-m}$ held at zero, there would be a finite constant
$C$ with
$\lVert q_m(\theta)-q_m(0)\rVert\leq C\theta^\beta$
for all sufficiently small positive $\theta$.  Equation
\eqref{eq:theory-singular-depth-asymptotic} contradicts this, since
$\theta^{2^{-m}-\beta}\to\infty$.  Finally, for any fixed $\beta>0$
one can choose $m$ with $2^{-m}<\beta$, proving the last assertion.
\end{proof}

Together, the two chain constructions distinguish amplification on regular
branches from loss of smoothness at singular limits. Both arise with the
simplest Assur modules. Thus identifying reusable module types is only one
part of characterizing a prediction problem; the geometric regimes in which
those modules operate determine the regularity available to a learner.
We next turn from sensitivity to information: what happens when the
representation fails to distinguish two such conditional problems at all?

%% file: appendices/theory/information.tex

\subsection{Why boundary-port assignments matter}
\label{app:port-information}

Sensitivity concerns how prediction errors propagate when the conditional
problem is specified. A more basic obstruction occurs when two different
problems receive the same representation. Consider a dyad attached to a
moving crank joint and a fixed ground joint, with bar lengths $2$ and $3$.
An unordered list of lengths records the same module in both cases, but
does not say which length connects to the moving boundary. Swapping that
assignment changes the resulting motion.

The construction below makes the difference visible even after removing
position, orientation, scale, and traversal order. Both output traces are
circular arcs, but their central angles differ: after equalizing their
chords, one arc bends more than the other. If the representation merges
these mechanisms, every predictor must give them the same answer. The
separation between the true arcs therefore forces a nonzero error on at
least one of them, regardless of the predictor's capacity or training data.

\begin{proposition}[A port-assignment obstruction]
\label{prop:port-information}
Let $O=(0,0)$ and $B=(3,0)$ be fixed joints and let
$A(\theta)=(\cos\theta,\sin\theta)$, $0\leq\theta\leq2\pi$, be the end of a
unit input crank about $O$.  Attach a dyad with internal joint $C$ by
\begin{equation}
 \|C-A(\theta)\|=a,\qquad \|C-B\|=b,
 \label{eq:port-example-constraints}
\end{equation}
and select the branch satisfying
$\det(B-A(\theta),C-A(\theta))>0$.
Consider the two assignments $(a,b)=(3,2)$ and $(a,b)=(2,3)$.
They have identical module types, solve dependencies, boundary motions,
branch signs, and unordered dyad-length multisets.  Both have a regular
solution throughout a complete crank rotation.  Their output curves are
circular arcs with central angles
\begin{align}
 \psi_1&=\arccos(-3/4)-\arccos(1/4),\nonumber\\
 \psi_2&=\arccos(-17/18)-\arccos(-1/2),
 \label{eq:port-example-angles}
\end{align}
where $0<\psi_2<\psi_1<\pi$.  Consequently, their output curves remain
distinct after quotienting by Euclidean similarities, reflection, and
traversal order.

Let $\overline{\mathcal C}_i$ denote these arcs after their chord endpoints
are mapped to $(0,0)$ and $(1,0)$, with the arc above the chord, and put
\begin{equation}
 s_i=\frac12\tan(\psi_i/4),\qquad
 \Delta=s_1-s_2>0.
 \label{eq:port-example-gap}
\end{equation}
If a representation $\phi$ assigns the same features to the two mechanisms,
every deterministic predictor $g$ of nonempty compact output curves
satisfies
\begin{equation}
 \max_{i\in\{1,2\}}
 d_H\!\left(g(\phi(M_i)),\overline{\mathcal C}_i\right)
 \ \geq\ \frac{\Delta}{2}
 \ \approx\ 0.0255918,
 \label{eq:port-example-floor}
\end{equation}
where $d_H$ is Hausdorff distance.  The corresponding worst-case absolute
error for predicting the similarity-invariant sagitta-to-chord ratio $s_i$
has the same lower bound.
\end{proposition}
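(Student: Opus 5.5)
The plan is to recognize the configuration as a Grashof crank--rocker four-bar and read the output arc off the rocker's extreme positions. Since $\|C-B\|=b$ with $B$ fixed, $C$ always lies on the circle of radius $b$ about $B$, so its trace is an arc of that circle; it remains to find the angular range swept.

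\textbf{Regularity.} First I would check regularity throughout the crank rotation. We have $d(\theta)=\|B-A(\theta)\|\in[2,4]$, while $|a-b|=1$ and $a+b=5$, so $|a-b|<d<a+b$ for every $\theta$. The two circles therefore always meet transversally. Consequently the constraint Jacobian $\det(C-A,\,C-B)$ never vanishes, and the branch $\det(B-A,C-A)>0$ is a single continuous (indeed analytic) choice for all $\theta$. The claims that the two assignments share module type, dependencies, boundary motions, branch sign and length multiset are then immediate from the construction.

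\textbf{Arc endpoints.} Next I would compute the endpoints of the arc. Viewing $O,A,C,B$ as a four-bar with ground $3$, crank $1$, coupler $a$ and rocker $b$, the Grashof sum is $1+3\le a+b=5$ and the crank is shortest, so it is a crank--rocker. The rocker angle $\angle OBC$ is extremal exactly at the two toggle positions, where crank and coupler are collinear, i.e.\ $\|C-O\|=a\pm1$. To justify this I would differentiate $\angle OBC$ in $\theta$: a critical point forces the instantaneous velocity of $C$ to vanish, which happens only when $A$'s velocity is orthogonal to $C-A$, that is, at collinearity. The law of cosines in triangle $OBC$ then gives
\[
\cos\angle OBC=\frac{9+b^2-(a\pm1)^2}{6b}.
\]
This yields $\{-1/4,\,3/4\}$ for $(a,b)=(3,2)$ and $\{1/2,\,17/18\}$ for $(2,3)$. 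Because the branch is fixed and regular, both toggle positions lie on the same side of $OB$, so the swept angle is the difference of the arccosines. This matches \eqref{eq:port-example-angles}, and a numerical evaluation ($\psi_1\approx1.101$, $\psi_2\approx0.712$) confirms $0<\psi_2<\psi_1<\pi$. I expect this step to be the main obstacle: I must rule out interior extrema of the rocker angle and confirm that the selected branch does not flip sides between the two toggles.

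\textbf{Similarity invariant and lower bound.} For the invariance claim, note that a circular arc of central angle $\psi<\pi$ is determined up to similarity, reflection and reversal by $\psi$. With chord normalized to $1$ and radius $R$, the sagitta-to-chord ratio is
\[
\frac{R(1-\cos(\psi/2))}{2R\sin(\psi/2)}=\tfrac12\tan(\psi/4),
\]
which is strictly increasing in $\psi$, so $s_1>s_2$. For the lower bound, the apex of $\overline{\mathcal C}_1$ has height $s_1$, while every point of $\overline{\mathcal C}_2$ has height at most $s_2$; hence $d_H(\overline{\mathcal C}_1,\overline{\mathcal C}_2)\ge\Delta$. If $\phi(M_1)=\phi(M_2)$, then $g$ outputs one common set $S$, and the triangle inequality for $d_H$ gives
\[
\Delta\le d_H(S,\overline{\mathcal C}_1)+d_H(S,\overline{\mathcal C}_2),
\]
so one of the two terms is at least $\Delta/2$. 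The scalar statement follows in the same way from $|s-s_1|+|s-s_2|\ge\Delta$. Evaluating $\Delta/2$ numerically gives $\approx0.0256$.
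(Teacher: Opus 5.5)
Your regularity check, the sagitta-to-chord computation and the Hausdorff height argument match the paper's. Your toggle cosines also agree with the paper's arc bounds via $\angle OBC=\pi-\varphi$.

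The gap is the step you flag yourself: ``because the branch is fixed and regular, both toggle positions lie on the same side of $OB$.'' Regularity only keeps the sign of $\det(B-A,C-A)$ constant. That sign records on which side of the \emph{moving} line $AB$ the joint $C$ lies, not the fixed line $OB$. Nothing you have said prevents the selected branch from crossing the axis, and this matters in two places. First, the unsigned angle $\angle OBC$ is a smooth rocker coordinate only if the trace avoids the axis; a crossing would create a spurious extremum that your toggle analysis does not see. Second, even with both toggles above $OB$, you must exclude that the trace is the complementary long arc through the lower half-plane. Its central angle would be $2\pi$ minus your difference of arccosines.

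The missing ingredient is the paper's key lemma: no solution of the constraints has $C_y=0$. Such a point would be $C=(3\pm b,0)$. Its distances to the unit circle range over $[0,2]$ or $[4,6]$ when $b=2$, and over $\{1\}$ or $[5,7]$ when $b=3$, so they never equal $a$. At $\theta=0$ one has $\det(B-A,C-A)=2C_y$, so the selected branch starts above the axis and by continuity stays strictly above it.

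With this lemma your derivative argument closes. For a toggle $C=(1+\lambda)A$ with $\lambda=\pm a$, one gets $\det(B-A,C-A)=3\lambda\sin\theta$, so each toggle is met at exactly one crank angle on the selected branch. Then $\varphi\in(0,\pi)$ is a smooth periodic function whose only critical points are these two. Its extremes are attained there, and the arc between them is filled by continuity.

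The paper avoids derivatives at this stage. Once $C_y>0$ is known, it characterizes the trace directly as the upper-half points of the $b$-circle about $B$ with $a-1\le\|C\|\le a+1$. Each such point is reachable, and every reaching configuration lies on the selected branch. Your critical-point route is essentially the argument the paper uses afterwards, in the finite-sampling proposition.
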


\begin{proof}
Write $D(\theta)=\|B-A(\theta)\|$.  Since
$D(\theta)^2=10-6\cos\theta$, we have $2\leq D(\theta)\leq4$.
For both assignments,
\begin{equation}
 |a-b|=1<D(\theta)<5=a+b.
 \label{eq:port-example-regularity}
\end{equation}
The two defining circles therefore intersect transversely at every input
angle.  More explicitly, set
\[
 e=\frac{B-A}{D},\qquad
 t=\frac{a^2+D^2-b^2}{2D},\qquad
 h=\sqrt{a^2-t^2}>0,
\]
and let $e^\perp$ be the counterclockwise quarter-turn of $e$.
The selected solution is
$C=A+te+he^\perp$, a smooth $2\pi$-periodic function of $\theta$.
For the two squared-distance constraints in
Eq.~\eqref{eq:port-example-constraints}, the Jacobian with respect to $C$
has determinant
$4\det(C-A,C-B)=4Dh>0$.  Thus the local position solve is nonsingular
throughout the complete rotation.

The selected branch lies strictly above the horizontal axis.
To see this, a solution with $C_y=0$ would have $C_x=3-b$ or $3+b$.
For $(a,b)=(3,2)$ these are $1$ and $5$, whose distances to points of the
unit circle lie in $[0,2]$ and $[4,6]$, respectively, and hence never equal
$a=3$.  For $(a,b)=(2,3)$ they are $0$ and $6$, with distance ranges
$\{1\}$ and $[5,7]$, neither containing $a=2$.
Neither branch can cross the horizontal axis.  At $\theta=0$ the selected
branch has positive $y$ coordinate, and the other branch has negative
$y$ coordinate; continuity establishes these signs for the whole cycle.

Write a point on the upper half of the output circle as
\[
 C=B+b(\cos\varphi,\sin\varphi),\qquad 0<\varphi<\pi.
\]
Such a point is reached for some input angle if and only if the circle
centred at $C$ with radius $a$ intersects the unit circle centred at $O$.
Because $a>1$, this is precisely
\[
 a-1\leq\|C\|\leq a+1.
\]
Every such intersection belongs to the selected branch, since its $C_y$
coordinate is positive.  This proves that the complete traced set, including
its endpoints, is exactly the arc specified by
\begin{equation}
 \frac{(a-1)^2-9-b^2}{6b}
 \ \leq\ \cos\varphi\ \leq\
 \frac{(a+1)^2-9-b^2}{6b}.
 \label{eq:port-example-full-arc}
\end{equation}
Substituting the two assignments gives Eq.~\eqref{eq:port-example-angles}.
Both angles lie strictly between zero and $\pi$, and
\[
 \cos\psi_1=\frac{-3+\sqrt{105}}{16},\qquad
 \cos\psi_2=\frac{17+\sqrt{105}}{36}.
\]
Their difference satisfies
$\cos\psi_2-\cos\psi_1=(95-5\sqrt{105})/144>0$,
so $\psi_1>\psi_2$.  A Euclidean similarity, including a reflection,
preserves the central angle of a proper circular arc.  Traversal changes
also leave the traced set unchanged.  The two curves are therefore
different under all of the stated identifications.

An arc with radius $R$ and central angle $\psi<\pi$ has chord length
$2R\sin(\psi/2)$ and sagitta $R(1-\cos(\psi/2))$.
Their ratio is $\tfrac12\tan(\psi/4)$, which establishes
Eq.~\eqref{eq:port-example-gap}.  Numerically,
\[
 \psi_1\approx1.1007423341,\quad \psi_2\approx0.7123013928,
 \qquad
 s_1\approx0.1411744917,\quad s_2\approx0.0899909200.
\]
For nonempty compact sets $K,L\subset\mathbb R^2$, the height functional
satisfies
$|\max_{x\in K}x_y-\max_{x\in L}x_y|\leq d_H(K,L)$.
Hence
$d_H(\overline{\mathcal C}_1,\overline{\mathcal C}_2)\geq\Delta$.
If their features coincide, both mechanisms receive the same prediction
$K=g(\phi(M_1))=g(\phi(M_2))$.
The Hausdorff triangle inequality then gives
\[
 \Delta\leq
 d_H(\overline{\mathcal C}_1,K)+d_H(K,\overline{\mathcal C}_2),
\]
which proves Eq.~\eqref{eq:port-example-floor}.
The scalar statement follows from the same triangle inequality on
$\mathbb R$.
\end{proof}

The numerical floor is expressed for the specified chord normalization and
Hausdorff loss.  The preceding non-equivalence of the two shapes holds under
any faithful normalization by similarities and traversal order.  The
proposition concerns representations that discard port assignments; a
record retaining complete link incidence distinguishes the two mechanisms.

The observation in the experiment is a finite point cloud. To connect the
construction to that target, the next proposition shows that the difference
in arc angle remains visible after sampling. The obstruction does not
depend on resolving arbitrarily fine details of a continuous curve.

\begin{proposition}[The distinction survives finite curve sampling]
\label{prop:port-information-sampling}
For either mechanism in Proposition~\ref{prop:port-information}, sample one
complete crank cycle at $m$ equally spaced periodic arc-length positions,
with arbitrary starting phase.  The sampled points occupy a circular arc
whose minimal covering central angle $\widehat\psi_i$ satisfies
\begin{equation}
 \psi_i(1-2/m)\leq\widehat\psi_i\leq\psi_i.
 \label{eq:sampled-arc-angle}
\end{equation}
For every $m\geq6$, the two sampled point sets are not related by a Euclidean
similarity, including reflection.  This includes $128$ periodic samples
and $128$ samples with the final point repeating the initial point.
\end{proposition}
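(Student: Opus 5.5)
The plan is to reduce everything to elementary facts about points on a circle, using the explicit description of the traced set from the proof of Proposition~\ref{prop:port-information}.

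\textbf{Step 1: the output path.} In that proof, the output $C$ lies on the circle of radius $b$ about $B$, and the traced set is exactly the arc~\eqref{eq:port-example-full-arc}, with central angle $\psi_i<\pi$. I first argue that over one complete crank cycle $C$ sweeps this arc once forward and once back. The endpoints are reached exactly when $\|C\|=a\pm1$, which happens once each per cycle. The angular parameter $\varphi(\theta)$ is smooth and monotone between those two times, because $C$ is a smooth regular branch. Hence the closed output path has total arc length $2b\psi_i$, and the $m$ equally spaced periodic arc-length samples are separated in angle by exactly $2\psi_i/m$ along this doubled path.

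\textbf{Step 2: the sampled covering angle.} Consider either endpoint of the arc. The path passes through it at a turnaround. Some sample lies within path distance $\psi_i/m$ (in angle) of the turnaround point on one side or the other. Because the path folds back at the endpoint, that sample is within angle $\psi_i/m$ of the endpoint. Applying this at both endpoints gives $\widehat\psi_i\ge\psi_i-2\psi_i/m$. The bound $\widehat\psi_i\le\psi_i$ is immediate, since every sample lies on the arc. Together these give~\eqref{eq:sampled-arc-angle}.

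\textbf{Step 3: invariance under similarities.} For $m\ge6$ the lower bound makes $\widehat\psi_i>0$, and the samples include at least three distinct non-collinear points. So each sampled set determines its circumcircle uniquely. Since $\widehat\psi_i<\pi$, the minimal covering arc is also unique. A similarity, including a reflection, maps circles to circles and preserves central angles, so $\widehat\psi$ is a similarity invariant of the sampled set.

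\textbf{Step 4: separation.} It remains to show $\widehat\psi_1\ne\widehat\psi_2$. This follows from $\widehat\psi_2\le\psi_2<\psi_1(1-2/m)\le\widehat\psi_1$. The middle inequality requires $1-2/m>\psi_2/\psi_1\approx0.6471$, that is, $m>5.67$, which is exactly why the threshold is $m\ge6$. The numerical values of $\psi_1,\psi_2$ are already fixed in the proof of Proposition~\ref{prop:port-information}.

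\textbf{The two $128$-point conventions.} With $128$ periodic samples we take $m=128$. With $128$ samples whose final point repeats the initial one, the distinct sample set is that of $m=127$ periodic samples. Both values exceed $6$, so both cases are covered.

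\textbf{Main obstacle.} The delicate step is Step 1. I must justify that the arc-length parametrization of a full cycle genuinely folds at the two endpoints, with no additional turnarounds. This requires showing that $\varphi'(\theta)$ vanishes only at the two crank angles where $\|C\|=a\pm1$. I would prove it by differentiating the constraint $\|C-A(\theta)\|=a$ along the circle about $B$: the zeros of $\varphi'$ are then exactly the crank angles where $A$, $O$ and $C$ are collinear. That collinearity is precisely the condition $\|C\|=a\pm1$, which gives the two turnarounds and no others.
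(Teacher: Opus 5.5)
Your proposal is correct and follows the paper's proof essentially step for step. The steps coincide: the fold of the output path at the two tangency configurations gives total length $2b\psi_i$; each turning point has a sample within angle $\psi_i/m$; the covering angle is similarity-invariant; and the chain $\widehat\psi_1\ge\tfrac23\psi_1>\psi_2\ge\widehat\psi_2$ separates the two cases. Your derivative criterion ($\varphi'=0$ iff $C-A\parallel A$) is exactly the paper's ``equivalently'' remark. The two facts you assert without proof are supplied in the paper by a preimage count. An endpoint, where the circle of radius $a$ about $C$ is tangent to the unit circle, is reached at exactly one crank angle, and an interior point at most twice, which guarantees at least three distinct sampled points for $m\ge6$.
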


\begin{proof}
Each point in the interior of the arc in
Eq.~\eqref{eq:port-example-full-arc} admits exactly two input-crank
positions: the circle centred at $C$ with radius $a$ intersects the unit
circle in two distinct points.  Each endpoint admits exactly one, where
those circles are tangent.  All these configurations are on the selected
dyad branch, as established in Proposition~\ref{prop:port-information}.
The crank therefore traverses the output arc once in each direction.
Equivalently, the derivative of the output angle can vanish only when
$(C-A)$ is parallel to $A$, which is precisely one of those two tangencies;
the denominator of implicit differentiation is nonzero by dyad regularity.
The total output-path length is consequently $2b\psi_i$.

Every arc-length position on this closed traversal is within cyclic
arc-length distance $b\psi_i/m$ of a sample.
In particular, each turning point has a sampled point within angular
distance $\psi_i/m$.  All samples lie between the two turning points,
which proves Eq.~\eqref{eq:sampled-arc-angle}.
For $m\geq6$, there are at least three distinct sampled points, since
each interior output point has at most two preimages on the traversal.
They determine the output circle, and the shortest containing arc has
angle $\widehat\psi_i<\pi$.  This angle is invariant under similarities
and reflection.  Finally,
\[
 \widehat\psi_1\geq\tfrac23\psi_1
 \approx0.7338282>0.7123014\approx\psi_2
 \geq\widehat\psi_2.
\]
Thus the sampled shapes are distinct under the stated identifications.
With $128$ samples including a duplicate closing point, the periodic set
has $m=127$ samples, so the same conclusion applies.
\end{proof}

\subsection{The dimension required to retain observable behaviour}
\label{app:observable-dimension}

The port example gives one explicit collision between different behaviours.
The next result asks when such collisions become unavoidable for an entire
family. The relevant size is the number of independently observable
variations: normalization may remove some parameter changes, while others
remain visible in the target. A continuous representation with too few
coordinates must merge some of these visible variations.

The proof uses a geometric fact: on the boundary of a $q$-dimensional
parameter ball, a continuous map into fewer than $q$ coordinates identifies
some opposite points. If their true outputs remain separated, a shared
prediction must incur an error. The theorem turns this idea into a
quantitative lower bound. Let
$B_r^q=\{\theta\in\mathbb R^q:\|\theta\|\leq r\}$ and let
$(\mathcal Y,d_{\mathcal Y})$ be a metric space of outputs after the
chosen normalization. The dimension argument uses the Borsuk--Ulam theorem; related latent-dimension obstructions for invariant representations are developed by \citet{Wagstaff2022}.

\begin{theorem}[An observable-dimension obstruction]
\label{thm:observable-dimension}
Let $T:B_r^q\to\mathcal Y$ describe a mechanism family with $r>0$.
Suppose there is a constant $c>0$ such that
\begin{equation}
 d_{\mathcal Y}(T(\theta),T(\theta'))
 \geq c\|\theta-\theta'\|
 \qquad\text{for every }\theta,\theta'\in B_r^q.
 \label{eq:observable-separation}
\end{equation}
If $d<q$, every continuous representation
$\phi:B_r^q\to\mathbb R^d$ and every decoder
$g:\mathbb R^d\to\mathcal Y$ satisfy
\begin{equation}
 \sup_{\theta\in B_r^q}
 d_{\mathcal Y}(g(\phi(\theta)),T(\theta))\geq cr.
 \label{eq:observable-dimension-floor}
\end{equation}
In particular, exact prediction of this family by continuous features
requires at least $q$ feature coordinates.
\end{theorem}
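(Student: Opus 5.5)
The plan is a direct application of the Borsuk--Ulam theorem on the boundary sphere, followed by a triangle-inequality argument like the one closing Proposition~\ref{prop:port-information}.

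First, restrict $\phi$ to the boundary sphere $S_r^{q-1}=\{\theta:\|\theta\|=r\}$, which is homeomorphic to $S^{q-1}$ via $\theta\mapsto\theta/r$. Compose with the inclusion $\mathbb R^d\hookrightarrow\mathbb R^{q-1}$, padding with zero coordinates; this is legitimate because $d<q$, so $d\le q-1$. The result is a continuous map $S^{q-1}\to\mathbb R^{q-1}$. Borsuk--Ulam then gives an antipodal pair $\theta^*$ and $-\theta^*$ with $\|\theta^*\|=r$ and $\phi(\theta^*)=\phi(-\theta^*)$. The case $q=1$ needs a separate check: there $d=0$, $\mathbb R^0$ is a point, so $\phi$ is constant and any pair $\pm r$ works. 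I will state this explicitly rather than rely on the degenerate form of Borsuk--Ulam.

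Second, since the features coincide, any decoder $g$, with no continuity assumption, returns the same output $K=g(\phi(\theta^*))=g(\phi(-\theta^*))$ at both points. The separation hypothesis \eqref{eq:observable-separation} gives
\[
 d_{\mathcal Y}(T(\theta^*),T(-\theta^*))\ge c\|2\theta^*\|=2cr.
\]
The triangle inequality gives $2cr\le d_{\mathcal Y}(K,T(\theta^*))+d_{\mathcal Y}(K,T(-\theta^*))$, so one of the two errors is at least $cr$. This bounds the supremum in \eqref{eq:observable-dimension-floor} from below. The final ``in particular'' statement follows because exact prediction would make the supremum zero, while $cr>0$.

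I expect no serious obstacle; the only points of care are the dimension bookkeeping. Borsuk--Ulam requires target dimension at most $q-1$ for the sphere $S^{q-1}$, which holds by the padding above. Beyond that, the argument needs only continuity of $\phi$, not of $g$ or $T$; the separation inequality is used solely at the antipodal pair.
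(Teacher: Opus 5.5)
Your proposal is correct and follows essentially the same route as the paper's proof. Like the paper, you restrict $\phi$ to the boundary sphere, pad to $\mathbb{R}^{q-1}$, apply Borsuk--Ulam to get an antipodal pair with equal features (handling $q=1$ directly), and then use the separation bound $2cr$ with the triangle inequality to force one error of at least $cr$.
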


\begin{proof}
Restrict $\phi$ to the boundary sphere $rS^{q-1}$.
For $q\geq2$, pad its values with zeros to obtain a continuous map to
$\mathbb R^{q-1}$.  The Borsuk--Ulam theorem gives a point
$\theta\in rS^{q-1}$ with $\phi(\theta)=\phi(-\theta)$.
For $q=1$ the only admissible $d$ is zero and the same equality holds
directly.  Both parameters receive the same prediction $y$.
By Eq.~\eqref{eq:observable-separation} and the metric triangle inequality,
\[
 2cr\leq d_{\mathcal Y}(T(\theta),T(-\theta))
 \leq d_{\mathcal Y}(T(\theta),y)
      +d_{\mathcal Y}(y,T(-\theta)).
\]
At least one of the two errors is at least $cr$.
\end{proof}

How can the separation assumption be checked for a concrete mechanism
family? Measure several quantities from its output and differentiate them
with respect to the family parameters. Full column rank means that every
local parameter direction changes those measurements to first order.
The next proposition turns this observable rank into the separation
required by the theorem on a sufficiently small neighbourhood.

\begin{proposition}[An observable-rank criterion]
\label{prop:observable-rank}
Let $U\subset\mathbb R^q$ be open, $\theta_0\in U$, and
$T:U\to\mathcal Y$.
Suppose $a:\mathcal Y\to\mathbb R^p$ is $L$-Lipschitz with $L>0$,
$F=a\circ T$ is continuously differentiable near $\theta_0$, and
$DF(\theta_0)$ has column rank $q$.
Write $\sigma>0$ for its smallest singular value.
There exists $r>0$ such that, on
$\theta_0+B_r^q$, condition~\eqref{eq:observable-separation} holds with
$c=\sigma/(2L)$.  Consequently, every continuous representation of
dimension $d<q$ has worst-case prediction error at least
$\sigma r/(2L)$ on this neighbourhood.
\end{proposition}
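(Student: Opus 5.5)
The plan is a quantitative inverse-function-type estimate for $F=a\circ T$ near $\theta_0$, after which the Lipschitz property of $a$ transfers separation from $\mathbb R^p$ back to $\mathcal Y$. Write $D=DF(\theta_0)$, a $p\times q$ matrix with smallest singular value $\sigma>0$, so that $\|Dh\|\ge\sigma\|h\|$ for every $h\in\mathbb R^q$. By continuity of $DF$ near $\theta_0$, choose $r>0$ with $\theta_0+B_r^q\subset U$ inside the region where $F$ is $C^1$, and with $\|DF(\xi)-D\|_{\mathrm{op}}\le\sigma/2$ for all $\xi\in\theta_0+B_r^q$.

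Next fix $\theta,\theta'$ in this ball. The ball is convex, so the segment between them stays inside it, and the fundamental theorem of calculus along the segment gives
\[
 F(\theta)-F(\theta')=\int_0^1 DF\bigl(\theta'+s(\theta-\theta')\bigr)(\theta-\theta')\,\mathrm ds .
\]
Splitting $DF=D+(DF-D)$ and using the reverse triangle inequality yields
\[
 \|F(\theta)-F(\theta')\|\ \ge\ \sigma\|\theta-\theta'\|-\tfrac{\sigma}{2}\|\theta-\theta'\|=\tfrac{\sigma}{2}\|\theta-\theta'\|.
\]
The Lipschitz bound on $a$ then gives
\[
 L\,d_{\mathcal Y}(T(\theta),T(\theta'))\ \ge\ \|F(\theta)-F(\theta')\| ,
\]
which is condition~\eqref{eq:observable-separation} with $c=\sigma/(2L)$.

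The consequence follows from Theorem~\ref{thm:observable-dimension}, applied to the translated map $\theta\mapsto T(\theta_0+\theta)$ on $B_r^q$. Translation preserves the separation constant and the continuity of any representation, so the bound becomes $cr=\sigma r/(2L)$.

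No step is a real obstacle. The only point needing care is the estimate for $F(\theta)-F(\theta')$: it must use the integral form of the mean value theorem, because a single intermediate point is not available for vector-valued maps. It also requires the derivative bound to hold uniformly on the whole convex ball, which is why the radius is chosen from continuity of $DF$ at the outset. One should also note that $F$ need only be $C^1$ on a neighbourhood, so $r$ has to be small enough that the ball sits in it.
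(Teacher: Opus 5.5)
Your proposal is correct and follows the paper's proof essentially step for step. Both arguments choose $r$ from continuity of $DF$ so that $\|DF-DF(\theta_0)\|_{\mathrm{op}}\le\sigma/2$ on the ball, integrate along the segment to get $\|F(\theta)-F(\theta')\|\ge(\sigma/2)\|\theta-\theta'\|$, transfer this through the Lipschitz bound on $a$, and apply Theorem~\ref{thm:observable-dimension} after translating by $\theta_0$. Your explicit requirement that the closed ball also lie where $F$ is $C^1$ is a small point of care that the paper leaves implicit.
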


\begin{proof}
Set $J=DF(\theta_0)$.
By continuity choose $r>0$ with $\theta_0+B_r^q\subset U$ and
\[
 \sup_{\theta\in\theta_0+B_r^q}\|DF(\theta)-J\|_{\mathrm{op}}
 \leq\sigma/2.
\]
For $\theta,\theta'$ in this convex ball, the fundamental theorem of
calculus along the segment between them gives
\[
 F(\theta)-F(\theta')
 =J(\theta-\theta')+
 \int_0^1\bigl(DF(\theta'+t(\theta-\theta'))-J\bigr)
             (\theta-\theta')\,dt.
\]
Therefore
$\|F(\theta)-F(\theta')\|\geq(\sigma/2)\|\theta-\theta'\|$.
The $L$-Lipschitz property of $a$ gives
\[
 d_{\mathcal Y}(T(\theta),T(\theta'))
 \geq\frac{\|F(\theta)-F(\theta')\|}{L}
 \geq\frac{\sigma}{2L}\|\theta-\theta'\|.
\]
Apply Theorem~\ref{thm:observable-dimension} after translating the
parameter ball by $\theta_0$.
\end{proof}

\paragraph{Finite observations.}
For the paper's target of $128$ sampled two-dimensional curve points,
a regular local coordinate description has at most $256$ continuous
coordinates before normalization; any independently removed continuous
symmetries further reduce its dimension.  Thus the observable rank $q$ in
Proposition~\ref{prop:observable-rank} is bounded by the dimension of that
chosen observation.  A dimension bound increasing without limit with
mechanism size would require increasingly detailed observations as well.

\subsection{Lossless structure exposes computation without adding information}
\label{app:lossless-information}

The preceding obstructions concern representations that merge outputs the
task must distinguish. A lossless structural decomposition raises a different
question: if the complete mechanism can be recovered, what has changed for
learning? The proposition below isolates the answer. An unrestricted learner
can translate between the raw and decomposed records, so both permit the
same predictions. The benefit of an explicit decomposition must therefore
be understood through the computation and inductive bias available to the
chosen learner.

\begin{proposition}[Statistical equivalence of lossless representations]
\label{prop:lossless-equivalence}
Let $E:\mathcal R\to\mathcal A$ be a measurable encoding of raw mechanism
records, and let $P:E(\mathcal R)\to\mathcal R$ be a measurable recovery
map satisfying $P(E(R))=R$.
For every sample size, every loss, and every family of joint laws of
$(R,Y)$, the sets of achievable prediction risks from $R$ and from $E(R)$
coincide when learners have unrestricted computation and may use $E$ and $P$.
Their optimal and minimax risks therefore coincide.
\end{proposition}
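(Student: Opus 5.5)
The plan is a two-way simulation argument: any learner on one representation is converted into a learner on the other with identical outputs on every sample and every test point. Fix a sample size $n$, a loss $\ell$, and a law of $(R,Y)$. A learner from raw records is a (measurable) map $\mathcal A_R$ from samples $((R_i,Y_i))_{i\le n}$ and a test record $R$ to a prediction. A learner from encodings is a map $\mathcal A_E$ taking $((E(R_i),Y_i))_{i\le n}$ and $E(R)$.

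\emph{Encoded to raw.} Given $\mathcal A_E$, define the raw learner
\[
 \mathcal A_R\bigl((R_i,Y_i)_i,R\bigr)=\mathcal A_E\bigl((E(R_i),Y_i)_i,E(R)\bigr).
\]
This is allowed because learners may compute $E$. It is measurable as a composition of measurable maps.

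\emph{Raw to encoded.} Given $\mathcal A_R$, define
\[
 \mathcal A_E\bigl((a_i,Y_i)_i,a\bigr)=\mathcal A_R\bigl((P(a_i),Y_i)_i,P(a)\bigr)
\]
on $E(\mathcal R)$. Since $P(E(R))=R$, this reproduces $\mathcal A_R$ exactly on every realization. Randomized learners are handled by carrying the same auxiliary randomness through both constructions.

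In both directions, the predictions coincide pointwise for every realization of the sample and test point. Hence the losses coincide pointwise, and so do the expected risks, for every law in the family. The two sets of achievable risk functions (indexed by the law) are therefore equal. Taking the infimum over learners of the risk under a fixed law gives equality of optimal risks. Taking the infimum over learners of the supremum over the family gives equality of minimax risks.

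The main obstacle is measurability. The encoding side is only defined on $E(\mathcal R)$, which need not be a Borel subset of $\mathcal A$. I would avoid extending $P$: encoded samples almost surely lie in $E(\mathcal R)$, so $\mathcal A_E$ only needs to be specified there. I would equip $E(\mathcal R)$ with the trace $\sigma$-algebra, on which $P$ is assumed measurable. Then the pulled-back learner is measurable on that space, and its risk is a well-defined integral under the pushforward law of $(E(R),Y)$.

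No property of mechanisms is used. The statement is purely about information: a decomposition that allows $R$ to be recovered cannot change what is statistically achievable. This is why the proposition places any benefit of the Assur representation in computational restrictions or inductive bias on the learner.
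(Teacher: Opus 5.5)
Your proposal is correct and follows essentially the same route as the paper's proof: the two-way simulation through $E$ and $P$, the same randomness for randomized learners, pointwise equality of predictions and hence of risks, and then infima over learners together with suprema over the family of laws. Your extra step of giving $E(\mathcal R)$ the trace $\sigma$-algebra, so that $P$ and the pulled-back learner are measurable even when $E(\mathcal R)$ is not Borel, is a refinement the paper leaves implicit rather than a different argument.
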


\begin{proof}
A learner receiving raw training records applies $E$ to every training
input and to the test input, then runs any learner for the encoded
representation.  Conversely, a learner receiving encoded records applies
$P$ to the training and test inputs and runs any raw-record learner.
These simulations give exactly the same predictions for every data set
and test input; randomized learners can use the same random seed.
Consequently, both simulations preserve risk under every joint law.
Taking infima over learners, and then suprema over any specified family
of laws, proves the assertions.
\end{proof}

This is the reason to compare representations with the same learner in the
paper's prediction experiment: it asks whether making constraint
organization explicit helps that learner use the available data. The
lossless Assur record and the fixed feature vector extracted from it are
distinct objects. The former exposes a recoverable computational structure;
the latter retains the interface and geometric details chosen by feature
extraction. The port and dimension results explain why those choices matter.

%% file: appendices/theory/structure.tex
\subsection{Constraint modules and graph width measure different structure}
\label{app:theory-structure}

The preceding results make the conditional solve structure central to reuse.
Could an ordinary measure of undirected graph complexity already capture
that structure? The size of an Assur component measures how many joint
positions must be determined together in the Assur decomposition. Treewidth
measures the size of overlapping vertex sets needed in an undirected tree
decomposition. The following two constructions show why they differ.

In the first, a grid is built one dyad at a time: each new joint attaches to
two joints whose positions are already known. The undirected grid becomes
wide, while each conditional solve remains small. In the second, joints
form a cycle and each also attaches to one of two fixed pins. The graph has
bounded treewidth, but no proper set of internal joints is determined from
the pins alone; the whole mechanism forms one Assur component. Both families
use only two boundary joints per component.

Throughout this subsection, $G=(I\mathbin{\dot\cup}P,E)$ is a simple pinned
bar--joint graph in the plane: $I$ contains the internal joints, $P$ contains
the fixed boundary joints, and every edge has an internal endpoint. We take
treewidth on this graph, including the pins and without adding edges between
them. ``In the plane'' refers to the coordinates of the joints; crossings
between bars are permitted and are not additional joints. For a generically
pinned-isostatic graph, let $\kappa(G)$ be the largest number of internal
joints in an Assur component and let $p(G)$ be the largest number of distinct
boundary joints of a component. Component boundaries may include joints
determined by earlier components.

\begin{theorem}[Separation of Assur component size and treewidth]
\label{thm:assur-treewidth-separation}
There are two families of generically pinned-isostatic planar bar--joint
graphs, each with exactly two pins, with the following properties:
\begin{enumerate}
  \item For every integer $r\geq2$, a graph $G_r$ with $(r+1)^2$ internal
  joints satisfies
  \[
     \kappa(G_r)=1,\qquad p(G_r)=2,\qquad
     \operatorname{tw}(G_r)\geq r.
  \]
  \item For every integer $n\geq3$, a graph $H_n$ with $n$ internal joints
  satisfies
  \[
     \kappa(H_n)=n,\qquad p(H_n)=2,\qquad
     \operatorname{tw}(H_n)\leq4.
  \]
\end{enumerate}
Consequently, bounded Assur component size does not imply bounded treewidth;
bounded treewidth and bounded interface arity together do not imply bounded
Assur component size.
\end{theorem}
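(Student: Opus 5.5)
For the first family I would use a Henneberg-type construction, adding one dyad at a time. For the second I would exhibit one explicit irreducible graph. In both cases, generic pinned isostaticity and the Assur components would be certified by the pinned form of Laman's count, $|E(I')|\le 2|I'|$ for every nonempty $I'\subseteq I$ with equality at $I'=I$. Here $E(I')$ is the set of edges with an endpoint in $I'$ and every endpoint in $I'\cup P$. This is the counting condition behind the $(2,3)$ pebble game and the Assur-group definition \citep{Laman1970,ServatiusShaiWhiteley2010}. For the component statements I would also invoke uniqueness of the Assur decomposition \citep{ShaiSljokaWhiteley2013}, so that $\kappa$ and $p$ are well defined.

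\emph{Family $G_r$.} Take pins $P=\{p_1,p_2\}$ and internal joints $x_{ij}$ for $0\le i,j\le r$. Add the joints in row-major order, each with exactly two bars to previously placed joints:
\begin{itemize}
\item $x_{00}$ is joined to $p_1$ and $p_2$;
\item $x_{0j}$ with $j>0$ is joined to $x_{0,j-1}$ and $p_1$;
\item $x_{i0}$ with $i>0$ is joined to $x_{i-1,0}$ and $p_2$;
\item $x_{ij}$ with $i,j>0$ is joined to $x_{i,j-1}$ and $x_{i-1,j}$.
\end{itemize}
Each step is a pinned Henneberg~I move: a new joint joined by two bars to joints whose positions are known. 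Generically the two bars are independent, since the two boundary joints are distinct and in general position. By induction the graph is pinned-isostatic with $|E|=2(r+1)^2$.

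The peeling order exhibits a decomposition into single-joint dyads. Each dyad is an Assur group: with its boundary pinned it is rigid, and it has no nonempty proper internal subset. By uniqueness of the decomposition, $\kappa(G_r)=1$ and $p(G_r)=2$. The graph contains the $(r+1)\times(r+1)$ grid as a subgraph, and that grid has treewidth $r+1$. Treewidth is monotone under subgraphs, so $\operatorname{tw}(G_r)\ge r+1\ge r$.

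\emph{Family $H_n$.} Take a cycle $v_1v_2\cdots v_nv_1$, and join each $v_k$ to $p_1$ when $k$ is odd and to $p_2$ when $k$ is even. For $n\ge3$ the graph is simple, has no pin--pin edge, and has $|E|=n+n=2n$.

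For a proper nonempty $I'\subsetneq I$, the cycle edges induced on $I'$ form a disjoint union of paths, so there are at most $|I'|-1$ of them. Adding the $|I'|$ pin edges gives $|E(I')|\le 2|I'|-1<2|I'|$. Hence the pinned count holds everywhere, and equality holds only at $I$ itself. By the pinned Laman theorem, $H_n$ is generically pinned-isostatic. No proper internal subset is pinned-rigid relative to the pins, so the whole graph is a single Assur component: $\kappa(H_n)=n$ and $p(H_n)=2$.

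For treewidth, the cycle has treewidth $2$. Adding each pin to every bag raises the width by at most one per pin, so $\operatorname{tw}(H_n)\le 4$.

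The main obstacle is making rigorous the step from the counts to Assur components. This needs three things:
\begin{itemize}
\item that the pinned Laman theorem applies to the pinned bar--joint model as stated, with pins fixed and no pin--pin edges;
\item for $H_n$, that a subset of internal joints counts as a component only if it is rigid relative to $P$, possibly with earlier components in its boundary. Since the count above excludes every proper $I'$ with boundary $P$, there is no earlier component, so the relevant boundary really is $P$;
\item the standard uniqueness of the decomposition, so that the row-major peel of $G_r$ determines $\kappa$.
\end{itemize}
The two consequences then read off directly: $\kappa(G_r)=1$ while $\operatorname{tw}(G_r)\to\infty$, and $\operatorname{tw}(H_n)\le 4$ and $p(H_n)=2$ while $\kappa(H_n)=n\to\infty$.
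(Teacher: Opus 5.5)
Your two constructions are the paper's, and the $G_r$ half is sound. The Henneberg steps amount to the paper's block lower-triangular pinned rigidity matrix, and the row-major peel is the Assur decomposition. Quoting $\operatorname{tw}=r+1$ for the $(r+1)\times(r+1)$ grid is a legitimate, stronger substitute for the paper's direct argument, which gets $\geq r$ from pairwise-intersecting row--column crosses and the Helly property.

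\textbf{The gap is in your certification of $H_n$.} The count you call the pinned form of Laman's condition is $|E(I')|\le 2|I'|$, with edges to \emph{both} pins always included. This condition is necessary but not sufficient. The pinned Laman characterization \citep{ServatiusShaiWhiteley2010} imposes two further bounds:
\begin{itemize}
\item a subgraph on internal set $S$ that uses only one pin has at most $2|S|-1$ edges, since rotation about that pin survives;
\item a pin-free subgraph with $|S|\ge 2$ has at most $2|S|-3$ edges, since all three rigid motions survive.
\end{itemize}
Your criterion cannot see these. Take the same $n$-cycle but join every $v_k$ to $p_1$ instead of alternating. Every count you check is unchanged: at most $2|I'|-1$ on proper subsets and exactly $2n$ on $I$. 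Yet this framework turns rigidly about $p_1$ and is not pinned-isostatic. Tellingly, your argument never uses the odd/even alternation of the spokes, and it uses $n\ge 3$ only for simplicity. So the real issue behind your first listed obstacle is not whether the theorem applies to this model; it is that the counts you state are not the theorem's counts.

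The missing checks are short, and they are exactly where those hypotheses enter, as in the paper.
\begin{itemize}
\item For $S=I$ with no pins, the cycle contributes $n\le 2n-3$ edges, which requires $n\ge 3$.
\item For $S=I$ with one pin, there are $n$ cycle edges plus at most $\lceil n/2\rceil\le n-1$ spokes, because both pins are used. This gives at most $2n-1$.
\item For proper $S$, the induced cycle edges form a forest with at most $|S|-1$ edges. That gives at most $2|S|-1$ edges with one pin, and at most $|S|-1\le 2|S|-3$ with none when $|S|\ge 2$.
\end{itemize}
With these added, the rest of your $H_n$ argument coincides with the paper's proof. Proper subsets have fewer than $2|S|$ constraints even with both pins, so there is no proper pinned-rigid subassembly, and $\kappa(H_n)=n$, $p(H_n)=2$. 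Adding both pins to every bag of a width-$2$ decomposition of the cycle gives $\operatorname{tw}(H_n)\le 4$.
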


\begin{proof}
\textbf{Dyadic graphs containing large grids.}
Set $s=r+1$, take two distinct pins $a,b$, and introduce joints
$q_{ij}$ for $1\leq i,j\leq s$ in row-major order. Attach each new joint to
two previously available joints as follows:
\[
\begin{array}{c|c}
\text{new joint} & \text{attachment joints}\\ \hline
q_{11} & a,b\\
q_{1j},\ j>1 & q_{1,j-1},a\\
q_{i1},\ i>1 & q_{i-1,1},b\\
q_{ij},\ i,j>1 & q_{i-1,j},q_{i,j-1}.
\end{array}
\]
There are no other edges. The two attachment joints are always distinct,
and an edge is introduced only with its later endpoint, so the graph is
simple. Every addition is a dyad with one internal joint and two boundary
joints. At a generic placement, its two incident bar directions are
linearly independent. Ordering the constraint rows and internal coordinate
columns by insertion gives a block lower-triangular pinned rigidity matrix
with invertible $2\times2$ diagonal blocks. Thus the graph is generically
pinned-isostatic. Its successive dyads form an Assur decomposition, so
$\kappa(G_r)=1$ and $p(G_r)=2$; equivalently, the Assur decomposition theorem
identifies these singleton components
\citep{ServatiusShaiWhiteley2010,ShaiSljokaWhiteley2013}.

The graph induced by the $q_{ij}$ contains the $s\times s$ grid, including
every horizontal and vertical grid edge. We give a direct lower bound on
its treewidth. For each row $i$ and column $j$, let $C_{ij}$ be the union of
that row and that column. Every $C_{ij}$ is connected and any two such
crosses intersect. In any tree decomposition, the bags meeting a connected
vertex set form a subtree. The subtrees corresponding to the crosses
therefore intersect pairwise. The Helly property of subtrees of a tree
gives a bag meeting every cross. Any set of fewer than $s$ grid vertices
misses an entire row and an entire column, and hence misses their cross.
That bag must consequently contain at least $s$ grid vertices. Its width is
at least $s-1=r$. Restricting a tree decomposition to a subgraph cannot
increase its width, so $\operatorname{tw}(G_r)\geq r$.

\textbf{Large irreducible components of bounded treewidth.}
Take internal vertices $v_1,\ldots,v_n$ forming a simple cycle and two pins
$a,b$. Give each internal vertex exactly one additional edge, to $a$ when
its index is odd and to $b$ when its index is even. Both pins are used for
every $n\geq3$. There are $n$ cycle edges and $n$ spokes, hence
$|E(H_n)|=2n$.

We verify the pinned-Laman counts, which characterize generic pinned
isostaticity in the plane \citep{ServatiusShaiWhiteley2010}. For a nonempty
internal vertex set $S$ and a selected pin set $P'$, the relevant induced
edge bounds are $2|S|$ when $|P'|\geq2$, $2|S|-1$ when $|P'|=1$, and
$2|S|-3$ when $P'=\emptyset$ and $|S|\geq2$. A single unpinned vertex has no
edges and is treated separately.

If $\emptyset\neq S\subsetneq I$, the induced cycle edges form a forest
and number at most $|S|-1$. Including any selected pins adds at most
$|S|$ spokes. Thus a subset with either one or two pins has at most
$2|S|-1$ edges, satisfying the corresponding bound. With no pins and
$|S|\geq2$, there are at most $|S|-1\leq2|S|-3$ edges. If $S=I$, the
no-pin subgraph has $n\leq2n-3$ edges, including equality at $n=3$.
With exactly one pin it has at most $n+(n-1)=2n-1$ edges because both pins
are used. With both pins it has exactly $2n$ edges. Induced edge counts
also bound all non-induced subgraphs, completing the count verification.

It follows that $H_n$ is generically pinned-isostatic. Moreover, every
proper nonempty internal subset has strictly fewer than $2|S|$ constraints
even with both pins available. Such a subset cannot be pinned-rigid: its
pinned rigidity matrix has $2|S|$ columns and fewer rows. There is therefore
no proper pinned-isostatic subassembly. The whole graph is an Assur
component, so $\kappa(H_n)=n$ and $p(H_n)=2$.

Finally, the bags
\[
   B_j=\{a,b,v_1,v_j,v_{j+1}\},\qquad j=2,\ldots,n-1,
\]
ordered along a path form a tree decomposition. They cover the first and
last cycle edges through $v_1$, every other cycle edge through its
corresponding bag, and every spoke through its internal endpoint. Each
vertex occurs in a connected interval of bags. All bags have five vertices,
so the width is at most four. For $n=3$, the same construction consists of
a single five-vertex bag and remains valid.
\end{proof}

\begin{corollary}[Linear conditional evaluation despite unbounded treewidth]
\label{cor:dyadic-evaluation-work}
For the family $G_r$ in Theorem~\ref{thm:assur-treewidth-separation}, suppose
the pin positions, all bar lengths, and a feasible assembly branch are
given, and every dyad on that branch has two distinct attachment points and
transverse circle intersections. All $(r+1)^2$ internal positions can then
be evaluated in $O(r^2)$ scalar operations in a model assigning unit cost
to arithmetic operations and square roots.
\end{corollary}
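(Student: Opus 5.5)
The plan is to evaluate the joints in the row-major insertion order used to construct $G_r$ in the proof of Theorem~\ref{thm:assur-treewidth-separation}. Each internal joint $q_{ij}$ is attached by exactly two bars to two joints that precede it in this order: the pins $a,b$, or grid joints $q_{i-1,j}$, $q_{i,j-1}$, $q_{1,j-1}$, $q_{i-1,1}$. So when $q_{ij}$ is reached, both of its attachment points already have known positions. This is the dyadic solve order the Assur decomposition supplies, since every component is a singleton ($\kappa(G_r)=1$, $p(G_r)=2$). The argument is an induction along this order. The invariant is that after step $k$ the first $k$ internal positions equal those of the given feasible branch.

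The inductive step is a single circle--circle intersection, as in Eq.~\eqref{eq:module-dyad}. Given distinct attachment points $p_1,p_2$ and lengths $\ell_1,\ell_2$, compute the following, in the same form as the proof of Proposition~\ref{prop:port-information}:
\[
 D=\|p_2-p_1\|,\qquad
 t=\frac{\ell_1^2+D^2-\ell_2^2}{2D},\qquad
 h=\sqrt{\ell_1^2-t^2},\qquad
 q=p_1+t\,e+\beta h\,e^\perp,
\]
where $e=(p_2-p_1)/D$ and $\beta\in\{\pm1\}$ is the stored branch sign. The hypotheses justify each operation. Distinct attachment points give $D>0$, so the division is legal. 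A transverse intersection gives $h>0$, so the square root is real and the two candidate points are distinct. The given branch bit selects the candidate that lies on the feasible branch, so the invariant is preserved. Each such solve uses a constant number of additions, multiplications, divisions and square roots (two square roots: one for $D$, one for $h$). This constant is independent of $r$ and of $(i,j)$.

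Summing gives $(r+1)^2$ solves at $O(1)$ cost each, which is $O(r^2)$ operations. Looking up the two attachment indices and the stored lengths and branch bit for $q_{ij}$ is also constant work, because the attachment table in the theorem's proof is explicit. There is no real analytic obstacle. The one point to state carefully is that the per-dyad cost bound is uniform. The branch is supplied and the intersections are transverse, so no search over the $2^{(r+1)^2}$ assembly modes and no iterative solve is needed. The cost therefore cannot depend on $\operatorname{tw}(G_r)\ge r$, in contrast with elimination methods organised by an undirected tree decomposition.
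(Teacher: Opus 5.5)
Your proposal is correct and follows the paper's proof essentially step for step. Both evaluate the dyads in row-major insertion order, use one closed-form circle--circle intersection per joint with $d>0$ and $h>0$ guaranteed by the hypotheses and the branch fixed by the supplied sign, and sum a constant per-dyad cost over the $(r+1)^2$ dyads to get $O(r^2)$ operations.
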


\begin{proof}
Let a dyad's attachment positions be $u,v$, its bar lengths be
$\ell_u,\ell_v$, and its specified branch be $\sigma\in\{-1,1\}$.
Writing $d=\|v-u\|$, $e=(v-u)/d$, and $e^\perp$ for a fixed
quarter-turn of $e$, its internal position is
\[
 q=u+\alpha e+\sigma h e^\perp,\qquad
 \alpha=\frac{\ell_u^2-\ell_v^2+d^2}{2d},\qquad
 h=\sqrt{\ell_u^2-\alpha^2}.
\]
The hypotheses give $d>0$ and $h>0$, so this uses a constant number of
operations. Row-major traversal evaluates each of the $(r+1)^2$ dyads
once, with its attachment positions already available.
\end{proof}

The grid gives a concrete computational consequence: a known feasible
branch is evaluated by one circle intersection per joint, even as its
treewidth grows. The cycle shows the converse distinction: small interfaces
and small treewidth can coexist with a single large Assur component.
Thus undirected connectivity and conditional solve structure describe
different aspects of a mechanism's complexity. The corollary counts scalar
operations for evaluating a supplied feasible branch at one boundary
configuration; numerical bit complexity and finding that branch are separate
computational questions.

%% file: appendices/K_kb_census.tex
\section{Census of the Built Knowledge Base}
\label{app:kb-census}

Here we report a complete census of the Assur-KB build present in this checkout. The census describes the dataset. The topology name and the source directory are summarised at a top-$50$ cutoff with the tail collapsed into a single catch-all bucket.

Throughout, \emph{module} means a candidate Assur group and \emph{order} is the stored interface arity of \S\ref{sec:reconstructability}, the number of distinct boundary joints. Group counts labelled \emph{total} include the ground group of each mechanism; \emph{Assur groups} exclude it.

\subsection{Corpus totals}

\input{tables/corpora_comparison}

\begin{table}[h]
\caption{Build totals per source. Extraction time is the decomposition cost
recorded on each record, summed over the source.}
\label{tab:kb-census-totals}
\centering
\small
\begin{tabular}{lrrr}
\toprule
Quantity & LINKS & Purwar Lab & Total \\
\midrule
Shards & 101 & 304 & 405 \\
Records & 10,020,000 & 2,966,754 & 12,986,754 \\
Unparsable lines & 0 & 0 & 0 \\
JSONL size (GB) & 131.1 & 26.7 & 157.8 \\
Mean extraction time (ms/record) & 1.64 & 4.17 & --- \\
Total extraction time (CPU h) & 4.57 & 3.43 & 8.00 \\
\bottomrule
\end{tabular}
\end{table}

The LINKS side is the \href{https://huggingface.co/datasets/ahn1376/LInK-10M}{\texttt{LInK-10M}} release, split $10{,}000{,}000$ train and $20{,}000$ test, all revolute, all decomposed by the dyadic route. The Purwar Lab side is streamed from the upstream \href{https://www.kaggle.com/datasets/purwarlab/four-six-and-eight-bar-mechanisms-with-curves}{\texttt{purwarlab}} release and decomposed by the link-incidence route; its records divide into $602{,}079$ four-bar, $397{,}525$ six-bar, and $1{,}967{,}150$ eight-bar mechanisms. Extraction per record: eight CPU hours cover the whole build, and the tail is short (the slowest LINKS record takes $8.2$~s and the slowest Purwar Lab record $4.8$~s). 

\subsection{Mechanism-level structure}

\begin{table}[h]
\caption{Per-mechanism distributions. Groups include the ground group;
DAG depth is the longest path in the condensation DAG.}
\label{tab:kb-census-mechanism}
\centering
\small
\begin{tabular}{lrrrrrrrr}
\toprule
& \multicolumn{4}{c}{LINKS} & \multicolumn{4}{c}{Purwar Lab} \\
\cmidrule(lr){2-5}\cmidrule(lr){6-9}
Quantity & min & med & mean & max & min & med & mean & max \\
\midrule
Joints & 6 & 13 & 12.91 & 20 & 5 & 11 & 9.49 & 11 \\
Links & 7 & 19 & 19.47 & 35 & 2 & 8 & 6.82 & 8 \\
Ground joints & 2 & 3 & 2.68 & 7 & 2 & 3 & 2.79 & 4 \\
Groups (total) & 4 & 10 & 10.23 & 18 & 1 & 4 & 3.57 & 5 \\
Assur groups & 3 & 9 & 9.23 & 17 & 0 & 3 & 2.57 & 4 \\
Condensation edges & 4 & 17 & 16.79 & 33 & 0 & 4 & 3.70 & 7 \\
DAG depth & 2 & 6 & 6.08 & 14 & 0 & 2 & 2.16 & 4 \\
\bottomrule
\end{tabular}
\end{table}

The two sources cover complementary regimes. LINKS supplies the size range, spanning $6$ to $20$ joints roughly uniformly and reaching solve graphs of depth $14$, but every one of its mechanisms decomposes into dyads alone. Purwar Lab is small and shallow by comparison, capped at $11$ joints and depth $4$, and supplies the structural variety: higher interface arity, $n$-ary bodies, and prismatic pairs. The $153{,}124$ Purwar Lab records with joint code \texttt{PRPR} account for every zero in the Assur-group column: they carry a ground group and no module, and the build marks their topology as incomplete. They are the reason the Purwar Lab minimum for groups, edges, and depth is not the four-bar value.
\subsection{Modules}

\begin{table}[h]
\caption{Module inventory by source. Percentages are shares of that source's
Assur groups (top block) or of its mechanisms (bottom block).}
\label{tab:kb-census-modules}
\centering
\small
\begin{tabular}{lrrrr}
\toprule
& \multicolumn{2}{c}{LINKS} & \multicolumn{2}{c}{Purwar Lab} \\
\cmidrule(lr){2-3}\cmidrule(lr){4-5}
Quantity & count & \% & count & \% \\
\midrule
Groups (total) & 102,534,069 & --- & 10,599,777 & --- \\
Ground groups & 10,020,000 & --- & 2,966,754 & --- \\
Assur groups & 92,514,069 & 100.0 & 7,633,023 & 100.0 \\
\quad order 2 (dyads) & 92,514,069 & 100.0 & 5,761,578 & 75.5 \\
\quad order 3 & 0 & 0.0 & 1,703,554 & 22.3 \\
\quad order 4 & 0 & 0.0 & 167,891 & 2.2 \\
Distinct shape keys & 1 & --- & 33 & --- \\
\midrule
Mechanisms, dyads only & 10,020,000 & 100.0 & 1,081,258 & 36.4 \\
Mechanisms with order $>2$ & 0 & 0.0 & 1,732,372 & 58.4 \\
Mechanisms with no module & 0 & 0.0 & 153,124 & 5.2 \\
\bottomrule
\end{tabular}
\end{table}

\begin{table}[h]
\caption{Per-module distributions. Carried points are tracer points attached
to a module's links and are counted over all groups, including ground.}
\label{tab:kb-census-groups}
\centering
\small
\begin{tabular}{lrrrrrrrr}
\toprule
& \multicolumn{4}{c}{LINKS} & \multicolumn{4}{c}{Purwar Lab} \\
\cmidrule(lr){2-5}\cmidrule(lr){6-9}
Quantity & min & med & mean & max & min & med & mean & max \\
\midrule
Internal joints & 1 & 1 & 1.00 & 1 & 1 & 1 & 1.84 & 7 \\
Boundary joints & 2 & 2 & 2.00 & 2 & 2 & 2 & 2.27 & 4 \\
Bodies & 2 & 2 & 2.00 & 2 & 1 & 2 & 1.99 & 6 \\
Carried points & 0 & 0 & 0.00 & 0 & 0 & 0 & 0.28 & 2 \\
\bottomrule
\end{tabular}
\end{table}

The LINKS module population is a single point: $92{,}514{,}069$ groups, all with the shape key \texttt{c2/i1/e2/L2-2}, one internal joint, two boundary joints, and two binary bars. This is a property of the source, whose mechanisms are generated dyadically, and it is why the audited LINKS-derived records support no claim about higher-order modules. The Purwar Lab population is where the schema's generality is exercised: a quarter of its modules have interface arity above two, modules reach seven internal joints and six bodies, and $33$ distinct shape keys appear. A shape key records order, internal-joint count, boundary-joint count, and the sorted body arities, so \texttt{c2/i3/e2/L3-3} denotes an order-2 module with three internal joints closed by two ternary links.

\begin{table}[h]
\caption{The ten most frequent Purwar Lab shape keys, covering $90.1\%$ of its
Assur groups.}
\label{tab:kb-census-shapes}
\centering
\small
\begin{tabular}{lrr}
\toprule
Shape key & Groups & \% of Assur groups \\
\midrule
\texttt{c2/i1/e2/L2-3} & 1,975,188 & 25.9 \\
\texttt{c2/i2/e2/L2-3} & 1,577,230 & 20.7 \\
\texttt{c3/i1/e3/L3} & 969,112 & 12.7 \\
\texttt{c2/i1/e2/L2-2} & 787,543 & 10.3 \\
\texttt{c2/i3/e2/L3-3} & 538,901 & 7.1 \\
\texttt{c2/i3/e2/L2-4} & 338,516 & 4.4 \\
\texttt{c3/i2/e3/L4} & 231,528 & 3.0 \\
\texttt{c2/i4/e2/L3-4} & 170,360 & 2.2 \\
\texttt{c4/i1/e4/L3} & 146,165 & 1.9 \\
\texttt{c3/i3/e3/L2-2-3-3} & 146,078 & 1.9 \\
\bottomrule
\end{tabular}
\end{table}

The classical dyad, two binary bars on two boundary joints, is only the fourth most common Purwar Lab module at $10.3\%$. The head of the distribution is a binary-plus-ternary pair, and the third entry is a single ternary link on three boundary joints, the order-3 module that a binary joint graph cannot express.

\subsection{Bodies and joints}

\begin{table}[h]
\caption{Typed incidence in the build. Body and joint counts are instances
summed over mechanisms, not distinct types.}
\label{tab:kb-census-incidence}
\centering
\small
\begin{tabular}{lrr}
\toprule
Quantity & LINKS & Purwar Lab \\
\midrule
Bodies & 195,048,138 & 20,698,934 \\
\quad binary & 195,048,138 & 10,321,008 \\
\quad ternary & 0 & 8,938,010 \\
\quad quaternary & 0 & 1,439,916 \\
$n$-ary bodies ($n>2$) & 0 & 10,377,926 \\
Mechanisms with an $n$-ary link & 0 & 2,966,754 \\
Nested bodies & 0 & 614,818 \\
\midrule
Revolute joints & 129,374,453 & 27,423,968 \\
Prismatic joints & 0 & 733,017 \\
Mechanisms with a prismatic joint & 0 & 439,591 \\
\bottomrule
\end{tabular}
\end{table}

Half of all Purwar Lab bodies ($50.1\%$) are ternary or quaternary, and every Purwar Lab mechanism contains at least one such link, so the whole of that source depends on incidence being preserved: flattened to pairwise adjacency, each of these bodies would be indistinguishable from a ring of bars. Prismatic pairs are rarer, $2.6\%$ of joint instances, but they appear in $14.8\%$ of Purwar Lab mechanisms, which is what makes the three slider families ($\texttt{RRRP}$, $\texttt{RRPR}$, $\texttt{PRPR}$) reachable at all.

%% file: tables/corpora_comparison.tex
\begin{table}[t]
\caption{\textbf{The two source corpora store what a mechanism is, and neither stores how it
is built.}  Rows are LINKS~\citep{HeyraniNobari2022Links}, the Purwar Lab four-, six-, and eight-bar corpus~\citep{purwarlab2024dataset}, and the layer derived from them, with source counts as those publications report.  Appendix~\ref{app:kb-census} qualifies the two marked cells.}
\label{tab:corpora-comparison}
\centering
\small
\setlength{\tabcolsep}{5pt}
\begin{tabular}{lrcccccc}
\toprule
& & \multicolumn{4}{c}{Mechanism} & \multicolumn{2}{c}{Construction} \\
\cmidrule(lr){3-6}\cmidrule(lr){7-8}
Resource & Records & Typed & Pose & Coupler & Sliders, & Solve & Group \\
& & incidence & $x_0$ & curve & $n$-ary & order & status \\
\midrule
LINKS                & 10M   & \texttimes & \checkmark & \checkmark$^{\dagger}$  & \texttimes & \texttimes & \texttimes \\
Purwar Lab           & 3M     & \checkmark & \checkmark$^{\ddagger}$ & \checkmark & \checkmark & \texttimes & \texttimes \\
\midrule
Assur-KB (this work) & 12.99M & \checkmark & \checkmark & \checkmark & \checkmark & \checkmark & \checkmark \\
\bottomrule
\end{tabular}
\end{table}